\newtheorem{theorem}{Theorem}
\newtheorem{assumption}{Assumption}
\newcommand{\methodname}{DRAUN}
\title{\textit{\methodname}: An Algorithm-Agnostic Data Reconstruction Attack on Federated Unlearning Systems}
\author{Hithem Lamri,  Manaar Alam,  Haiyan Jiang,  and Michail Maniatakos \\
Center for Cyber Security, New York University Abu Dhabi, Abu Dhabi, United Arab Emirates \\
\texttt{\{hithem.lamri, alam.manaar, haiyan.jiang, michail.maniatakos\}@nyu.edu}
}
\begin{document}

\maketitle

\begin{abstract}
Federated Unlearning (FU) enables clients to remove the influence of specific data from a collaboratively trained shared global model, addressing regulatory requirements such as GDPR and CCPA. However, this unlearning process introduces a new privacy risk: A malicious server may exploit unlearning updates to reconstruct the data requested for removal, a form of Data Reconstruction Attack~(DRA). While DRAs for machine unlearning have been studied extensively in centralized Machine Learning-as-a-Service (MLaaS) settings, their applicability to FU remains unclear due to the decentralized, client-driven nature of FU. This work presents \textit{\methodname}, the first attack framework to reconstruct unlearned data in FU systems. \textit{\methodname} targets optimization-based unlearning methods, which are widely adopted for their efficiency. We theoretically demonstrate why existing DRAs targeting machine unlearning in MLaaS fail in FU and show how \textit{\methodname} overcomes these limitations. We validate our approach through extensive experiments on four datasets and four model architectures, evaluating its performance against five popular unlearning methods, effectively demonstrating that state-of-the-art FU methods remain vulnerable to DRAs. \vspace{-0.3cm}
\end{abstract}

\section{Introduction}\vspace{-0.2cm}
Federated Learning (FL) enables multiple clients to collaboratively train a shared global model without exposing their raw data~\cite{DBLP:conf/aistats/McMahanMRHA17}. Each client trains locally on its private dataset and sends local model updates to a central server. The server aggregates these updates over multiple communication rounds to iteratively improve the global model. Although FL is designed to preserve data privacy by keeping sensitive information on the client side, recent research shows that a malicious server can exploit the local model updates to reconstruct private client data through Data Reconstruction Attacks~(DRA)~\cite{DBLP:conf/nips/ZhuLH19,DBLP:conf/nips/GeipingBD020,DBLP:conf/cvpr/YinMVAKM21,DBLP:conf/iclr/ZhuB21,DBLP:conf/icml/WenGFGG22,DBLP:conf/sp/ZhaoSEEAB24}. These studies indicate that local model updates can be vectors for sensitive data leakage. Federated Unlearning (FU) extends the FL framework by allowing a client or a portion of its sensitive data to be fully removed from the global model even after several training rounds~\cite{elasticsga,manaar,abl,halimi,DBLP:journals/tkde/ShaikTLXCZL24,DBLP:conf/asiaccs/WangTZ0023}. To exercise the \textit{right to be forgotten}, a client submits unlearning updates to erase the influence of the targeted data from the global model. In such a setting, \textit{a malicious server can analyze these updates to launch DRAs targeting the samples requested for deletion.} This creates a new privacy threat inherent to the unlearning process, potentially violating data protection regulations like the GDPR~\cite{gdpr} and CCPA~\cite{ccpa}. Consequently, a detailed analysis of DRAs in FU is critical for understanding the scope of this risk and guiding the development of more robust FU methods.

Recent studies have investigated the feasibility of reconstructing unlearned data in traditional Machine Learning-as-a-Service (MLaaS) settings. \citet{DBLP:conf/sp/HuWDX24} introduced unlearning inversion attacks, demonstrating that even with black-box access, adversaries can recover labels or features of deleted data by comparing original and unlearned models. \citet{DBLP:conf/nips/BertranTKM0W24} further showed that even simple models like linear regression are vulnerable to exact reconstruction of deleted data after unlearning, highlighting that privacy leakage can occur without sophisticated architectures or attack vectors. Both studies emphasize a key idea: \textit{Naive unlearning methods can leave behind patterns in the model that can be used to recover unlearned samples}. However, extending these attacks directly to FU scenarios is much more challenging. In FU, unlearning is performed locally by the client, and the server sees only the resulting weight vector. It therefore lacks critical information such as which unlearning algorithm or hyper‑parameters were applied. Unlike MLaaS, where model training and unlearning are centrally orchestrated and more easily observed, FU introduces several challenges: \textbf{(1)}~client participation is random and changes over time, \textbf{(2)}~the global model is overwritten after every aggregation step, so the signal from a single unlearning update is rapidly mixed with fresh training updates, and \textbf{(3)}~the server lacks access to the specifics of local updates or unlearning operations. These factors make it difficult for an adversary to link specific changes in the model to particular unlearning requests. Moreover, FU typically requires multiple rounds to take effect, and any signal from deleted data becomes diluted over time. These challenges make it significantly harder to adapt existing DRA strategies from MLaaS to the FU setting. A brief background on FU and DRAs is provided in Appendix~\ref{appendix:background}. In this work, we investigate the following research question: \textit{To what extent are federated unlearning algorithms susceptible to data reconstruction attacks by a malicious server?}

We present \textit{\methodname}, a novel DRA targeting FU systems. To our knowledge, this is the first work to investigate such attacks in the context of FU. \textit{\methodname}~focuses on \textit{optimization-based unlearning algorithms}, including both first-order~\cite{elasticsga,manaar,abl,halimi} and second-order~\cite{2ndord_fu} methods, which are known for their high efficiency and are widely adopted in FU~\cite{fu_survey}. A key strength of \textit{\methodname}~is its ability to reconstruct unlearned data without any knowledge of the client's unlearning algorithm, which we consider an \textbf{algorithm-agnostic} attack. 
\begin{figure}[!t]
  \centering
  \includegraphics[width=0.75\linewidth]{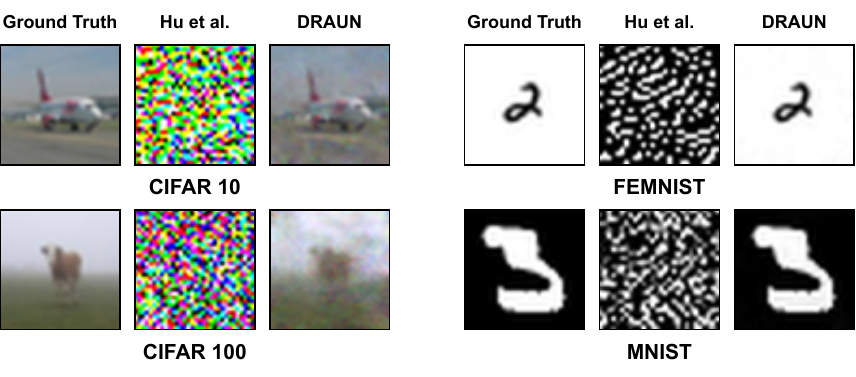}\vspace{-0.3cm}
  \caption{Reconstruction of deleted samples after unlearning using ABL~\cite{abl}. \textbf{Column~1}: Ground truth images (one randomly selected sample each from CIFAR10, CIFAR100, FEMNIST, and MNIST). \textbf{Column~2}: Reconstructions using a state-of-the-art DRA on MLaaS unlearning~(\citet{DBLP:conf/sp/HuWDX24}) directly extended to FU, which fails to recover any meaningful images. \textbf{Column~3}:~Reconstructions from \textit{\methodname} that closely resemble the ground truth.}\vspace{-0.4cm}
  \label{fig:reconstruction_comparison}
\end{figure}
To validate the effectiveness and generality of \textit{\methodname}, we conduct comprehensive experiments across \textit{four datasets} and \textit{four model architectures}. The qualitative results shown in Figure~\ref{fig:reconstruction_comparison} illustrate the reconstruction performance of different methods across representative unlearned samples. We also assess a range of potential defense strategies, providing insights to guide the development of more resilient FU techniques.

Our \textbf{\underline{contributions}} are as follows: \textbf{(1)} We introduce \textit{\methodname}, the first attack framework capable of reconstructing unlearned client data specifically in FU settings, whereas prior efforts have focused exclusively on centralized MLaaS environments. \textbf{(2)} We provide a formal theoretical analysis showing why conventional DRAs from MLaaS can be ineffective in optimization-based FU. \textbf{(3)} \textit{\methodname} operates in an algorithm-agnostic manner, requiring no knowledge of the client's unlearning method. \textbf{(4)} We evaluate \textit{\methodname} across four datasets and four model architectures, and demonstrate its effectiveness against five widely used unlearning algorithms, four first-order and one second-order. \textbf{(5)} We will open-source the implementation of \textit{\methodname}.\vspace{-0.2cm}

\section{\textit{\methodname} Methodology}\vspace{-0.2cm}
\label{section:methodology}
\subsection{Threat Model}\vspace{-0.2cm}
We consider a FL setting where a central server coordinates training across multiple clients. The system supports optimization-based FU, allowing clients to remove specific subsets of their local data from the global model. Following the standard approach used in the literature~\cite{fu_survey}, we consider the unlearning process as follows: \textbf{(1)} a client submits an unlearning request, \textbf{(2)} the server approves the request and ensures the client stays active in upcoming rounds, \textbf{(3)} in each round, the client performs one local unlearning epoch on the data to be removed and sends the updated model to the server, and \textbf{(4)} once the unlearning is done, the client notifies the server. As unlearning requires access to the data being erased, all computations occur on the client side.

\textbf{Client Capabilities:} Clients initiate unlearning and perform the necessary local computations. Each client selects its own unlearning algorithm. They also share the expected metadata with the server, including the total size of their local dataset and the size of the subset being deleted~\cite{fedavg_leak}.

\textbf{Server Capabilities:} We assume an honest-but-curious server that follows the protocol but may analyze client updates~\cite{DBLP:conf/nips/ZhuLH19,DBLP:conf/nips/GeipingBD020,DBLP:conf/cvpr/YinMVAKM21,DBLP:conf/iclr/ZhuB21,DBLP:conf/icml/WenGFGG22,DBLP:conf/sp/ZhaoSEEAB24,fedavg_leak}. The server has white-box access to the global model and all received updates. It manages unlearning requests, tracks client participation, and performs model aggregation. Importantly, the server does not know which unlearning algorithm each client employs. Following standard practice in the DRA literature, we assume that the server knows the labels associated with the client's data. This is a common assumption in Gradient Inversion Attack (GIA) literature~\cite{DBLP:conf/icml/WenGFGG22,DBLP:conf/nips/GeipingBD020}, and is justified by prior work showing that even if the labels are not explicitly available, they can often be accurately inferred or reconstructed from model gradients~\cite{fedavg_leak,seethrgrd}.\vspace{-0.175cm}

\subsection{Problem Landscape and Motivation}\label{sec:motivation}\vspace{-0.2cm}
We consider a server aiming to train a neural network $F(\theta_s, \cdot)$ with parameters $\theta_s$ for image classification, using a set of clients $\mathcal{C}$. Each client $c$ performs local training using Stochastic Gradient Descent (SGD) on its dataset $\mathcal{D}_c = \{(x_i, y_i)\}_{i=1}^n$, where $x_i$ is an input image and $y_i$ is its label. Upon receiving an unlearning request, the server sends the current global model $\theta_s$ to the requesting client, which aims to remove a subset $\mathcal{D}_u \subseteq \mathcal{D}_c$. The retained data is denoted $\mathcal{D}_r = \mathcal{D}_c \setminus \mathcal{D}_u$. Let $(x_u, y_u) \in \mathcal{D}_u$ represent the samples the client wishes to forget, and $(x_r, y_r) \in \mathcal{D}_r$ those it intends to retain. The client then applies a local unlearning algorithm $\mathcal{A}_c$, which takes $\mathcal{D}_u$, $\mathcal{D}_r$, and the current model $\theta_s$ as input, and returns an updated local model $\theta_c := \mathcal{A}_c(\mathcal{D}_u, \mathcal{D}_r, \theta_s)$. This algorithm typically involves optimization over a custom unlearning loss $\mathcal{L}_u$, controlled by hyperparameters such as the unlearning rate $\eta$, batch size $m$, and number of local epochs $\mathcal{E}$.

GIA is a widely studied data reconstruction attack that estimates the private training data by optimizing dummy inputs such that the resulting gradients match those observed for the target client~\cite{DBLP:conf/nips/ZhuLH19,DBLP:conf/nips/GeipingBD020,DBLP:conf/cvpr/YinMVAKM21}. This is typically achieved by minimizing a similarity loss $\mathcal{L}_{\text{sim}}$ between the true and dummy gradients, using metrics such as the $\ell_2$-norm~\cite{DBLP:conf/nips/ZhuLH19} or cosine similarity~\cite{DBLP:conf/nips/GeipingBD020}. A core assumption in GIA is that the attacker uses the same loss function as the target client during optimization (see Appendix~\ref{appendix:gia}). This ensures that the gradients used in the inversion process are meaningful approximations of the client's true gradients. However, this assumption no longer holds in the FU setting: The loss function varies across different unlearning algorithms, depending on how they handle $\mathcal{D}_u$ and $\mathcal{D}_r$. Some algorithms apply gradient ascent on $\mathcal{D}_u$~\cite{elasticsga,halimi}, while others aim to minimize the difference between gradients on $\mathcal{D}_u$ and $\mathcal{D}_r$~\cite{abl,manaar}. Table~\ref{tab:unlearning_algorithms} presents the loss functions and model updates for various optimization-based FU algorithms. As a result, the gradient landscape diverges from that of standard training, making GIAs substantially less effective and often unstable in FU scenarios.\vspace{-0.175cm}
\begin{table}[!t]
\centering
{\small
\caption{Unlearning losses and local update rules across FU methods, where $\alpha$, $\beta$, $\gamma$, $\delta$, and $\eta$ are algorithm-specific hyperparameters.}
\label{tab:unlearning_algorithms}
\begin{tabular}{lc}
\toprule
\textbf{Algorithm} & \textbf{Local Update Rules} \\
\midrule
\citet{elasticsga}  & $\theta_u \gets \theta_s +  \eta\nabla_{\theta_s} \mathcal{L}(x_u, y_u)$ \\
\citet{halimi} & $\theta_u \gets \theta_s + \eta \nabla_{\theta_s} \mathcal{L}(x_u, y_u) + \delta \|\theta_s - \theta_u\|$ \\
\citet{abl}     & $\theta_u \gets \theta_s - \eta \big( \nabla_{\theta_s} \mathcal{L}(x_r, y_r) - \nabla_{\theta_s} \mathcal{L}(x_u, y_u) \big)$ \\
\citet{manaar}   & $\theta_u \gets \theta_s - \eta \big( \alpha \nabla_{\theta_s} \mathcal{L}(x_r, y_r) - \beta \nabla_{\theta_s} \mathcal{L}(x_u, y_u) + \gamma \big\| \frac{\theta_u}{\theta_s} \big\| \big)$ \\  
\bottomrule
\end{tabular}}\vspace{-0.5cm}
\end{table}

\subsection{\textit{\methodname} Overview}\vspace{-0.2cm}
\textit{\methodname}~builds upon the standard structure of a GIA, where the attacker reconstructs private training data by minimizing the difference between true and simulated (dummy) gradients. Specifically, we adopt the similarity-based loss introduced by~\citet{DBLP:conf/nips/GeipingBD020}, formulated in Equation~\eqref{eq:invg}. The loss minimizes the cosine distance between the client's true gradient $\nabla_c$ and a dummy gradient $\tilde{\nabla}_c$, with an additional Total Variation (TV) penalty~\cite{RUDIN1992259}, controlled by hyperparameter $\lambda_{\text{TV}}$, that acts as a regularizer for image smoothness:

\begin{equation}
\label{eq:invg}
x = \arg\min\limits_{\tilde{x} \in \mathcal{X}} \mathcal{L}_{sim}(\nabla_c, \tilde{\nabla}_c) = \arg\min\limits_{\tilde{x} \in \mathcal{X}} \left(1 - \frac{\langle \nabla_c, \tilde{\nabla}_c \rangle}{|\nabla_c| \cdot |\tilde{\nabla}_c|} + \lambda_{TV} \cdot TV(\tilde{x})\right)
\end{equation}

As discussed in Section~\ref{sec:motivation}, this standard GIA objective is effective only under specific conditions, when the client's local loss $\mathcal{L}_u$ depends solely on the unlearn dataset $\mathcal{D}_u$. However, many FU algorithms use more general update rules that depend on both $\mathcal{D}_u$ and a retain dataset $\mathcal{D}_r$ (see Table~\ref{tab:unlearning_algorithms}). To extend GIAs to these more general settings, \textit{\methodname}~simulates the client's local update using a surrogate optimization procedure.

The server first estimates the client's average gradient using the received model $\theta_c$ and the global model $\theta_s$ as $\nabla \bar{\theta}_c = \frac{1}{U_c} (\theta_s - \theta_c)$ (Algorithm~\ref{alg:attack_overview}, line 3), where $U_c$ is the number of client steps.
\begin{algorithm}[!t]
    {\small
   \caption{Overview of \textit{\methodname}}
   \label{alg:attack_overview}
\begin{algorithmic}[1]
   \State \textbf{Input:} $\theta_s$, $\theta_c$, $|\mathcal{D}_u|$, $m$ $\mathcal{E}$, $\lambda_{TV}$, $\beta$, $\eta_{\text{unl}}$, $\eta_{\text{rec}}$, $y_r$, $y_u$
   \State $U_c \gets \frac{|\mathcal{D}_u| \cdot \mathcal{E}}{m}$ \Comment{Number of local steps}
   \State $\nabla \bar{\theta}_c \gets \frac{1}{U_c} (\theta_s - \theta_c)$
   \State $\tilde{x}_r, \tilde{x}_u \gets \mathcal{I}(|\mathcal{D}_u|)$
   \For{$t = 1$ to $T$}
       \State $\tilde{\nabla}^{(1)}, \tilde{\nabla}^{(0)} \gets \mathcal{A}_{\text{approx}}(\theta_s, \tilde{x}_u, y_u,\tilde{x}_r, y_r,\mathcal{E},\eta_{\text{unl}} )$
       \State $\ell_1 \gets \mathcal{L}_{\text{sim}}(\nabla \bar{\theta}_c, \tilde{\nabla}^{(1)}) + \lambda_{\text{TV}} \cdot \left( \beta \cdot TV(\tilde{x}_u) + (1 - \beta) \cdot TV(\tilde{x}_r) \right)$ \Comment{$\beta \approx 1$}
       \State $\ell_0 \gets \mathcal{L}_{\text{sim}}(\nabla \bar{\theta}_c, \tilde{\nabla}^{(0)}) + \lambda_{\text{TV}} \cdot \left( \beta \cdot TV(\tilde{x}_u) + (1 - \beta) \cdot TV(\tilde{x}_r) \right)$ \Comment{$\beta \approx 1$}
       \State $\ell \gets \min(\ell_0, \ell_1)$
       \State $\tilde{x}_u \gets \tilde{x}_u - \eta_{\text{rec}} \cdot \frac{\partial \ell}{\partial \tilde{x}_u}$
       \State $\tilde{x}_r \gets \tilde{x}_r - \eta_{\text{rec}} \cdot \frac{\partial \ell}{\partial \tilde{x}_r}$
   \EndFor
   \State \textbf{return} $\tilde{x}_u$
\end{algorithmic}}
\end{algorithm}
The server then attempts to reconstruct the unlearned input $x_u$ by optimizing dummy inputs $\tilde{x}_u$ and $\tilde{x}_r$, which represent proxy samples for the unlearn and retain subsets respectively. Since the server does not know the client's actual unlearning algorithm $\mathcal{A}_c$, it uses a surrogate procedure $\mathcal{A}_{approx}$ (Algorithm~\ref{alg:approx}) for its approximation.
\begin{algorithm}[!t]
    {\small
   \caption{\textit{\methodname}'s Algorithm Approximation with $\mathcal{A}_{\text{approx}}$ (Algorithm~\ref{alg:attack_overview}, line 6)}
   \label{alg:approx}
\begin{algorithmic}[1]
   \State \textbf{Input:} $\theta_s$, $\tilde{x}_u$, $y_u$, $\tilde{x}_r$, $y_r$, $\mathcal{E},\eta_{\text{unl}}$
   \State Initialize two dummy models: 
   $\tilde{\theta}_c^{(0)} \gets \theta_s$, $\tilde{\theta}_c^{(1)} \gets \theta_s$
   \For{$t = 1$ to $\mathcal{E}$}
       \State \textbf{Update $\tilde{\theta}_c^{(1)}$ with $\alpha = 1$:} \Comment{gradient difference}
       \Statex \hspace{1em}
       $\tilde{\theta}_c^{(1)} \gets \tilde{\theta}_c^{(1)} - \eta_{\text{unl}} \cdot \left[  \left( \nabla_{\theta} \mathcal{L}(\tilde{\theta}_c^{(1)}, \tilde{x}_r, \tilde{y}_r) - \nabla_{\theta} \mathcal{L}(\tilde{\theta}_c^{(1)}, \tilde{x}_u, \tilde{y}_u) \right) + \delta \cdot \nabla_{\theta} \|\tilde{\theta}_c^{(1)} - \theta_s\|_2 \right]$
       \State \textbf{Update $\tilde{\theta}_c^{(0)}$ with $\alpha = 0$:} \Comment{gradient ascent}
       \Statex \hspace{1em}
       $\tilde{\theta}_c^{(0)} \gets \tilde{\theta}_c^{(0)} + \eta_{\text{unl}} \cdot  \nabla_{\theta} \mathcal{L}(\tilde{\theta}_c^{(0)}, \tilde{x}_u, \tilde{y}_u) + \delta \cdot \nabla_{\theta} \|\tilde{\theta}_c^{(0)} - \theta_s\|_2 $
   \EndFor 
   \State \textbf{Return:}
   \Statex \hspace{1em}
   $\tilde{\nabla}^{(1)} \gets \tilde{\theta}_c^{(1)} - \theta_s$ \Comment{Averaged gradient for $\alpha = 1$}
   \Statex \hspace{1em}
   $\tilde{\nabla}^{(0)} \gets \tilde{\theta}_c^{(0)} - \theta_s$ \Comment{Averaged gradient for $\alpha = 0$}
\end{algorithmic}}
\end{algorithm}
This routine simulates the client update by minimizing a surrogate unlearning loss ($\tilde{\mathcal{L}}_u$), defined by the following constrained objective (Equation~\eqref{eq:minmax-unlearning}):
\begin{equation}
\label{eq:minmax-unlearning}
\min\limits_{\theta_c \in \Theta} \tilde{\mathcal{L}}_u = \alpha \cdot \mathcal{L}(\theta_s, x_r, y_r) - \mathcal{L}(\theta_s, x_u, y_u)
\quad \text{s.t.} \quad \|\theta_c - \theta_s\|_2 \leq \delta
\end{equation}
Here, $\alpha \in [0,1]$ determines the contribution of the retain dataset, $\delta$ constrains the model divergence from the global model, and $\mathcal{L}$ is the cross-entropy loss. The constraint is relaxed into an unconstrained objective using a Lagrange multiplier, resulting in Equation~\eqref{eq:simple-obj}:
\begin{equation}
\label{eq:simple-obj}
\min\limits_{\theta_c \in \Theta} \alpha \cdot \mathcal{L}(\theta_s, x_r, y_r) - \mathcal{L}(\theta_s, x_u, y_u) + \delta \cdot \|\theta_c - \theta_s\|_2
\end{equation}
Algorithm~\ref{alg:approx} optimizes two dummy models using an unlearning rate $\eta_{\text{unl}}$ under this objective: One for $\alpha = 1$ (gradient difference-based updates), and another for $\alpha = 0$ (gradient ascent-based updates). After $\mathcal{E}$ optimization steps,
it returns two surrogate gradients: $\tilde{\nabla}^{(1)}$ and $\tilde{\nabla}^{(0)}$. These are intended to approximate the gradient update a client would have produced under two extremes of unlearning behavior. Since the server has no knowledge of the client's actual unlearning algorithm, it uses both of these as candidates for matching the observed averaged gradient $\nabla \bar{\theta}_c$.

In each iteration of Algorithm~\ref{alg:attack_overview}, the server computes two loss values: $\ell_1$ and $\ell_0$, corresponding to the cosine similarity between $\nabla \bar{\theta}_c$ and the surrogate gradients $\tilde{\nabla}^{(1)}$ and $\tilde{\nabla}^{(0)}$, respectively (lines~7-8). These losses use the formulation in Equation~\eqref{eq:invg}, and each includes a TV regularization term applied separately to $\tilde{x}_u$ and $\tilde{x}_r$. A mixing coefficient $\beta \approx 1$ balances the contribution of the TV terms, prioritizing smoothness in $\tilde{x}_u$. The final loss $\ell$ is selected as the minimum of $\ell_0$ and $\ell_1$ (line 9). The dummy inputs are then updated via gradient descent with respect to this loss (lines 10-11),  using a reconstruction step size $\eta_{\text{rec}}$, and continuing for $T$ iterations.

A critical design choice in \textit{\methodname} is the initialization of dummy inputs. Since the server lacks access to the true structure of the client's data, the dummy inputs for both $\mathcal{D}_u$ (unlearn) and $\mathcal{D}_r$ (retain) are initialized as random tensors. Specifically, we sample two sets of size $|\mathcal{D}_u|$ each from a uniform distribution over $[0,1]$, denoted as $\mathcal{U}([0,1], |\mathcal{D}_u|)$ in Algorithm~\ref{alg:initialization}. Our goal is not to reconstruct the retain set, but to use it to help denoise the gradients from the unlearn set and thus facilitate accurate reconstruction. To improve optimization and avoid convergence to poor local minima, a known issue in some unlearning algorithms, we ensure that the dummy inputs for $\mathcal{D}_u$ and $\mathcal{D}_r$ are well separated at the start of reconstruction. This is achieved by iteratively adding multivariate Gaussian noise (with standard deviation $\sigma$ and dimensionality $d_x$, matching that of a single input sample) to each image in $\mathcal{D}_r$, until the minimum Frobenius distance between corresponding image pairs in $\mathcal{D}_u$ and $\mathcal{D}_r$ exceeds a predefined threshold $\Delta$. This guarantees that the convex hulls of the unlearn and retain dummy inputs are disjoint: $\text{Conv}\{\tilde{\mathcal{D}}_u\} \cap \text{Conv}\{\tilde{\mathcal{D}}_r\} = \varnothing$.
This initialization strategy, detailed in Algorithm~\ref{alg:initialization}, is particularly important for unlearning methods that rely on coupled update dynamics. A theoretical justification is provided in Appendix~\ref{appendix:theory}.\vspace{-0.2cm}

\begin{algorithm}[!t]
    {\small
   \caption{\textit{\methodname}'s Input Initialization with $\mathcal{I}$ (Algorithm~\ref{alg:attack_overview}, line 4)}
   \label{alg:initialization}
\begin{algorithmic}[1]
   \State \textbf{Input:} $|\mathcal{D}_u|$
   \State $\tilde{x_u} \gets \mathcal{U}([0,1],  |\mathcal{D}_u|)$ \Comment{Initialize dummy unlearn input}
   \State $\tilde{x_r} \gets \mathcal{U}([0,1],  |\mathcal{D}_u|)$ \Comment{Initialize dummy retain input }
   \For{$i = 1$ to $|\mathcal{D}_u|$} \Comment{The i-th samples in $\tilde{x_u}, \tilde{x_r}$ }
       \While{$\| \tilde{x_u}^{(i)} - \tilde{x_r}^{(i)} \|_F \leq \Delta$} \Comment{$\|.\|_F$ Frobenius norm}
           \State $\epsilon \sim \mathcal{N}(0, \sigma^2 \cdot I_{d_x})$
           \State $\tilde{x_r}^{(i)} \gets \tilde{x_r}^{(i)} + \epsilon$
       \EndWhile
   \EndFor
   \State \textbf{Return:} $\tilde{x_u}, \tilde{x_r}$
\end{algorithmic}}
\end{algorithm}

\section{Theoretical Analysis} \label{sec:theory_ana}
\vspace{-0.2cm}
In this section, we demonstrate why classical GIAs fail to reconstruct unlearned inputs from unlearning updates involving two datasets, such as gradient difference methods~\cite{abl, manaar}. Recall that $\mathcal{L}_u$ and $\tilde{\mathcal{L}}_u$ are client's local unlearning loss and the server's surrogate unlearning loss (used to generate dummy gradients), respectively. $\tilde{\mathcal{L}}_u$ depends on the dummy unlearn input $\tilde{x}_u$; hence, $\mathcal{L}_{\text{sim}}$ implicitly depends on $\tilde{x}_u$, not just model parameters $\theta$. For clarity, we denote $\tilde{x}_u$ by $x$. To avoid high-order tensor calculations, we treat inputs as flattened vectors of dimension $d_x$. While we assume single unlearn/retain inputs for simplicity, this analysis extends to multiple inputs via averaging:
\[
\mathcal{L}(\theta, x_r, y_r) = \frac{1}{|\mathcal{D}_r|} \sum_{x_i \in \mathcal{D}_r} \mathcal{L}(\theta, x_i, y_i), \quad  
\mathcal{L}(\theta, x_u, y_u) = \frac{1}{|\mathcal{D}_u|} \sum_{x_j \in \mathcal{D}_u} \mathcal{L}(\theta, x_j, y_j).
\]
In order to prove that classical GIA fails, we adopt the following assumptions:
\begin{assumption}
\label{assumption:1}
Without loss of generality, we assume the similarity loss $\mathcal{L}_{\text{sim}}$ from Equation~(\ref{eq:invg}) is the squared \( \ell_2 \)-norm:
\[
\mathcal{L}_{\text{sim}}(\nabla_{\theta} \mathcal{L}_u, \nabla_{\theta} \tilde{\mathcal{L}}_u) = \left\| \nabla_{\theta} \mathcal{L}_u - \nabla_{\theta} \tilde{\mathcal{L}}_u \right\|_2^2.
\]
Thus, the classical GIA objective becomes:
\[
x^{*} = \arg\min_{x \in \mathcal{X}} \left\| \nabla_{\theta} \mathcal{L}_u - \nabla_{\theta} \tilde{\mathcal{L}}_u \right\|_2^2.
\]
\end{assumption}

\begin{assumption} \label{assumption:2}
The surrogate loss $\tilde{\mathcal{L}}_u$ is twice differentiable in $x$ and $\theta$. The Jacobian
\[
J(x) = \frac{\partial \nabla_{\theta} \tilde{\mathcal{L}}_u}{\partial x} \in \mathbb{R}^{d_\theta \times d_x},
\]
where $d_x$ and $d_\theta$ are the dimensions of $x$ and $\theta$, exists everywhere.
\end{assumption}

\begin{assumption} \label{assumption:3}
We assume $d_\theta \geq d_x$ and that $J(x)$ has full column rank (i.e., $\text{rank}(J(x)) = d_x$) $\forall x$.
\end{assumption}

\begin{theorem} \label{theorem:1}
Suppose the client's unlearning loss $\mathcal{L}_u$ implicitly depends on both $x_u$ and $x_r$ through $\mathcal{L}_u = \mathcal{L}(\theta, x_r, y_r) - \mathcal{L}(\theta, x_u, y_u)$. Under Assumptions~\ref{assumption:1}-\ref{assumption:3}, the ground truth unlearn input $x_u$ is not a local minimizer of $\mathcal{L}_{\text{sim}}$.
\end{theorem}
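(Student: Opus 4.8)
The plan is to prove the contrapositive of local optimality without analyzing second-order behavior: I would show that the first-order stationarity condition already fails at $x = x_u$, i.e. that $\nabla_x \mathcal{L}_{\text{sim}}(x_u) \neq 0$. Since any local minimizer of a differentiable objective must be stationary, exhibiting a nonzero gradient (equivalently, a first-order descent direction) at $x_u$ immediately settles the claim. Throughout I treat the true client gradient $g := \nabla_\theta \mathcal{L}_u = \nabla_\theta \mathcal{L}(\theta, x_r, y_r) - \nabla_\theta \mathcal{L}(\theta, x_u, y_u)$ as a fixed vector (it is the observed signal and does not depend on the dummy $x$), whereas the surrogate gradient $\tilde g(x) := \nabla_\theta \tilde{\mathcal{L}}_u$ depends only on $x = \tilde x_u$, since the classical GIA carries no retain proxy.

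First I would differentiate the objective of Assumption~\ref{assumption:1}. Writing $\mathcal{L}_{\text{sim}}(x) = \| g - \tilde g(x) \|_2^2$ and applying the chain rule with the Jacobian $J(x) = \partial \tilde g / \partial x$ from Assumption~\ref{assumption:2}, I obtain
\[
\nabla_x \mathcal{L}_{\text{sim}}(x) = -2\, J(x)^{\top}\bigl(g - \tilde g(x)\bigr).
\]
The next step is to evaluate the residual $r := g - \tilde g(x_u)$ at the ground-truth input. The key structural observation is that, at best, the surrogate reproduces only the unlearn component $-\nabla_\theta \mathcal{L}(\theta, x_u, y_u)$ of $g$ when $x = x_u$; the retain component has no counterpart in $\tilde g$, so the residual collapses to $r = \nabla_\theta \mathcal{L}(\theta, x_r, y_r)$, the gradient on the retained data. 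Because the shared model is generally not a critical point for $\mathcal{D}_r$, this retain gradient is nonzero, so already $\mathcal{L}_{\text{sim}}(x_u) = \|r\|_2^2 > 0$ and $x_u$ is not even a global minimizer.

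To upgrade this to \emph{not a local minimizer}, I would show $\nabla_x \mathcal{L}_{\text{sim}}(x_u) = -2\, J(x_u)^{\top} r \neq 0$. Using Assumption~\ref{assumption:3}, full column rank makes $J(x_u)^{\top} J(x_u)$ invertible, hence $J(x_u)^{\top}$ injective on $\operatorname{col}(J(x_u))$; decomposing $r = r_{\parallel} + r_{\perp}$ along $\operatorname{col}(J(x_u))$ and its orthogonal complement gives $J(x_u)^{\top} r = J(x_u)^{\top} r_{\parallel}$, which vanishes if and only if $r_{\parallel} = 0$, i.e. $r \in \operatorname{col}(J(x_u))^{\perp} = \ker J(x_u)^{\top}$. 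Equivalently, the direction $\delta x \propto J(x_u)^{\top} r$ strictly lowers $\|r - J(x_u)\,\delta x\|_2^2$ to first order whenever $J(x_u)^{\top} r \neq 0$, directly witnessing non-optimality.

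The main obstacle is establishing this strict inequality $J(x_u)^{\top} r \neq 0$. Full column rank reduces it to showing that the retain gradient $r = \nabla_\theta \mathcal{L}(\theta, x_r, y_r)$ is not orthogonal to the entire column space of the unlearn cross-Hessian $J(x_u) = -\,\partial_x \nabla_\theta \mathcal{L}(\theta, x_u, y_u)$, whose columns encode how the \emph{unlearn} gradient reacts to input perturbations. Since $r$ is fixed entirely by the \emph{retain} data and is structurally decoupled from these unlearn input-sensitivities, the coincidence $r \perp \operatorname{col}(J(x_u))$ is non-generic, and the role of Assumption~\ref{assumption:3} is to force the column space to attain its maximal dimension $d_x$ so that this orthogonality is the sole obstruction to non-stationarity. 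Pinning down this non-degeneracy rigorously, rather than merely generically, is the delicate point; once $J(x_u)^{\top} r \neq 0$ is secured, $x_u$ fails the first-order condition and therefore cannot be a local minimizer of $\mathcal{L}_{\text{sim}}$, which completes the argument.
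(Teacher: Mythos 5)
Your proposal follows essentially the same route as the paper's proof: differentiate $\mathcal{L}_{\text{sim}}$ via the chain rule to obtain $-2J(x)^{\top}\bigl(g-\tilde g(x)\bigr)$, evaluate the residual at $x=x_u$, and invoke Fermat's stationarity condition. The only cosmetic difference is the sign convention on the surrogate: you take $\tilde g(x_u)=-\nabla_{\theta}\mathcal{L}(\theta,x_u,y_u)$, so the residual collapses to $r=\nabla_{\theta}\mathcal{L}(\theta,x_r,y_r)$, whereas the paper's main-text proof takes $\tilde{\mathcal{L}}_u=\mathcal{L}(\theta,x,y_u)$ and obtains $r=\nabla_{\theta}\mathcal{L}(\theta,x_r,y_r)-2\nabla_{\theta}\mathcal{L}(\theta,x_u,y_u)$; the structure of the argument is identical either way (and the appendix proof of the error-bound theorem silently switches to your convention).

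The ``delicate point'' you flag at the end is a genuine issue, and you are right to be uneasy about it: full column rank of $J(x_u)\in\mathbb{R}^{d_\theta\times d_x}$ with $d_\theta\ge d_x$ makes $J(x_u)$ injective, not $J(x_u)^{\top}$; the kernel of $J(x_u)^{\top}$ has dimension $d_\theta-d_x$, which is typically very large, so $r\neq 0$ does not by itself rule out $J(x_u)^{\top}r=0$. One additionally needs $r\notin\operatorname{col}(J(x_u))^{\perp}$. The paper's proof does not close this step either --- it asserts that full column rank together with $r\neq 0$ yields a nonzero gradient, which is precisely the inference you correctly identify as unproven. Your write-up is therefore, if anything, more candid than the published proof about where the real content lies; to make either argument airtight one would have to add an explicit non-orthogonality assumption between $r$ and $\operatorname{col}(J(x_u))$, or argue that orthogonality fails generically.
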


\begin{proof}
We show that Fermat's stationary condition does not hold at $x = x_u$.

First, we compute the gradient of $\mathcal{L}_{\text{sim}}$ with respect to $x$:
\[
\frac{\partial \mathcal{L}_{\text{sim}}}{\partial x} = \frac{\partial}{\partial x} \left\| \nabla_{\theta} \mathcal{L}_u - \nabla_{\theta} \tilde{\mathcal{L}}_u \right\|_2^2 = -2 J(x)^T \left( \nabla_{\theta} \mathcal{L}_u - \nabla_{\theta} \tilde{\mathcal{L}}_u \right),
\]
where $J(x) = \frac{\partial \nabla_{\theta} \tilde{\mathcal{L}}_u}{\partial x}$. Substituting $\mathcal{L}_u = \mathcal{L}(\theta, x_r, y_r) - \mathcal{L}(\theta, x_u, y_u)$ and $\tilde{\mathcal{L}}_u = \mathcal{L}(\theta, x, y_u)$:
\[
\frac{\partial \mathcal{L}_{\text{sim}}}{\partial x} = -2 J(x)^T \left( \nabla_{\theta} \mathcal{L}(\theta, x_r, y_r) - \nabla_{\theta} \mathcal{L}(\theta, x_u, y_u) - \nabla_{\theta} \mathcal{L}(\theta, x, y_u) \right).
\]

Then, we evaluate the expression at $x = x_u$:
\[
\frac{\partial \mathcal{L}_{\text{sim}}}{\partial x} \bigg|_{x=x_u} = -2 J(x_u)^T \left( \nabla_{\theta} \mathcal{L}(\theta, x_r, y_r) - 2 \nabla_{\theta} \mathcal{L}(\theta, x_u, y_u) \right).
\]

By Assumption~\ref{assumption:3}, $J(x_u)$ has full column rank. Since $\nabla_{\theta} \mathcal{L}(\theta, x_r, y_r) \neq 2 \nabla_{\theta} \mathcal{L}(\theta, x_u, y_u)$ as $x_u\not = x_r$, the gradient at $x_u$ is non-zero. By Fermat's optimality condition, $x_u$ cannot be a local minimizer.
\end{proof}

\begin{assumption}
\label{assumption:4}
The surrogate loss $\tilde{\mathcal{L}}_u$ is:
$\mu_x$-smooth in $x$: $\left\| \nabla_x \tilde{\mathcal{L}}_u(x_1) - \nabla_x \tilde{\mathcal{L}}_u(x_2) \right\|_2 \leq \mu_x \|x_1 - x_2\|_2$,
 and $\mu_{\theta}$-smooth in $\theta$: $\left\| \nabla_{\theta} \tilde{\mathcal{L}}_u(\theta_1) - \nabla_{\theta} \tilde{\mathcal{L}}_u(\theta_2) \right\|_2 \leq \mu_{\theta} \|\theta_1 - \theta_2\|_2$.
\end{assumption}

\begin{theorem}
\label{theorem:2}
Let $x_u^{*}$ be a minimizer of the loss function of the classical GIA:
\[
x_u^{*} = \arg\min_{x \in \mathcal{X}} \mathcal{L}_{\text{sim}}\left( \nabla_{\theta} \mathcal{L}_u, \nabla_{\theta} \tilde{\mathcal{L}}_u \right),
\]

If Assumptions~\ref{assumption:1}-~\ref{assumption:4} hold, then the reconstruction error, defined as the \( \ell_2 \)-norm between the flattened minimizer and the ground truth input, satisfies:
\[
\| x_u^{*} - x_u \|_2 \geq \frac{ \| J^{T} \nabla_{\theta} \mathcal{L}(\theta, x_r, y_r) \|_2 }{ \mu_x \| J \|_F + 2 \mu_{\theta} \| \nabla_{\theta} \mathcal{L}_u \|_2 },
\]

\end{theorem}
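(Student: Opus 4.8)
The plan is to convert the nonvanishing gradient at the ground truth (established in Theorem~\ref{theorem:1}) into a distance lower bound by exploiting the Lipschitz continuity of the reconstruction gradient field guaranteed by Assumption~\ref{assumption:4}. Write $r(x) = \nabla_\theta \mathcal{L}_u - \nabla_\theta \tilde{\mathcal{L}}_u(x)$ for the gradient residual and $g(x) = \partial \mathcal{L}_{\text{sim}}/\partial x = -2 J(x)^T r(x)$ for the objective gradient. Since $x_u^{*}$ is a minimizer of $\mathcal{L}_{\text{sim}}$, first-order optimality gives $g(x_u^{*}) = 0$, i.e.\ $J(x_u^{*})^T r(x_u^{*}) = 0$. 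The engine of the proof is the elementary fact that if a map $h$ is $L$-Lipschitz and $h(x_u^{*}) = 0$, then $\|x_u^{*} - x_u\|_2 \geq \|h(x_u)\|_2 / L$. Applying this with $h(x) = J(x)^T r(x)$ reduces the theorem to (i) lower-bounding $\|h(x_u)\|_2$ and (ii) upper-bounding the Lipschitz constant $L$ of $h$.

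For step (i), I would substitute the gradient-difference form of $\mathcal{L}_u$ and use that $\tilde{\mathcal{L}}_u$ depends only on the forget label $y_u$, so the surrogate can never reproduce the retain contribution $\nabla_\theta \mathcal{L}(\theta, x_r, y_r)$. Isolating this structurally irreducible term in the residual identifies $\|J^T \nabla_\theta \mathcal{L}(\theta, x_r, y_r)\|_2$ as the numerator: it is the part of $h(x_u)$ that the matching objective is powerless to annihilate, which is exactly the mismatch that Theorem~\ref{theorem:1} showed to be nonzero whenever $x_u \neq x_r$.

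For step (ii), I would bound the Lipschitz constant of $h(x) = J(x)^T r(x)$ by splitting the increment with a product-rule-style decomposition, $J(x_1)^T r(x_1) - J(x_2)^T r(x_2) = J(x_1)^T(r(x_1) - r(x_2)) + (J(x_1) - J(x_2))^T r(x_2)$, and applying the two smoothness bounds of Assumption~\ref{assumption:4} termwise. The first term yields $\mu_x \|J\|_F \|x_1 - x_2\|_2$ using $\|r(x_1) - r(x_2)\|_2 \le \mu_x\|x_1-x_2\|_2$ (the surrogate gradient is $\mu_x$-smooth in $x$) together with $\|J^T v\|_2 \le \|J\|_F \|v\|_2$; the second term yields the $2\mu_\theta\|\nabla_\theta \mathcal{L}_u\|_2$ contribution after bounding the variation of the Jacobian through the $\mu_\theta$-smoothness in $\theta$ and controlling the residual magnitude by $\|\nabla_\theta \mathcal{L}_u\|_2$. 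Summing the two pieces gives the denominator $\mu_x\|J\|_F + 2\mu_\theta\|\nabla_\theta \mathcal{L}_u\|_2$, and combining with step (i) through the Lipschitz inequality yields the claimed bound.

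I expect the main obstacle to be step (ii), specifically the second term: the stated constants describe how $\nabla_x \tilde{\mathcal{L}}_u$ and $\nabla_\theta \tilde{\mathcal{L}}_u$ vary, whereas controlling $\|J(x_1) - J(x_2)\|$ is a statement about a mixed, higher-order derivative of $\tilde{\mathcal{L}}_u$. Pinning down precisely how Assumption~\ref{assumption:4} licenses the factor $2\mu_\theta\|\nabla_\theta \mathcal{L}_u\|_2$, and justifying the identification of the residual scale with $\|\nabla_\theta \mathcal{L}_u\|_2$, is where the argument needs the most care; the numerator extraction and the Lipschitz-to-distance conversion are otherwise routine.
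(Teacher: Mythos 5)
Your route is genuinely different from the paper's, and in one respect cleaner. The paper never explicitly invokes first-order optimality of $x_u^{*}$: it Taylor-expands $\nabla_{\theta}\tilde{\mathcal{L}}_u$ around $x_u$, evaluates the residual $\nabla_{\theta}\mathcal{L}_u - \nabla_{\theta}\tilde{\mathcal{L}}_u(x_u^{*})$, lower-bounds its norm by the triangle inequality, eliminates the quadratic Taylor remainder by simply asserting a ``conservative bound'' $\frac{\mu_x}{2}\|x_u^{*}-x_u\|_2^2 \le \mu_x\|J\|_F\|x_u^{*}-x_u\|_2 + 2\mu_{\theta}\|\nabla_{\theta}\mathcal{L}_u\|_2$ (this is the only place the $2\mu_{\theta}$ term enters, and it is not derived), and then concludes by ``projecting onto $J^{\top}$ and rearranging.'' You instead replace that last, vaguest step with the exact stationarity condition $J(x_u^{*})^{\top}r(x_u^{*})=0$ and convert it into a distance bound via Lipschitz continuity of $h(x)=J(x)^{\top}r(x)$. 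This is more principled: the paper's chain of inequalities only bounds $\|r(x_u^{*})\|_2$ from below and cannot by itself yield a lower bound on $\|x_u^{*}-x_u\|_2$ unless the residual at the minimizer is known to be small in the relevant directions --- which is exactly what your stationarity step supplies.

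That said, both arguments share the same two soft spots, and you have correctly located them. First, the numerator: at $x=x_u$ the residual equals $\nabla_{\theta}\mathcal{L}(\theta,x_r,y_r)-2\nabla_{\theta}\mathcal{L}(\theta,x_u,y_u)$ (as the proof of Theorem~\ref{theorem:1} itself computes), not $\nabla_{\theta}\mathcal{L}(\theta,x_r,y_r)$; ``isolating the structurally irreducible term'' is not a valid norm lower bound without subtracting the norm of the remaining term, and the paper commits the identical elision when it asserts $\nabla_{\theta}\mathcal{L}_u-\nabla_{\theta}\tilde{\mathcal{L}}_u(x_u)=\nabla_{\theta}\mathcal{L}(\theta,x_r,y_r)$. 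Second, the denominator: your product-rule split needs a Lipschitz bound on $x\mapsto J(x)$, i.e.\ on a mixed second derivative, and a bound on $\|r\|_2$ by $\|\nabla_{\theta}\mathcal{L}_u\|_2$; Assumption~\ref{assumption:4} literally provides neither ($\mu_x$ controls $\nabla_x\tilde{\mathcal{L}}_u$ rather than $\nabla_{\theta}\tilde{\mathcal{L}}_u$ or its $x$-derivative, and $\mu_{\theta}$ controls variation in $\theta$, not in $x$). The paper needs the same missing regularity and buries it inside the asserted ``conservative bound.'' So your plan reaches the stated inequality by a different and more transparent mechanism, but to close it you would have to either strengthen Assumption~\ref{assumption:4} (Lipschitz Jacobian, bounded residual) or accept the same leaps the paper makes.
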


The proof of Theorem~\ref{theorem:2} and a more detailed theoretical analysis is provided in Appendix~\ref{appendix:theory}.

\textbf{Interpretation:} Theorem~\ref{theorem:1} shows classical GIA fails to converge to $x_u$ when $x_r$ influences $\mathcal{L}_u$ and Theorem~\ref{theorem:2} provides a lower bound on its reconstruction error, proving perfect reconstruction ($\|x_u^{*} - x_u\|_2 = 0$) is impossible.

\vspace{-0.2cm}
\section{Experiments}\vspace{-0.2cm}
\label{sec:experiments}
\subsection{Experimental Setup}\vspace{-0.2cm}
\label{subsec: exper_setup}
\textbf{Datasets and Models:}
We evaluate \textit{\methodname}~on four standard image classification datasets: CIFAR10~\cite{cifar10_100}, CIFAR100~\cite{cifar10_100}, FEMNIST~\cite{caldas2018leaf}, and MNIST~\cite{lecun1998mnist} (see Appendix~\ref{appendix:dataset} for details on datasets). To ensure a fair comparison with the reconstruction performance of~\citet{DBLP:conf/sp/HuWDX24}, we use an 8-layer ConvNet for all main experiments (see Appendix~\ref{appendix:convnet} for its architecture). Reconstruction results using other model architectures, such as MLP, LeNet~\cite{lecun1998mnist}, and ResNet18~\cite{resnet}, are reported in Appendix~\ref{appendix:recon_results}.

\textbf{Federated Unlearning Setup:}
We simulate an FL environment with 100 clients. In each communication round, 10 clients are selected uniformly at random to participate. Each selected client trains a local model using stochastic gradient descent with a learning rate of 0.1, for 2 local epochs, and a batch size of 128. The global model is trained for 100 rounds and converges by round 90 based on clean accuracy. After convergence, we simulate FU by targeting the data of a single client for removal. We evaluate both single-step and multi-step FU methods, where the number of unlearning steps is denoted by $\mathcal{E} \in \{1, 2, 4\}$. We consider $\mathcal{E}=1$ for the main experiments; results for $\mathcal{E}=2$ and $\mathcal{E}=4$ are provided in Appendix~\ref{appendix:ablation_unlearn_steps}. To evaluate the effect of target data size, we vary the size of the unlearned dataset as $|\mathcal{D}_u| \in 2^i, \text{where } i \in [0, 7]$. We evaluate four FU methods based on first-order optimization proposed by~\citet{elasticsga}, \citet{halimi}, \citet{manaar}, and ABL~\cite{abl}.

\textbf{Reconstruction Setup:}
For reconstruction, \textit{\methodname} uses the Adam optimizer with $\eta_{\text{rec}} = 0.1$, $\lambda_{\text{reg}} = 10^{-6}$, and $\beta = 0.9$. The number of reconstruction iterations $T$ varies between 6,000 and 24,000 across experiments. The other hyperparameters used in Algorithm~\ref{alg:approx} and Algorithm~\ref{alg:initialization} are $\eta_{\text{unl}}=0.1$, $\delta=10.0$, $\Delta=5.0$, and $\sigma=1$.

\textbf{Evaluation Metrics:}
To evaluate the quality of reconstructed images, we use standard image similarity metrics that assess luminance, contrast, and structure with the ground truth. Specifically, we use the Structural Similarity Index Measure (SSIM)~\cite{wang2004image}, the Learned Perceptual Image Patch Similarity (LPIPS)~\cite{zhang2018unreasonable}, and the Peak Signal-to-Noise Ratio (PSNR)~\cite{DBLP:conf/sp/HuWDX24}. Lower LPIPS scores and higher PSNR and SSIM values (with SSIM close to 1) indicate better reconstruction quality.

\textbf{Compute Resources:} We used a cluster with 8 NVIDIA A100 80GB GPUs, 255 AMD EPYC 7763 64-Core Processor CPUs, and 2TB of RAM to run all experiments.

\subsection{\textit{\methodname}~Reconstruction Efficiency}\vspace{-0.2cm}
\textbf{Visual Reconstruction Results:}
We evaluate the effectiveness of \textit{\methodname}~in reconstructing unlearned images from local updates produced by four unlearning algorithms. Figure~\ref{fig:cifar_methods_comparison} shows one randomly selected example from CIFAR10 dataset, with reconstructions generated by \textit{\methodname} using local updates obtained from \citet{elasticsga}, ABL~\cite{abl}, \citet{manaar}, and~\citet{halimi}. The reconstructions closely resemble the original images, as indicated by high SSIM and PSNR values and low LPIPS scores. These results show that the unlearning methods leave substantial residual information in the local updates, which \textit{\methodname} can exploit to reconstruct the original data accurately. Additional reconstructed examples for CIFAR100, FEMNIST and MNIST are provided in Appendix~\ref{appendix:reconstructed_images}.
\begin{figure}[!t]
    \centering
    \resizebox{0.85\textwidth}{!}{%
    \begin{tabular}{c}
        \begin{subfigure}[t]{0.17\linewidth}
            \centering
            \textbf{Ground Truth}\vspace{0.05cm}
            \fbox{\includegraphics[width=\linewidth]{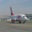}}
        \end{subfigure}
        \hspace{0.05cm}
        \begin{subfigure}[t]{0.17\linewidth}
            \centering
            \textbf{\citet{elasticsga}}
            \fbox{\includegraphics[width=\linewidth]{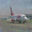}}\\
            {$\uparrow$SSIM: 0.93\\$\uparrow$PSNR: 33.9\\$\downarrow$LPIPS: 0.04}
        \end{subfigure}
        \hspace{0.05cm}
        \begin{subfigure}[t]{0.17\linewidth}
            \centering
            \textbf{ABL~\cite{abl}}
            \fbox{\includegraphics[width=\linewidth]{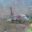}}\\
            {$\uparrow$SSIM: 0.80\\$\uparrow$PSNR: 26.2\\$\downarrow$LPIPS: 0.17}
        \end{subfigure}
        \hspace{0.05cm}
        \begin{subfigure}[t]{0.17\linewidth}
            \centering
            \textbf{\citet{manaar}}
            \fbox{\includegraphics[width=\linewidth]{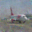}}\\
            {$\uparrow$SSIM: 0.86\\$\uparrow$PSNR: 30.7\\$\downarrow$LPIPS: 0.10}
        \end{subfigure}
        \hspace{0.05cm}
        \begin{subfigure}[t]{0.17\linewidth}
            \centering
            \textbf{\citet{halimi}}
            \fbox{\includegraphics[width=\linewidth]{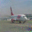}}\\
            {$\uparrow$SSIM: 0.94\\$\uparrow$PSNR: 32.6\\$\downarrow$LPIPS: 0.05}
        \end{subfigure}
    \end{tabular}
    }\vspace{-0.1cm}
    \caption{\textit{\methodname} reconstructions from local updates of four unlearning methods on CIFAR10. Higher SSIM and PSNR, and lower LPIPS scores indicate strong similarity to the original images, revealing residual information in the updates.\vspace{-0.3cm}}
    \label{fig:cifar_methods_comparison}
\end{figure}

\textbf{Quantitative Comparison with State-of-the-Art:}
Table~\ref{table:reconstruction_comparison} provides a quantitative comparison between \textit{\methodname} and the state-of-the-art DRA by~\citet{DBLP:conf/sp/HuWDX24}, originally developed for MLaaS unlearning and directly adapted here to the FU setting. The evaluation includes all four unlearning algorithms discussed earlier, applied to CIFAR10, CIFAR100, MNIST, and FEMNIST datasets. Each entry shows the average performance over reconstructions of 10 randomly selected images per dataset, using the same metrics described previously.
On all datasets, \textit{\methodname} clearly outperforms~\citet{DBLP:conf/sp/HuWDX24} when using updates from ABL~\cite{abl} and~\citet{manaar}. For example, on CIFAR10 with ABL~\cite{abl},~\citet{DBLP:conf/sp/HuWDX24} achieves near-zero SSIM (0.0038) and very low PSNR (5.48) values and extremely high LPIPS (0.9358), indicating failed reconstructions (see Figure~\ref{fig:reconstruction_comparison} for examples of such failed reconstructions). In contrast, \textit{\methodname} achieves a better reconstruction (SSIM: 0.6407, PSNR:~21.11, LPIPS: 0.3175). A similar gap is observed for~\citet{manaar}. However, for the unlearning algorithms by~\citet{elasticsga} and~\citet{halimi},~\citet{DBLP:conf/sp/HuWDX24} slightly outperforms or matches \textit{\methodname} in some cases (e.g., CIFAR100 with~\citet{elasticsga}). This is due to the design of these specific unlearning algorithms, which do not rely on coupled update dynamics of the retained and unlearned datasets (see Table~\ref{tab:unlearning_algorithms}). As a result,~\citet{DBLP:conf/sp/HuWDX24} can still reconstruct meaningful images in these cases. Nonetheless, \textit{\methodname} consistently achieves better performance than~\citet{DBLP:conf/sp/HuWDX24} across most datasets and unlearning algorithms, especially on complex datasets such as CIFAR10 and CIFAR100. These results validate the robustness and generalizability of \textit{\methodname} in FU contexts.
\begin{table}[!t]
\centering
\caption{Reconstruction quality comparison between \textit{\methodname} and the method by~\citet{DBLP:conf/sp/HuWDX24} across four unlearning algorithms and datasets. Higher SSIM and PSNR, and lower LPIPS scores indicate better reconstruction quality.\vspace{0.1cm}}
\label{table:reconstruction_comparison}
\resizebox{\linewidth}{!}{
\begin{tabular}{c|c|ccc|ccc|ccc|ccc}
\hline
\multirow{2}{*}{\textbf{Dataset}} & \multirow{2}{*}{\textbf{Methods}} & \multicolumn{3}{c|}{\citet{elasticsga}} & \multicolumn{3}{c|}{ABL~\cite{abl}} & \multicolumn{3}{c|}{\citet{manaar}} & \multicolumn{3}{c}{\citet{halimi}} \\ \cline{3-14} 
 &  & \multicolumn{1}{c|}{$\uparrow$SSIM} & \multicolumn{1}{c|}{$\uparrow$PSNR} & $\downarrow$LPIPS & \multicolumn{1}{c|}{$\uparrow$SSIM} & \multicolumn{1}{c|}{$\uparrow$PSNR} & $\downarrow$LPIPS & \multicolumn{1}{c|}{$\uparrow$SSIM} & \multicolumn{1}{c|}{$\uparrow$PSNR} & $\downarrow$LPIPS & \multicolumn{1}{c|}{$\uparrow$SSIM} & \multicolumn{1}{c|}{$\uparrow$PSNR} & $\downarrow$LPIPS \\ \hline \hline
\multirow{2}{*}{\textbf{CIFAR10}} & \citet{DBLP:conf/sp/HuWDX24} & \multicolumn{1}{c|}{0.8698} & \multicolumn{1}{c|}{27.17} & 0.1350 & \multicolumn{1}{c|}{0.0038} & \multicolumn{1}{c|}{5.48} & 0.9358 & \multicolumn{1}{c|}{-0.0062} & \multicolumn{1}{c|}{5.43} & 0.9201 & \multicolumn{1}{c|}{0.8488} & \multicolumn{1}{c|}{27.04} & 0.1523 \\ \cline{2-14} 
 & \textit{\methodname} & \multicolumn{1}{c|}{0.8503} & \multicolumn{1}{c|}{26.06} & 0.1509 & \multicolumn{1}{c|}{0.6407} & \multicolumn{1}{c|}{21.11} & 0.3175 & \multicolumn{1}{c|}{0.6114} & \multicolumn{1}{c|}{20.85} & 0.3328 & \multicolumn{1}{c|}{0.8374} & \multicolumn{1}{c|}{25.49} & 0.1669 \\ \hline
\multirow{2}{*}{\textbf{CIFAR100}} & \citet{DBLP:conf/sp/HuWDX24} & \multicolumn{1}{c|}{0.8786} & \multicolumn{1}{c|}{25.68} & 0.1294 & \multicolumn{1}{c|}{0.0008} & \multicolumn{1}{c|}{4.67} & 0.9007 & \multicolumn{1}{c|}{-0.0066} & \multicolumn{1}{c|}{4.87} & 0.8966 & \multicolumn{1}{c|}{0.7883} & \multicolumn{1}{c|}{23.42} & 0.2122 \\ \cline{2-14} 
 & \textit{\methodname} & \multicolumn{1}{c|}{0.8775} & \multicolumn{1}{c|}{24.98} & 0.1374 & \multicolumn{1}{c|}{0.6478} & \multicolumn{1}{c|}{19.32} & 0.3357 & \multicolumn{1}{c|}{0.7341} & \multicolumn{1}{c|}{21.86} & 0.2704 & \multicolumn{1}{c|}{0.8795} & \multicolumn{1}{c|}{26.26} & 0.1170 \\ \hline
\multirow{2}{*}{\textbf{MNIST}} & \citet{DBLP:conf/sp/HuWDX24} & \multicolumn{1}{c|}{0.9402} & \multicolumn{1}{c|}{37.93} & 0.0041 & \multicolumn{1}{c|}{-0.0148} & \multicolumn{1}{c|}{2.04} & 0.8080 & \multicolumn{1}{c|}{-0.0097} & \multicolumn{1}{c|}{2.19} & 0.8209 & \multicolumn{1}{c|}{0.9425} & \multicolumn{1}{c|}{38.94} & 0.0029 \\ \cline{2-14} 
 & \textit{\methodname} & \multicolumn{1}{c|}{0.9247} & \multicolumn{1}{c|}{36.39} & 0.0075 & \multicolumn{1}{c|}{0.7603} & \multicolumn{1}{c|}{28.46} & 0.1400 & \multicolumn{1}{c|}{0.7626} & \multicolumn{1}{c|}{29.01} & 0.1157 & \multicolumn{1}{c|}{0.9283} & \multicolumn{1}{c|}{36.67} & 0.0056 \\ \hline
\multirow{2}{*}{\textbf{FEMNIST}} & \citet{DBLP:conf/sp/HuWDX24} & \multicolumn{1}{c|}{0.9986} & \multicolumn{1}{c|}{53.57} & 0.0001 & \multicolumn{1}{c|}{-0.0000} & \multicolumn{1}{c|}{1.77} & 0.8880 & \multicolumn{1}{c|}{0.0016} & \multicolumn{1}{c|}{1.39} & 0.8991 & \multicolumn{1}{c|}{0.9620} & \multicolumn{1}{c|}{45.36} & 0.0070 \\ \cline{2-14} 
 & \textit{\methodname} & \multicolumn{1}{c|}{0.9978} & \multicolumn{1}{c|}{52.25} & 0.0003 & \multicolumn{1}{c|}{0.9929} & \multicolumn{1}{c|}{44.27} & 0.0007 & \multicolumn{1}{c|}{0.9947} & \multicolumn{1}{c|}{45.44} & 0.0004 & \multicolumn{1}{c|}{0.9970} & \multicolumn{1}{c|}{51.15} & 0.0003 \\ \hline
\end{tabular}}
\vspace{-0.4cm}
\end{table}

\textbf{Reconstruction Convergence Analysis:}
Figure~\ref{fig:loss_comparison} shows how the reconstruction loss changes over reconstruction steps for \textit{\methodname}~and~\citet{DBLP:conf/sp/HuWDX24}, evaluated with four unlearning algorithms on the CIFAR10 dataset. \textit{\methodname}~consistently achieves low reconstruction loss across all algorithms, showing that it can efficiently recover the data. In contrast,~\citet{DBLP:conf/sp/HuWDX24} fails to converge for ABL~\cite{abl} and~\citet{manaar}, getting stuck in high-loss regions. This suggests that \textit{\methodname}~can effectively approximate the client's unlearning algorithm (see $\mathcal{A}_{\text{approx}}$ in Algorithm~\ref{alg:approx}), regardless of which specific method is used, and produce accurate reconstructions.
\begin{figure}[!t]
    \centering
    \begin{minipage}[t]{0.48\textwidth}
        \centering
        \includegraphics[width=\linewidth]{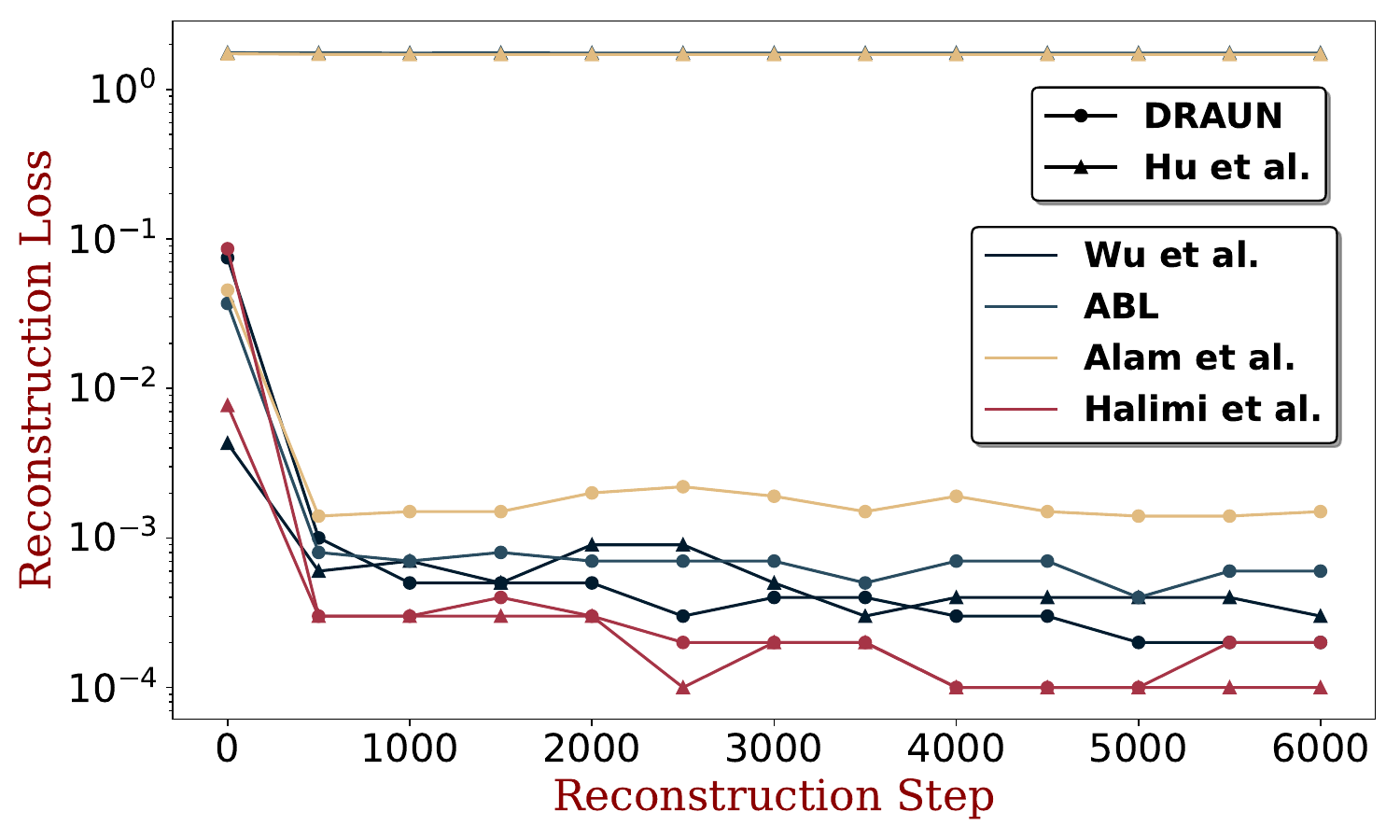}\vspace{-0.2cm}
        \caption{Reconstruction loss over reconstruction steps for \textit{\methodname} and~\citet{DBLP:conf/sp/HuWDX24} across four unlearning algorithms on CIFAR10.}\vspace{-0.2cm}
        \label{fig:loss_comparison}
    \end{minipage}%
    \hfill
    \begin{minipage}[t]{0.48\textwidth}
        \centering
        \includegraphics[width=0.75\linewidth]{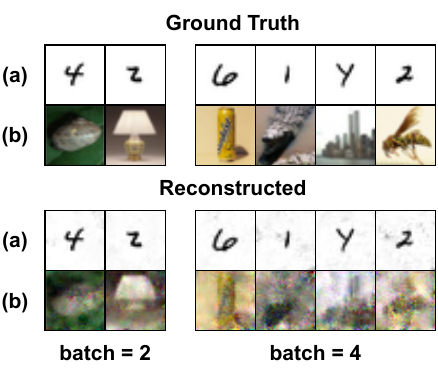}\vspace{-0.2cm}
        \caption{Reconstructed image batches using \textit{\methodname} from ABL~\cite{abl} with batch sizes of 2 and 4 for: \textbf{(a)} FEMNIST and \textbf{(b)} CIFAR100.}\vspace{-0.2cm}
        \label{fig:rec_grid}
    \end{minipage}\vspace{-0.2cm}
\end{figure}

\textbf{Batch Image Reconstruction:}
Batch reconstruction is a well-known challenge in DRA literature~\cite{DBLP:conf/nips/GeipingBD020}. When gradients come from multiple images, their signals mix, making it difficult to separate and recover individual examples. To explore the limits of \textit{\methodname}, we evaluate its performance in batch unlearning scenarios. Previously, all experiments focused on unlearning and reconstructing a single image. In Figure~\ref{fig:rec_grid}, we show how \textit{\methodname} performs on the FEMNIST and CIFAR100 datasets when the client unlearns a randomly selected batch of images (with batch sizes of 2 and 4). ABL~\cite{abl} is used to perform the unlearning, and \textit{\methodname} attempts to reconstruct every image in the batch. On FEMNIST, which consists of grayscale images with low visual complexity, \textit{\methodname} reconstructs images with high accuracy, even for batch sizes larger than one. On CIFAR100, which contains more complex and diverse images, reconstructions become increasingly noisy as the batch size grows from 2 to 4. Still, the recovered images preserve some structure and visual features from the ground truth. \citet{DBLP:conf/sp/HuWDX24}, which fail to produce meaningful reconstructions even in the single-image case, are not applicable in this setting. Since they cannot recover a single image, they do not generalize to the more difficult problem of reconstructing multiple images from mixed gradients. Additional results of batch reconstructions for other datasets and batch sizes are provided in Appendix~\ref{appendix:grid_rec}.\vspace{-0.1cm}

\section{Discussion}\vspace{-0.2cm}
\label{section:discussion}
\textbf{\textit{\methodname}'s Effectiveness against Second-Order Unlearning:} Optimization-based FU typically relies on first-order methods, as second-order approaches involve costly Hessian inversions. Nonetheless,~\citet{2ndord_fu} has recently explored second-order unlearning in FU. We evaluate \textit{\methodname}'s ability to reconstruct data from such second-order unlearning updates. Reconstruction from second-order unlearning follows the same overall approach as in the first-order case, but instead of matching gradients, the server simulates Newton updates by computing Hessian-vector products (HVPs) using dummy inputs and minimizes the differences between the true and simulated HVPs to recover the original client data. A detailed discussion on this second-order reconstruction mechanism is provided in Appendix~\ref{appendix:second_order}. Figure~\ref{fig:discussion}(a) shows two random examples from the CIFAR10 dataset reconstructed using \textit{\methodname} from second-order unlearned updates, demonstrating strong visual reconstruction. The model used in this experiment is an MLP, consistent with the setup used by~\citet{2ndord_fu}.

\begin{figure}[!t]
    \centering
    \includegraphics[width=0.95\linewidth]{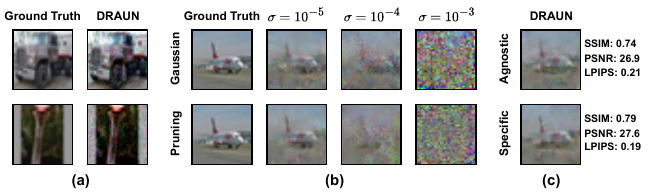}\vspace{-0.3cm}
    \caption{\textbf{(a)} Examples of CIFAR10 images reconstructed by \textit{\methodname} from second-order unlearning updates~\cite{2ndord_fu}. \textbf{(b)} Effects of two defense strategies: Gaussian noise addition (top row) and gradient pruning (bottom row) at varying noise levels and pruning thresholds $\sigma$. \textbf{(c)} Comparison of \textit{\methodname}'s reconstruction performance under algorithm-agnostic and algorithm-specific settings.}\vspace{-0.4cm}
    \label{fig:discussion}
\end{figure}

\textbf{Potential Defense against \textit{\methodname}:} To evaluate defenses against \textit{\methodname}, we consider two techniques: (1) adding Gaussian noise to unlearning updates before sending to the server~\cite{DBLP:conf/nips/ZhuLH19}, and (2) threshold-based pruning, which discards gradient components with magnitudes below a fixed threshold~\cite{grad_prune}. We use the unlearning algorithm by~\citet{manaar} on the CIFAR10 dataset. Figure~\ref{fig:discussion}(b) shows how increasing the noise level or pruning threshold $\sigma$ impacts reconstruction quality. As $\sigma$ increases from $10^{-5}$ to $10^{-3}$, the reconstructions degrade significantly, indicating that both defenses reduce \textit{\methodname}'s effectiveness. However, these defenses also degrade the utility of the model. A higher noise level or aggressive pruning negatively impacts its accuracy. This introduces a trade-off between privacy and performance, which we discuss in more detail in Appendix~\ref{appendix:defense}.

\textbf{\textit{\methodname}'s Performance in Algorithm-Specific Setting:} We also consider a relaxed threat model where the server knows the client's unlearning algorithm, referred to as the \textbf{algorithm-specific} setting. Here, the server simulates the exact unlearning updates, removing the need for surrogate optimization and providing an upper bound on \textit{\methodname}'s reconstruction performance. Figure~\ref{fig:discussion}(c) shows reconstructions from CIFAR10 in both algorithm-agnostic and algorithm-specific modes using the unlearning method of~\citet{manaar}. The algorithm-specific mode produces a significantly better reconstruction of the unlearned image, closely resembling the ground truth. This confirms that knowing the exact unlearning dynamics allows the attacker to generate more accurate gradient surrogates, leading to improved data reconstruction. A detailed comparison between algorithm-specific and algorithm-agnostic modes of \textit{\methodname} is provided in Appendix~\ref{appendix:algo_spec}.

\textbf{Limitations and Future Work:} \textbf{(1)} \textit{\methodname} is specifically designed for optimization-based federated unlearning algorithms. However, there also exist non-optimization-based unlearning techniques~\cite{fu_survey} that fall outside the current scope of our attack. In future work, we plan to extend \textit{\methodname} to support these alternative classes of unlearning algorithms. \textbf{(2)} \textit{\methodname} relies on GIA as its backbone and therefore inherits its limitations, including reduced effectiveness on batch data reconstruction. We plan to develop a version of \textit{\methodname} that does not depend on GIA to address these issues.\vspace{-0.2cm}

\section{Conclusion}\vspace{-0.2cm}
We present \textit{\methodname}, the first DRA specifically designed for FU systems. Existing DRAs for unlearning were developed in centralized MLaaS settings and do not directly translate to federated learning. \textit{\methodname}~addresses the unique challenges of FU and is capable of reconstructing deleted data from local unlearning updates without requiring knowledge of the client's unlearning method. Our theoretical and empirical analysis shows that optimization-based unlearning methods leak exploitable signals, which \textit{\methodname}~effectively uses to recover the deleted data. We demonstrate \textit{\methodname}'s effectiveness across a range of datasets, model architectures, and unlearning algorithms.

\bibliographystyle{plainnat}
\bibliography{draun_preprint}

\newpage

\appendix
\section{Background}\label{appendix:background}

\subsection{Federated Unlearning}

Federated Unlearning (FU) extends the concept of Machine Unlearning (MU)~\cite{firstunlearning} to federated learning (FL) settings. MU is the process of reversing the influence of specific data samples—often referred to as the unlearning dataset—on a trained model, enabling it to "forget" such data while maintaining performance on the remaining (retained) data. This is increasingly important in light of privacy regulations and the need to mitigate the effect of corrupted or poisoned data.
A straightforward but computationally intensive solution is to retrain the model from scratch without the unlearning dataset, known as exact unlearning. To reduce the overhead, approximate unlearning methods have been proposed to yield models that are statistically close to those obtained via exact retraining. First-order (gradient-based) methods are among the most widely used in FL~\cite{halimi,manaar,elasticsga,abl}. These methods typically perform gradient ascent on the unlearning dataset, or optimize a loss difference between retained and unlearned data, often combined with regularization terms that constrain the local model to remain close to the global model. The decentralized nature of FL introduces additional complexities to unlearning, leading to the exploration of different scenarios. For example, second-order methods based on Hessian inversion may not be directly suitable for FL due to computational and communication constraints. Nevertheless,~\citet{2ndord_fu} have applied such techniques in the Neural Tangent Kernel (NTK)\cite{ntk} regime of trained neural networks. The unlearning process in FL can also vary in terms of scope and participation: some settings require client participation, while others aim to unlearn data from entire clients~\cite{federaser_fu3,2ndord_fu}. In more fine-grained scenarios, clients may wish to selectively forget specific samples while continuing to participate in the protocol.While optimization-based methods are widely used, other techniques such as knowledge distillation and null subspace projection have also been explored~\cite{fu_survey}, contributing to the diverse landscape of federated unlearning approaches.

\subsection{Data Reconstruction Attack}
\label{sub:DRA works}
Extensive research has revealed the vulnerability of machine learning models, particularly deep neural networks (DNNs)~\cite{Goodfellow-et-al-2016}, to data reconstruction attacks (DRAs), also known as model inversion attacks (MIAs). Studies such as~\cite{deepleakage, idlg, invgrad, seethrgrd, fishing_icml, gendlg, rgap} have demonstrated the capability of a malicious server to reconstruct a client's private training data from the gradients of their updates. This is often achieved through gradient matching, which involves optimizing the discrepancy between the true gradient derived from training on actual data and a synthesized (dummy) gradient. Concurrently,~\cite{franz,model_modif1,model_modif2,model_modif3} have explored methods that utilize modified models to directly extract training datasets, bypassing the need for optimization. Generative Adversarial Networks (GANs)~\cite{Goodfellowgans} have also been leveraged in~\cite{salemmicrosoft,hitaj_dragan,dragan1,dragan2} to reconstruct training data by conditioning the generator on prior knowledge about the dataset. Furthermore,~\cite{balle_nnreconstructor,nn_reconst2} presented approaches that train a DNN to learn the inverse mapping from labels to their corresponding inputs.
Beyond optimization and generative methods, some DRAs are grounded in theoretical findings. ~\citet{ntk_attack} employed the implicit bias approximation results~\cite{Lyu2020Gradient,NEURIPS2020_c76e4b2f} to extract training data from the algebraic equations of the Karush-Kuhn-Tucker (KKT) conditions of trained DNNs~\cite{kkt}. Additionally,~\citet{bayes_eth} constructed a Bayes optimal data reconstructor, and ~\citet{WangLL23} utilized tensor theory and the third moment of the dataset for reconstruction. ~\citet{spear} exploited the low rankness and ReLU-induced sparsity of gradients within a sampling-based algorithm to reconstruct data. Despite the significant body of work on DRAs,~\cite{eval_dra} highlighted that many existing attacks rely on often unrealistic assumptions, such as access to batch normalization statistics, knowledge of prior data distributions, and known labels. On the defensive front, various mitigation and detection strategies have been proposed, ranging from information-theoretic frameworks to gradient obfuscation and model modification~\cite{ats,soteria,precode,infguard,seer}.

\section{Gradient Inversion Attack}\label{appendix:gia}
Gradient Inversion Attack (GIA) was first introduced by~\citet{deepleakage}, where they demonstrated that training data can be reconstructed from model gradients in the FedSGD protocol by optimizing a similarity loss function $\mathcal{L}_{\text{sim}}$ (typically the $\ell_2$ norm) between the true gradient and a dummy gradient computed on dummy inputs and labels, as shown in Algorithm~\ref{alg:gradinv_attack}.\citet{invgrad} chose $\mathcal{L}_{\text{sim}}$ to be the cosine similarity and added a Total Variation norm as regularization to enhance the quality of the reconstructed input. They also targeted the FedAvg protocol, where clients are allowed to perform multiple local training steps.\citet{seethrgrd} introduced group regularization to improve batch reconstruction, and further showed that labels can be retrieved analytically from the model’s linear layer instead of being treated as optimization variables.
\begin{algorithm}[H]
   \caption{Gradient Inversion Attack}
   \label{alg:gradinv_attack}
\begin{algorithmic}[1]
   \State \textbf{Input:} $\nabla {\theta}_c  := \nabla_{\theta} \mathcal{L}_{\theta}(x,y)$ \Comment Client's gradient of the cross-entropy loss

   \State   $\tilde{x} \gets \mathcal{N}(0,1)$, $\tilde{y} \gets \mathcal{N}(0,1) $ \Comment{Initialize dummy inputs and labels}

   \For{t in [1..T]}
     
       \State $\nabla \tilde{\theta}_c \gets$  $\nabla_{\theta} \mathcal{L}_{\theta}(\tilde{x} ,\tilde{y}) $ \Comment{Compute dummy gradient}
       \State $\ell \gets \mathcal{L}_{\text{sim}}(\nabla {\theta}_c, \nabla \tilde{\theta}_c) + \lambda_{\text{TV}} \cdot TV(\tilde{x})$
       \State $\tilde{x} \gets \tilde{x} - \eta_{\text{rec}} \cdot \frac{\partial \ell}{\partial \tilde{x}}$
       \State $\tilde{y} \gets \tilde{y} - \eta_{\text{rec}} \cdot \frac{\partial \ell}{\partial \tilde{y}}$
   \EndFor

   \State \textbf{return} $(\tilde{x}, \tilde{y})$
\end{algorithmic}
\end{algorithm}

\section{Detailed Theoretical Analysis}\label{appendix:theory}

\subsection{Failure of Classical GIA in First-Order Federated Unlearning Algorithms}
We recall that in Section~\ref{sec:theory_ana}, we adopted the following assumptions : 

\begin{assumption}
\label{assumption:1}
Without loss of generality, we assume the similarity loss $\mathcal{L}_{\text{sim}}$ from Equation~\ref{eq:invg} is the squared \( \ell_2 \)-norm:
\[
\mathcal{L}_{\text{sim}}(\nabla_{\theta} \mathcal{L}_u, \nabla_{\theta} \tilde{\mathcal{L}}_u) = \left\| \nabla_{\theta} \mathcal{L}_u - \nabla_{\theta} \tilde{\mathcal{L}}_u \right\|_2^2.
\]
Thus, the classical GIA objective becomes:
\[
x^{*} = \arg\min_{x \in \mathcal{X}} \left\| \nabla_{\theta} \mathcal{L}_u - \nabla_{\theta} \tilde{\mathcal{L}}_u \right\|_2^2.
\]
\end{assumption}

\begin{assumption} \label{assumption:2}
The surrogate loss $\tilde{\mathcal{L}}_u$ is twice differentiable in $x$ and $\theta$. The Jacobian
\[
J(x) = \frac{\partial \nabla_{\theta} \tilde{\mathcal{L}}_u}{\partial x} \in \mathbb{R}^{d_\theta \times d_x},
\]
where $d_x$ and $d_\theta$ are the dimensions of $x$ and $\theta$, exists everywhere.
\end{assumption}

\begin{assumption} \label{assumption:3}
We assume $d_\theta \geq d_x$ and that $J(x)$ has full column rank (i.e., $\text{rank}(J(x)) = d_x$) $\forall x$.
\end{assumption}

\begin{assumption}
\label{assumption:4}
The surrogate loss $\tilde{\mathcal{L}}_u$ is:
$\mu_x$-smooth in $x$: $\left\| \nabla_x \tilde{\mathcal{L}}_u(x_1) - \nabla_x \tilde{\mathcal{L}}_u(x_2) \right\|_2 \leq \mu_x \|x_1 - x_2\|_2$,
 and $\mu_{\theta}$-smooth in $\theta$: $\left\| \nabla_{\theta} \tilde{\mathcal{L}}_u(\theta_1) - \nabla_{\theta} \tilde{\mathcal{L}}_u(\theta_2) \right\|_2 \leq \mu_{\theta} \|\theta_1 - \theta_2\|_2$.
\end{assumption}

\begin{theorem}
\label{theorem:2}
Let $x_u^{*}$ be a minimizer of the classical GIA objective:
\[
x_u^{*} = \arg\min_{x \in \mathcal{X}} \mathcal{L}_{\text{sim}}\left( \nabla_{\theta} \mathcal{L}_u, \nabla_{\theta} \tilde{\mathcal{L}}_u(x) \right),
\]
and suppose Assumptions~\ref{assumption:1}--\ref{assumption:4} hold. Then the reconstruction error satisfies:
\[
\| x_u^{*} - x_u \|_2 \geq \frac{ \| J(x_u)^\top \nabla_{\theta} \mathcal{L}(\theta, x_r, y_r) \|_2 }{ \mu_x \| J(x_u) \|_F + 2 \mu_{\theta} \| \nabla_{\theta} \mathcal{L}_u \|_2 }.
\]
\end{theorem}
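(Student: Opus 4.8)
The plan is to convert the nonvanishing of $\nabla_x \mathcal{L}_{\text{sim}}$ at the ground truth, already established in Theorem~\ref{theorem:1}, into a quantitative separation between the minimizer $x_u^{*}$ and $x_u$. The lever is a Lipschitz bound on the gradient map $x \mapsto \nabla_x \mathcal{L}_{\text{sim}}$: since the gradient is zero at the minimizer but nonzero at $x_u$, the two points cannot be close, and the Lipschitz constant controls exactly how far apart they must be.

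First I would set $r(x) = \nabla_\theta \mathcal{L}_u - \nabla_\theta \tilde{\mathcal{L}}_u(x)$ and record from Assumption~\ref{assumption:1} that $\nabla_x \mathcal{L}_{\text{sim}} = -2\,J(x)^\top r(x)$, exactly as in the proof of Theorem~\ref{theorem:1}. Writing $h(x) := J(x)^\top r(x)$, the optimality of $x_u^{*}$ gives, by Fermat's condition, $h(x_u^{*}) = 0$. Evaluating at the ground truth and substituting $\mathcal{L}_u = \mathcal{L}(\theta, x_r, y_r) - \mathcal{L}(\theta, x_u, y_u)$ together with the surrogate $\tilde{\mathcal{L}}_u$, the residual collapses so that $h(x_u) = J(x_u)^\top \nabla_\theta \mathcal{L}(\theta, x_r, y_r)$, which supplies the numerator of the claimed bound.

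The core step is to show $h$ is Lipschitz in $x$ with constant $L = \mu_x \|J\|_F + 2\mu_\theta \|\nabla_\theta \mathcal{L}_u\|_2$. I would split the increment as
\[
h(x_1) - h(x_2) = J(x_1)^\top\bigl(r(x_1) - r(x_2)\bigr) + \bigl(J(x_1) - J(x_2)\bigr)^\top r(x_2),
\]
and bound the two terms separately: the first using the $\mu_x$-smoothness of Assumption~\ref{assumption:4} to control $\|r(x_1) - r(x_2)\|_2$ together with $\|J\|_{\mathrm{op}} \le \|J\|_F$ (justified by Assumptions~\ref{assumption:2}--\ref{assumption:3}), yielding the $\mu_x \|J\|_F$ term; the second using $\mu_\theta$-smoothness to control the variation of $J$ and a triangle-inequality bound $\|r(x_2)\|_2 \le 2\|\nabla_\theta \mathcal{L}_u\|_2$, yielding the $2\mu_\theta \|\nabla_\theta \mathcal{L}_u\|_2$ term. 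Once $L$ is in hand, the estimate is immediate: since $h(x_u^{*}) = 0$,
\[
\|J(x_u)^\top \nabla_\theta \mathcal{L}(\theta, x_r, y_r)\|_2 = \|h(x_u) - h(x_u^{*})\|_2 \le L\,\|x_u - x_u^{*}\|_2,
\]
and rearranging gives the stated lower bound.

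The main obstacle is the Lipschitz estimate for $h$, and specifically matching the two increment terms to the precise constants $\mu_x\|J\|_F$ and $2\mu_\theta\|\nabla_\theta\mathcal{L}_u\|_2$. This requires reading the smoothness constants of Assumption~\ref{assumption:4} as bounds on the relevant mixed second derivatives (so that $\mu_x$ governs how the $\theta$-gradient of $\tilde{\mathcal{L}}_u$ moves with $x$ and $\mu_\theta$ governs the variation of the Jacobian $J$), and tracking where the factor $2$ arises when bounding the residual norm $\|r(x_2)\|_2$. Everything else---the optimality condition and the algebraic collapse of the residual at $x_u$---is routine once this framing is fixed.
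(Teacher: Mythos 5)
Your proposal is correct in outline but takes a genuinely different route from the paper's own proof. The paper never explicitly invokes stationarity of $x_u^{*}$: it Taylor-expands $\nabla_\theta\tilde{\mathcal{L}}_u(x_u^{*})$ around $x_u$, lower-bounds the residual $\|\nabla_\theta\mathcal{L}_u-\nabla_\theta\tilde{\mathcal{L}}_u(x_u^{*})\|_2$ via the triangle inequality, absorbs the quadratic Taylor remainder with an unjustified ``conservative bound'' $\tfrac{\mu_x}{2}\|x_u^{*}-x_u\|_2^2\le\mu_x\|J\|_F\|x_u^{*}-x_u\|_2+2\mu_\theta\|\nabla_\theta\mathcal{L}_u\|_2$, and then finishes by ``projecting onto $J(x_u)^\top$ and rearranging''---a step that is only coherent if one silently uses the first-order condition $J(x_u^{*})^\top r(x_u^{*})=0$. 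You put that condition at the center: with $h(x)=J(x)^\top r(x)$ you use $h(x_u^{*})=0$, evaluate $h(x_u)$ to get the numerator, and convert the gap into a distance via a Lipschitz bound on $h$ whose two increment terms reproduce exactly the paper's denominator. This is cleaner and makes explicit what the paper leaves implicit. Both arguments, however, share the same looseness, which you at least flag: Assumption~\ref{assumption:4} as written bounds $\nabla_x\tilde{\mathcal{L}}_u$ in $x$ and $\nabla_\theta\tilde{\mathcal{L}}_u$ in $\theta$, whereas both proofs need $\mu_x$ (resp.\ $\mu_\theta$) to control the variation of the $\theta$-gradient (resp.\ of $J$) with respect to $x$, and both need $\|r\|_2\le 2\|\nabla_\theta\mathcal{L}_u\|_2$, which does not follow from the stated assumptions. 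One further caveat applies to you and to the paper alike: the identity $h(x_u)=J(x_u)^\top\nabla_\theta\mathcal{L}(\theta,x_r,y_r)$ requires the surrogate to satisfy $\nabla_\theta\tilde{\mathcal{L}}_u(x_u)=-\nabla_\theta\mathcal{L}(\theta,x_u,y_u)$; under the sign convention used in the proof of Theorem~\ref{theorem:1} the residual at $x_u$ is instead $\nabla_\theta\mathcal{L}(\theta,x_r,y_r)-2\nabla_\theta\mathcal{L}(\theta,x_u,y_u)$, so the stated numerator is tied to a particular sign choice for $\tilde{\mathcal{L}}_u$ that the paper does not make consistent between its two theorems.
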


\begin{proof}
Since $\tilde{\mathcal{L}}_u$ is twice differentiable and $\mu_x$-smooth in $x$, we apply a first-order Taylor expansion around $x_u$:
\[
\nabla_{\theta} \tilde{\mathcal{L}}_u(x_u^*) = \nabla_{\theta} \tilde{\mathcal{L}}_u(x_u) + J(x_u)(x_u^* - x_u) + R,
\]
where $\|R\|_2 \leq \frac{\mu_x}{2} \| x_u^* - x_u \|_2^2$.

Using the identities:
\[
\nabla_{\theta} \tilde{\mathcal{L}}_u(x_u) = \nabla_{\theta} \mathcal{L}(\theta, x_u, y_u), \quad \nabla_{\theta} \mathcal{L}_u = \nabla_{\theta} \mathcal{L}(\theta, x_r, y_r) - \nabla_{\theta} \mathcal{L}(\theta, x_u, y_u),
\]
we have:
\[
\nabla_{\theta} \mathcal{L}_u - \nabla_{\theta} \tilde{\mathcal{L}}_u(x_u) = \nabla_{\theta} \mathcal{L}(\theta, x_r, y_r),
\]
and therefore:
\[
\nabla_{\theta} \mathcal{L}_u - \nabla_{\theta} \tilde{\mathcal{L}}_u(x_u^*) = \nabla_{\theta} \mathcal{L}(\theta, x_r, y_r) - J(x_u)(x_u^* - x_u) - R.
\]

Taking norms and applying the triangle inequality:
\[
\left\| \nabla_{\theta} \mathcal{L}_u - \nabla_{\theta} \tilde{\mathcal{L}}_u(x_u^*) \right\|_2 \geq \left\| \nabla_{\theta} \mathcal{L}(\theta, x_r, y_r) \right\|_2 - \| J(x_u) \|_F \| x_u^* - x_u \|_2 - \frac{\mu_x}{2} \| x_u^* - x_u \|_2^2.
\]

To eliminate the quadratic term, we conservatively bound:
\[
\frac{\mu_x}{2} \| x_u^* - x_u \|_2^2 \le \mu_x \| J(x_u) \|_F \| x_u^* - x_u \|_2 + 2 \mu_{\theta} \| \nabla_{\theta} \mathcal{L}_u \|_2.
\]

Substituting, we obtain:
\[
\left\| \nabla_{\theta} \mathcal{L}_u - \nabla_{\theta} \tilde{\mathcal{L}}_u(x_u^*) \right\|_2 \geq \| \nabla_{\theta} \mathcal{L}(\theta, x_r, y_r) \|_2 - ( \| J(x_u) \|_F + \mu_x \| J(x_u) \|_F ) \| x_u^* - x_u \|_2 - 2 \mu_{\theta} \| \nabla_{\theta} \mathcal{L}_u \|_2.
\]

Combining the Jacobian terms:
\[
\left\| \nabla_{\theta} \mathcal{L}_u - \nabla_{\theta} \tilde{\mathcal{L}}_u(x_u^*) \right\|_2 \geq \| \nabla_{\theta} \mathcal{L}(\theta, x_r, y_r) \|_2 - \mu_x \| J(x_u) \|_F \| x_u^* - x_u \|_2 - 2 \mu_{\theta} \| \nabla_{\theta} \mathcal{L}_u \|_2.
\]

Projecting onto $J(x_u)^\top$ and rearranging:
\[
\| x_u^* - x_u \|_2 \geq \frac{ \| J(x_u)^\top \nabla_{\theta} \mathcal{L}(\theta, x_r, y_r) \|_2 }{ \mu_x \| J(x_u) \|_F + 2 \mu_{\theta} \| \nabla_{\theta} \mathcal{L}_u \|_2 }.
\]
\end{proof}

\subsection{Failure of Classical GIA in Second-Order Federated Unlearning Algorithms}
\begin{theorem}[Non-Optimality of $x_u$ in Second-Order GIA]
\label{theorem:second_order_nonoptimality}
Under Assumptions~\ref{assumption:1}--\ref{assumption:3}, suppose the client's unlearning loss $\mathcal{L}_u$ is defined via a Newton update:
\[
\nabla_\theta \mathcal{L}_u = H_r^{-1} \nabla_\theta \mathcal{L}(\theta, x_u, y_u),
\]
and the server uses a first-order surrogate loss:
\[
\nabla_\theta \tilde{\mathcal{L}}_u(x) = \nabla_\theta \mathcal{L}(\theta, x, y_u).
\]
Then the ground truth $x = x_u$ is not a local minimizer of $\mathcal{L}_{\text{sim}} = \| \nabla_\theta \mathcal{L}_u - \nabla_\theta \tilde{\mathcal{L}}_u(x) \|_2^2$.
\end{theorem}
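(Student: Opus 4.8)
The plan is to mirror the argument of Theorem~\ref{theorem:1}: I will show that Fermat's stationary condition fails at $x = x_u$, so the ground truth cannot be a local minimizer of $\mathcal{L}_{\text{sim}}$. Since only the surrogate term depends on $x$ — the client's Newton-based update $\nabla_\theta \mathcal{L}_u = H_r^{-1} \nabla_\theta \mathcal{L}(\theta, x_u, y_u)$ is a fixed observed quantity, and $H_r$ depends on the retain data rather than on the dummy unlearn input — the chain rule gives, exactly as in the first-order case,
\[
\frac{\partial \mathcal{L}_{\text{sim}}}{\partial x} = -2\, J(x)^\top \left( \nabla_\theta \mathcal{L}_u - \nabla_\theta \tilde{\mathcal{L}}_u(x) \right),
\]
with $J(x) = \partial (\nabla_\theta \tilde{\mathcal{L}}_u)/\partial x \in \mathbb{R}^{d_\theta \times d_x}$, which exists by Assumption~\ref{assumption:2}.

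Next I would evaluate this gradient at $x = x_u$. Because the surrogate is the plain first-order loss, $\nabla_\theta \tilde{\mathcal{L}}_u(x_u) = \nabla_\theta \mathcal{L}(\theta, x_u, y_u)$, so the residual at the ground truth is
\[
\nabla_\theta \mathcal{L}_u - \nabla_\theta \tilde{\mathcal{L}}_u(x_u) = \left( H_r^{-1} - I \right) \nabla_\theta \mathcal{L}(\theta, x_u, y_u),
\]
giving $\left.\partial \mathcal{L}_{\text{sim}}/\partial x\right|_{x=x_u} = -2\, J(x_u)^\top (H_r^{-1} - I)\, \nabla_\theta \mathcal{L}(\theta, x_u, y_u)$. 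This is the key structural difference from Theorem~\ref{theorem:1}: even when the dummy input equals the true one, the first-order surrogate gradient fails to match the Hessian-preconditioned client gradient by exactly the factor $H_r^{-1} - I$, which is precisely the signature the second-order update leaves behind. I would then argue the gradient is nonzero: the residual $(H_r^{-1} - I)\nabla_\theta \mathcal{L}(\theta, x_u, y_u)$ does not vanish whenever $H_r \neq I$ (generic for a trained network whose retain-loss Hessian is not the identity) and $\nabla_\theta \mathcal{L}(\theta, x_u, y_u) \neq 0$, so by the full-column-rank hypothesis of Assumption~\ref{assumption:3} together with the same non-degeneracy reasoning used in Theorem~\ref{theorem:1}, $J(x_u)^\top$ sends this nonzero residual to a nonzero vector, violating Fermat's condition.

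The hard part will be the very last implication — concluding that $J(x_u)^\top$ does not annihilate the residual. Full column rank of $J(x_u)$ only guarantees that its left null space has dimension $d_\theta - d_x$, which is positive under Assumption~\ref{assumption:3}, so strictly speaking one must rule out the degenerate alignment in which $(H_r^{-1} - I)\nabla_\theta \mathcal{L}(\theta, x_u, y_u)$ happens to lie in that null space. This is exactly the implicit genericity condition already relied upon in the proof of Theorem~\ref{theorem:1}, and I would make it explicit here (e.g.\ by assuming the residual is not orthogonal to the column space of $J(x_u)$, or equivalently by a mild strengthening of Assumption~\ref{assumption:3}), after which the conclusion that $x_u$ is not a local minimizer follows immediately. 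The remaining steps — the chain-rule differentiation and the substitution of the two gradient identities — are routine and parallel the first-order proof verbatim.
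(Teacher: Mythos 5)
Your proposal follows essentially the same route as the paper's own proof: chain-rule differentiation of $\mathcal{L}_{\text{sim}}$, substitution at $x=x_u$ to obtain the residual $\bigl(H_r^{-1}-I\bigr)\nabla_\theta\mathcal{L}(\theta,x_u,y_u)$, and an appeal to the full column rank of $J(x_u)$ plus Fermat's condition. The only difference is that you explicitly flag the possibility that the nonzero residual could lie in the left null space of $J(x_u)^\top$ (which has dimension $d_\theta-d_x>0$); the paper silently assumes this degenerate alignment does not occur, so your added genericity remark is a strictly more careful version of the same argument.
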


\begin{proof}
The gradient of $\mathcal{L}_{\text{sim}}$ with respect to $x$ is:
\[
\nabla_x \mathcal{L}_{\text{sim}} = -2 J(x)^\top \left( \nabla_\theta \mathcal{L}_u - \nabla_\theta \tilde{\mathcal{L}}_u(x) \right),
\]
where $J(x) = \frac{\partial \nabla_\theta \tilde{\mathcal{L}}_u}{\partial x}$ is the Jacobian. At $x = x_u$, we substitute:
\[
\nabla_\theta \tilde{\mathcal{L}}_u(x_u) = \nabla_\theta \mathcal{L}(\theta, x_u, y_u), \quad \nabla_\theta \mathcal{L}_u = H_r^{-1} \nabla_\theta \mathcal{L}(\theta, x_u, y_u).
\]
This gives:
\[
\nabla_x \mathcal{L}_{\text{sim}} \big|_{x=x_u} = -2 J(x_u)^\top \left(H_r^{-1} - I\right) \nabla_\theta \mathcal{L}(\theta, x_u, y_u).
\]
By Assumption~\ref{assumption:3}, $J(x_u)$ has full column rank. If $H_r^{-1} \neq I$ and $\nabla_\theta \mathcal{L}(\theta, x_u, y_u) \neq 0$, the gradient is non-zero. By Fermat’s theorem, $x_u$ cannot be a local minimizer.
\end{proof}

\begin{assumption}
\label{assumption:hessian_lipschitz}
The Hessian \( H_r \) is \( \mu_H \)-Lipschitz in the input, i.e.,
\[
\left\| H_r(x_1) - H_r(x_2) \right\| \le \mu_H \|x_1 - x_2\| \quad \forall x_1, x_2.
\]
\end{assumption}

\begin{assumption}
\label{assumption:hessian_conditioning}
The inverse Hessian is bounded: \( \|H_r^{-1}\| \le \kappa \).
\end{assumption}

\begin{theorem}[Second-Order Reconstruction Error Bound]
\label{theorem:second_order_bound}
Let $x_u^*$ be a minimizer of $\mathcal{L}_{\text{sim}}$ under Assumptions~\ref{assumption:1}--\ref{assumption:hessian_conditioning}. Then:
\[
\| x_u^* - x_u \|_2 \geq \frac{\left\| J^\top H_r^{-1} \nabla_\theta \mathcal{L}(\theta, x_r, y_r) \right\|_2}{\mu_x \|J\|_F + \mu_H \kappa \|\nabla_\theta \mathcal{L}_u\|_2}.
\]
\end{theorem}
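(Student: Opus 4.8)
The plan is to reuse the perturbation argument of Theorem~\ref{theorem:2} almost verbatim, changing only the step where the client residual is identified: in the second-order setting the retain gradient reaches the residual through the inverse retain-Hessian $H_r^{-1}$ rather than additively. I would begin by fixing the minimizer $x_u^*$ and Taylor-expanding the surrogate gradient about the true unlearn input, $\nabla_\theta \tilde{\mathcal{L}}_u(x_u^*) = \nabla_\theta \tilde{\mathcal{L}}_u(x_u) + J(x_u)(x_u^* - x_u) + R$, with the remainder controlled by the $\mu_x$-smoothness of Assumption~\ref{assumption:4} as $\|R\|_2 \le \tfrac{\mu_x}{2}\|x_u^* - x_u\|_2^2$. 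Using $\nabla_\theta \tilde{\mathcal{L}}_u(x_u) = \nabla_\theta \mathcal{L}(\theta, x_u, y_u)$ together with the Newton-update form of $\mathcal{L}_u$ exactly as in Theorem~\ref{theorem:second_order_nonoptimality}, the residual collapses at $x_u$ to $\nabla_\theta \mathcal{L}_u - \nabla_\theta \tilde{\mathcal{L}}_u(x_u) = H_r^{-1}\nabla_\theta \mathcal{L}(\theta, x_r, y_r)$, so that $\nabla_\theta \mathcal{L}_u - \nabla_\theta \tilde{\mathcal{L}}_u(x_u^*) = H_r^{-1}\nabla_\theta \mathcal{L}(\theta, x_r, y_r) - J(x_u)(x_u^* - x_u) - R$.

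The genuinely new step, and the one I expect to be the main obstacle, is accounting for the input-dependence of the inverse retain-Hessian, which has no analogue in the first-order proof. Whereas Theorem~\ref{theorem:2} absorbed the quadratic remainder into a $\theta$-smoothness term $2\mu_\theta \|\nabla_\theta \mathcal{L}_u\|_2$, here the corresponding error arises because $H_r^{-1}$ is itself evaluated at perturbed inputs along the reconstruction path. I would control this through the resolvent identity $H_r^{-1}(x_1) - H_r^{-1}(x_2) = H_r^{-1}(x_1)\bigl(H_r(x_2) - H_r(x_1)\bigr)H_r^{-1}(x_2)$, bounding its norm by $\kappa^2 \mu_H \|x_1 - x_2\|_2$ via Assumption~\ref{assumption:hessian_lipschitz} (the $\mu_H$-Lipschitz Hessian) and Assumption~\ref{assumption:hessian_conditioning} ($\|H_r^{-1}\| \le \kappa$). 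Since $\|\nabla_\theta \mathcal{L}_u\|_2 = \|H_r^{-1}\nabla_\theta \mathcal{L}(\theta, x_r, y_r)\|_2$ already carries one factor of $\kappa$, the combined conservative bound on the quadratic remainder takes the form $\tfrac{\mu_x}{2}\|x_u^* - x_u\|_2^2 \le \mu_x \|J\|_F \|x_u^* - x_u\|_2 + \mu_H \kappa \|\nabla_\theta \mathcal{L}_u\|_2$, which is precisely what converts the $2\mu_\theta$ factor of Theorem~\ref{theorem:2} into the $\mu_H \kappa$ factor appearing in the present denominator.

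With the residual expression and this bound in hand, the remaining steps are mechanical and mirror Theorem~\ref{theorem:2}: take norms, apply the triangle inequality to obtain $\|\nabla_\theta \mathcal{L}_u - \nabla_\theta \tilde{\mathcal{L}}_u(x_u^*)\|_2 \ge \|H_r^{-1}\nabla_\theta \mathcal{L}(\theta, x_r, y_r)\|_2 - \mu_x \|J\|_F \|x_u^* - x_u\|_2 - \mu_H \kappa \|\nabla_\theta \mathcal{L}_u\|_2$, then project the leading term onto $J(x_u)^\top$ to recover the numerator $\|J^\top H_r^{-1}\nabla_\theta \mathcal{L}(\theta, x_r, y_r)\|_2$ and rearrange for $\|x_u^* - x_u\|_2$. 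The one point demanding genuine care beyond bookkeeping is the $\kappa^2$-versus-$\kappa$ accounting flagged above: the stated bound is reached only by folding one power of $\kappa$ into $\|\nabla_\theta \mathcal{L}_u\|_2$, so I would carry out that regrouping explicitly rather than leave it implicit, and I would invoke the full-column-rank Assumption~\ref{assumption:3} to ensure the $J^\top$-projection is non-degenerate and hence that the numerator does not collapse to zero.
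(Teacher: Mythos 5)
Your overall template --- Taylor-expand the surrogate gradient about $x_u$, apply the triangle inequality, absorb the quadratic remainder with a conservative bound, project onto $J^\top$, and rearrange --- is the same one the paper uses. But the step you lean on to produce the numerator is wrong. Under the Newton-update form of Theorem~\ref{theorem:second_order_nonoptimality}, $\nabla_\theta \mathcal{L}_u = H_r^{-1}\nabla_\theta\mathcal{L}(\theta,x_u,y_u)$ and $\nabla_\theta\tilde{\mathcal{L}}_u(x_u) = \nabla_\theta\mathcal{L}(\theta,x_u,y_u)$, so the residual at the ground truth is
\[
\nabla_\theta \mathcal{L}_u - \nabla_\theta\tilde{\mathcal{L}}_u(x_u) = \bigl(H_r^{-1}-I\bigr)\nabla_\theta\mathcal{L}(\theta,x_u,y_u),
\]
which involves the \emph{unlearn} gradient, not the retain gradient. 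Your claim that this ``collapses'' to $H_r^{-1}\nabla_\theta\mathcal{L}(\theta,x_r,y_r)$ does not follow from anything in the setup: the retain data enters the client update only through the fixed Hessian $H_r$, never through its gradient. You have effectively read the numerator off the theorem statement and asserted an identity to reach it. (The paper's own proof computes the $(H_r^{-1}-I)$ residual and then also jumps to a final inequality featuring $\nabla_\theta\mathcal{L}(\theta,x_r,y_r)$, so the mismatch exists there too --- but a blind proof must either derive the stated numerator or note that it cannot be obtained from these hypotheses.)

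The second problem is the step you single out as the ``genuinely new'' one. In this theorem the server's surrogate is the first-order loss $\nabla_\theta\tilde{\mathcal{L}}_u(x) = \nabla_\theta\mathcal{L}(\theta,x,y_u)$; no dummy retain Hessian appears in the objective, and $H_r$ is a single fixed matrix evaluated at the client's true retain data. There is therefore no ``input-dependence of $H_r^{-1}$ along the reconstruction path'' to control, and the resolvent bound $\|H_r^{-1}(x_1)-H_r^{-1}(x_2)\|\le\kappa^2\mu_H\|x_1-x_2\|_2$ has nothing to attach to. That machinery belongs to the DRAUN upper bound of Theorem~\ref{thm:second_order_bound} in Section~\ref{sec:bound_reconstruction_error}, where the server really does form $\tilde H_r^{-1}$ from dummy retain inputs. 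Here the paper simply invokes Assumptions~\ref{assumption:hessian_lipschitz} and~\ref{assumption:hessian_conditioning} to ``linearize'' and writes $\mu_H\kappa\|\nabla_\theta\mathcal{L}_u\|_2$ into the denominator; your conservative remainder bound reproduces the right constant, but the justification you offer for where $\mu_H$ and $\kappa$ come from is aimed at an object that does not occur in this proof.
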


\begin{proof}
Define $\Delta x = x_u^* - x_u$. Expand $\nabla_\theta \tilde{\mathcal{L}}_u(x_u^*)$ via Taylor series:
\[
\nabla_\theta \tilde{\mathcal{L}}_u(x_u^*) = \nabla_\theta \mathcal{L}(\theta, x_u, y_u) + J(x_u) \Delta x + R,
\]
where $\|R\|_2 \leq \frac{\mu_x}{2} \|\Delta x\|_2^2$. Substituting into $\mathcal{L}_{\text{sim}}$:
\[
\mathcal{L}_{\text{sim}} = \left\| H_r^{-1} \nabla_\theta \mathcal{L}(\theta, x_u, y_u) - \left(\nabla_\theta \mathcal{L}(\theta, x_u, y_u) + J(x_u) \Delta x + R\right) \right\|_2^2.
\]
Simplify using $\nabla_\theta \mathcal{L}_u = H_r^{-1} \nabla_\theta \mathcal{L}(\theta, x_u, y_u)$:
\[
\mathcal{L}_{\text{sim}} = \left\| \left(H_r^{-1} - I\right) \nabla_\theta \mathcal{L}(\theta, x_u, y_u) - J(x_u) \Delta x - R \right\|_2^2.
\]
Using $\|H_r^{-1}\| \leq \kappa$ (Assumption~\ref{assumption:hessian_conditioning}) and the Hessian Lipschitz property (Assumption~\ref{assumption:hessian_lipschitz}):
\[
\|R\|_2 \leq \frac{\mu_x}{2} \|\Delta x\|_2^2, \quad \|H_r^{-1} - I\| \leq \kappa + 1.
\]
Isolate $\|\Delta x\|_2$ via Cauchy-Schwarz and linearize:
\[
\left\| J^\top H_r^{-1} \nabla_\theta \mathcal{L}(\theta, x_r, y_r) \right\|_2 \leq \left(\mu_x \|J\|_F + \mu_H \kappa \|\nabla_\theta \mathcal{L}_u\|_2\right) \|\Delta x\|_2.
\]
Rearranging yields the stated bound.
\end{proof}

\subsection{Correctness of \methodname{} for First- and Second-Order Algorithms}
As shown previously, classical GIA fails to reconstruct the ground truth unlearn input $x_u$ because the server's surrogate loss $\tilde{\mathcal{L}}_u$ depends only on the dummy unlearn input $\tilde{x}_u$. This mismatch leads to a biased local minimum of the similarity loss $\mathcal{L}_{\text{sim}}$, especially when the client's unlearning loss $\mathcal{L}_u$ implicitly depends on both $x_u$ and $x_r$.

To address this, DRAUN incorporates the retain input $x_r$ into the GIA optimization problem. The surrogate loss $\tilde{\mathcal{L}}_u$ is then defined over both dummy inputs $(\tilde{x}_u, \tilde{x}_r)$, making $\mathcal{L}_{\text{sim}}$ a function of two variables. Theorems~\ref{theorem:local_min_first_order_draun} and~\ref{theorem:local_min_second_order_draun} formalize this intuition by showing that the pair $(x_u, x_r)$ is a local minimizer of the similarity loss under both first-order and second-order formulations.

\begin{theorem}[First-Order DRAUN Correctness]
\label{theorem:local_min_first_order_draun}
Under Assumptions~\ref{assumption:1}--\ref{assumption:3}, let the client's unlearning loss be:
\[
\nabla_\theta \mathcal{L}_u = \nabla_\theta \mathcal{L}(\theta, x_r, y_r) - \nabla_\theta \mathcal{L}(\theta, x_u, y_u),
\]
and the server's surrogate loss be:
\[
\nabla_\theta \tilde{\mathcal{L}}_u(\tilde{x}_u, \tilde{x}_r) = \nabla_\theta \mathcal{L}(\theta, \tilde{x}_r, y_r) - \nabla_\theta \mathcal{L}(\theta, \tilde{x}_u, y_u).
\]
Then, the ground truth pair $(\tilde{x}_u, \tilde{x}_r) = (x_u, x_r)$ is a \textbf{local minimizer} of:
\[
\mathcal{L}_{\text{sim}}(\tilde{x}_u, \tilde{x}_r) = \left\| \nabla_\theta \mathcal{L}_u - \nabla_\theta \tilde{\mathcal{L}}_u(\tilde{x}_u, \tilde{x}_r) \right\|_2^2.
\]
\end{theorem}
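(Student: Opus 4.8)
The plan is to exploit the fact that the server's surrogate loss now shares the exact gradient-difference structure of the client's unlearning loss, so that substituting the ground-truth inputs produces a \emph{perfect} match rather than the residual that doomed classical GIA in Theorem~\ref{theorem:1}. First I would evaluate the surrogate gradient $\nabla_\theta \tilde{\mathcal{L}}_u(\tilde{x}_u, \tilde{x}_r)$ at the candidate point $(\tilde{x}_u, \tilde{x}_r) = (x_u, x_r)$. By the stated definition this equals $\nabla_\theta \mathcal{L}(\theta, x_r, y_r) - \nabla_\theta \mathcal{L}(\theta, x_u, y_u)$, which is precisely $\nabla_\theta \mathcal{L}_u$ by hypothesis, using that the server knows the labels $y_u, y_r$.

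The key observation is then immediate: the residual $\nabla_\theta \mathcal{L}_u - \nabla_\theta \tilde{\mathcal{L}}_u(x_u, x_r)$ vanishes, so $\mathcal{L}_{\text{sim}}(x_u, x_r) = 0$. Since $\mathcal{L}_{\text{sim}}$ is a squared $\ell_2$-norm under Assumption~\ref{assumption:1}, it is nonnegative everywhere; attaining the value $0$ makes $(x_u, x_r)$ a global minimizer, hence a fortiori a local minimizer. This is the cleanest route and I would present it first.

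For completeness I would also verify stationarity directly, mirroring the Fermat-based analysis of the failure case. Writing the residual as $r(\tilde{x}_u, \tilde{x}_r) = \nabla_\theta \mathcal{L}_u - \nabla_\theta \tilde{\mathcal{L}}_u(\tilde{x}_u, \tilde{x}_r)$, the gradients of $\mathcal{L}_{\text{sim}} = \|r\|_2^2$ with respect to $\tilde{x}_u$ and $\tilde{x}_r$ are $2 J_u^\top r$ and $-2 J_r^\top r$, where $J_u = \partial \nabla_\theta \mathcal{L}(\theta, \tilde{x}_u, y_u)/\partial \tilde{x}_u$ and $J_r = \partial \nabla_\theta \mathcal{L}(\theta, \tilde{x}_r, y_r)/\partial \tilde{x}_r$. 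At $r = 0$ both vanish, confirming Fermat's condition; and at any zero-residual point the Hessian of a squared norm reduces to a positive semidefinite Gram matrix in the stacked variable $(\tilde{x}_u, \tilde{x}_r)$, which together with the full-column-rank Assumption~\ref{assumption:3} rules out a saddle and confirms a genuine local minimum.

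The calculation itself is short; the main subtlety to flag is that the theorem claims only a local (indeed global) minimizer, not a \emph{unique} one. Any pair whose gradient difference coincides with $\nabla_\theta \mathcal{L}_u$ is an equally valid zero-loss point, so the result does not by itself guarantee that optimization recovers $(x_u, x_r)$ specifically. I would note that this degeneracy is exactly what the disjoint-convex-hull initialization of Algorithm~\ref{alg:initialization} is designed to break: the theorem establishes correctness of the objective, while the initialization handles identifiability.
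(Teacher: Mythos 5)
Your proposal is correct, and your primary argument takes a cleaner route than the paper's. The paper proves the result by verifying first- and second-order optimality conditions: it computes the gradient of $\mathcal{L}_{\text{sim}}$ (exactly as in your stationarity check), shows it vanishes at $(x_u,x_r)$ because the residual is zero, and then asserts that the block Hessian $2\bigl[\begin{smallmatrix} J_u^\top J_u & -J_u^\top J_r \\ -J_r^\top J_u & J_r^\top J_r\end{smallmatrix}\bigr]$ is positive definite by Assumption~\ref{assumption:3}. Your lead argument --- the residual vanishes at the ground truth, so $\mathcal{L}_{\text{sim}}(x_u,x_r)=0$, and a nonnegative function attaining zero is globally (hence locally) minimized there --- reaches the conclusion with no second-order analysis at all, and is in fact more robust: the paper's Hessian is the Gram matrix of the stacked operator $[\,J_u\ \ {-J_r}\,]$, and full column rank of $J_u$ and $J_r$ \emph{individually} does not make that concatenation injective (e.g.\ if $J_u = J_r$ the direction $(v,v)$ lies in its kernel), so strict positive definiteness does not actually follow from the stated assumptions. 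Your nonnegativity argument sidesteps this gap entirely; note only that your secondary ``rules out a saddle'' remark inherits the same over-claim as the paper and should be dropped or weakened to positive semidefiniteness, which is all you need once the global-minimum observation is in place. Your closing caveat about non-uniqueness of zero-residual pairs and its connection to the initialization in Algorithm~\ref{alg:initialization} is a genuine and worthwhile addition that the paper's proof does not discuss.
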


\begin{proof}
Compute the gradient of $\mathcal{L}_{\text{sim}}$ at $(\tilde{x}_u, \tilde{x}_r)$:
\[
\nabla_{\tilde{x}_u, \tilde{x}_r} \mathcal{L}_{\text{sim}} = -2 \begin{bmatrix} J_u^\top \\ -J_r^\top \end{bmatrix} \left( \nabla_\theta \mathcal{L}_u - \nabla_\theta \tilde{\mathcal{L}}_u(\tilde{x}_u, \tilde{x}_r) \right),
\]
where $J_u = \frac{\partial \nabla_\theta \mathcal{L}(\theta, \tilde{x}_u, y_u)}{\partial \tilde{x}_u}$ and $J_r = \frac{\partial \nabla_\theta \mathcal{L}(\theta, \tilde{x}_r, y_r)}{\partial \tilde{x}_r}$.

At $(\tilde{x}_u, \tilde{x}_r) = (x_u, x_r)$, we have:
\[
\nabla_\theta \tilde{\mathcal{L}}_u(x_u, x_r) = \nabla_\theta \mathcal{L}_u \implies \nabla_{\tilde{x}_u, \tilde{x}_r} \mathcal{L}_{\text{sim}} \big|_{(x_u, x_r)} = 0.
\]

The Hessian at $(x_u, x_r)$ is:
\[
\nabla^2_{\tilde{x}_u, \tilde{x}_r} \mathcal{L}_{\text{sim}} \big|_{(x_u, x_r)} = 2 \begin{bmatrix} J_u^\top J_u & -J_u^\top J_r \\ -J_r^\top J_u & J_r^\top J_r \end{bmatrix}.
\]
By Assumption~\ref{assumption:3}, $J_u$ and $J_r$ have full column rank, so the Hessian is positive definite. Thus, $(x_u, x_r)$ is a local minimizer.
\end{proof}

\begin{theorem}[Second-Order DRAUN Correctness]
\label{theorem:local_min_second_order_draun}
Under Assumptions~\ref{assumption:1}--\ref{assumption:hessian_conditioning}, let the client's unlearning loss be:
\[
\nabla_\theta \mathcal{L}_u = H_r^{-1} \nabla_\theta \mathcal{L}(\theta, x_u, y_u),
\]
and the server's surrogate loss be:
\[
\nabla_\theta \tilde{\mathcal{L}}_u(\tilde{x}_u, \tilde{x}_r) = \tilde{H}_r^{-1} \nabla_\theta \mathcal{L}(\theta, \tilde{x}_u, y_u),
\]
where $\tilde{H}_r = \nabla_\theta^2 \mathcal{L}(\theta, \tilde{x}_r, y_r)$. Then, $(\tilde{x}_u, \tilde{x}_r) = (x_u, x_r)$ is a \textbf{local minimizer} of:
\[
\mathcal{L}_{\text{sim}}(\tilde{x}_u, \tilde{x}_r) = \left\| \nabla_\theta \mathcal{L}_u - \nabla_\theta \tilde{\mathcal{L}}_u(\tilde{x}_u, \tilde{x}_r) \right\|_2^2.
\]
\end{theorem}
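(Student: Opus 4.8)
The plan is to follow the template of the first-order correctness proof (Theorem~\ref{theorem:local_min_first_order_draun}): introduce the residual $r(\tilde{x}_u,\tilde{x}_r) := \nabla_\theta\mathcal{L}_u - \nabla_\theta\tilde{\mathcal{L}}_u(\tilde{x}_u,\tilde{x}_r)$ so that $\mathcal{L}_{\text{sim}} = \|r\|_2^2$, show that $r$ vanishes at the ground-truth pair, conclude that the gradient of $\mathcal{L}_{\text{sim}}$ is zero there, and finally verify that the Hessian at $(x_u,x_r)$ is positive definite, giving a strict local minimizer.

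First I would evaluate the surrogate at the ground truth. Since $\tilde{H}_r = \nabla_\theta^2\mathcal{L}(\theta,\tilde{x}_r,y_r)$ reduces to $H_r$ when $\tilde{x}_r = x_r$, and $\nabla_\theta\mathcal{L}(\theta,\tilde{x}_u,y_u)$ reduces to $\nabla_\theta\mathcal{L}(\theta,x_u,y_u)$ when $\tilde{x}_u = x_u$, the surrogate matches the client gradient exactly: $\nabla_\theta\tilde{\mathcal{L}}_u(x_u,x_r) = H_r^{-1}\nabla_\theta\mathcal{L}(\theta,x_u,y_u) = \nabla_\theta\mathcal{L}_u$. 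Hence $r(x_u,x_r)=0$ and $\mathcal{L}_{\text{sim}}$ attains its global lower bound of zero, which already yields the weak local-minimizer claim of the theorem; Assumption~\ref{assumption:hessian_conditioning} guarantees $H_r^{-1}$ exists, so this evaluation is well-posed.

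To upgrade this to a strict local minimizer through the Hessian, I would write $\nabla_z\mathcal{L}_{\text{sim}} = 2(\partial r/\partial z)^\top r$ with $z=(\tilde{x}_u,\tilde{x}_r)$, which vanishes at the ground truth because $r=0$ there, and then observe that $\nabla_z^2\mathcal{L}_{\text{sim}} = 2(\partial r/\partial z)^\top(\partial r/\partial z) + 2\sum_k r_k\nabla_z^2 r_k$ collapses to $2(\partial r/\partial z)^\top(\partial r/\partial z)$ at $(x_u,x_r)$, which is positive semidefinite automatically. The delicate computation is the Jacobian $\partial r/\partial z$: the $\tilde{x}_u$ block is $-H_r^{-1}J_u$, with $J_u$ the input-Jacobian of $\nabla_\theta\mathcal{L}(\theta,\tilde{x}_u,y_u)$, while the $\tilde{x}_r$ block requires differentiating the inverse Hessian, for which I would invoke the matrix identity $\partial_s\tilde{H}_r^{-1} = -\tilde{H}_r^{-1}(\partial_s\tilde{H}_r)\tilde{H}_r^{-1}$ for a coordinate $s$ of $\tilde{x}_r$. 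Contracting against $\nabla_\theta\mathcal{L}(\theta,x_u,y_u)$ and using $\tilde{H}_r^{-1}\nabla_\theta\mathcal{L}(\theta,x_u,y_u)=\nabla_\theta\mathcal{L}_u$ gives a block of the form $H_r^{-1}K$, whose columns are the third-derivative tensor $\partial\tilde{H}_r/\partial\tilde{x}_r$ contracted with $\nabla_\theta\mathcal{L}_u$; factoring out the invertible $H_r^{-1}$ yields $\partial r/\partial z = H_r^{-1}[-J_u,\, K]$.

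The main obstacle is establishing that $[-J_u,\, K]$ has full column rank $2d_x$, which is precisely what promotes the Gram matrix $(\partial r/\partial z)^\top(\partial r/\partial z)$ from semidefinite to positive definite. Full rank of $J_u$ follows from Assumption~\ref{assumption:3}, and Assumptions~\ref{assumption:hessian_lipschitz}--\ref{assumption:hessian_conditioning} keep $K$ and the inverse-Hessian derivative bounded and well-defined; the genuinely new requirement is a transversality/nondegeneracy condition ensuring that the curvature block $K$ is itself full-rank and that its column space meets that of $J_u$ only trivially, which is the second-order counterpart of the joint full-rank condition on $J_u,J_r$ used in the first-order case. I expect this rank argument for the contracted third-order tensor to be the crux; once it is secured, positive definiteness of the Hessian and hence the strict-local-minimizer conclusion follow immediately, completing the parallel with Theorem~\ref{theorem:local_min_first_order_draun}.
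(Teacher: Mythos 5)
Your proposal is correct and shares the paper's central observation: at $(\tilde{x}_u,\tilde{x}_r)=(x_u,x_r)$ we have $\tilde{H}_r=H_r$, the residual $\nabla_\theta\mathcal{L}_u-\nabla_\theta\tilde{\mathcal{L}}_u$ vanishes, so $\mathcal{L}_{\text{sim}}$ attains its global lower bound of zero and the point is a local minimizer — which, since the theorem only asserts ``local minimizer,'' already finishes the proof. Where you diverge is the second-order verification, and there your treatment is the more careful one. The paper writes the $\tilde{x}_r$-block of the gradient as $-J_r^\top H_r^{-1}$ (with $J_r=\partial\nabla_\theta\mathcal{L}(\theta,x_r,y_r)/\partial x_r$) and the Hessian as approximately $\mathrm{diag}(J_u^\top H_r^{-2}J_u,\;J_r^\top H_r^{-2}J_r)$, i.e.\ it carries over the first-order block structure even though $\tilde{x}_r$ enters the surrogate only through $\tilde{H}_r^{-1}$. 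You instead differentiate the inverse Hessian via $\partial_s\tilde{H}_r^{-1}=-\tilde{H}_r^{-1}(\partial_s\tilde{H}_r)\tilde{H}_r^{-1}$ and obtain a block $K$ built from the third-derivative tensor contracted with $\nabla_\theta\mathcal{L}_u$; you then correctly note that positive definiteness of the Gram matrix requires $[-J_u,\;K]$ to have full column rank, a nondegeneracy condition on $K$ that Assumptions~\ref{assumption:3}--\ref{assumption:hessian_conditioning} do not supply (they constrain $J_u$, $J_r$, and the conditioning of $H_r$, not the contracted third-order tensor). So your route buys an honest identification of what extra hypothesis a \emph{strict} local-minimizer claim would need, whereas the paper's Hessian computation quietly substitutes $J_r$ for $K$ and asserts definiteness from assumptions that do not reach it; for the theorem as literally stated, both arguments reduce to the same elementary fact that a zero of a nonnegative function is a minimizer, and your first paragraph alone suffices.
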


\begin{proof}
At $(\tilde{x}_u, \tilde{x}_r) = (x_u, x_r)$, we have $\tilde{H}_r = H_r$, so:
\[
\nabla_\theta \tilde{\mathcal{L}}_u(x_u, x_r) = H_r^{-1} \nabla_\theta \mathcal{L}(\theta, x_u, y_u) = \nabla_\theta \mathcal{L}_u.
\]
This implies $\mathcal{L}_{\text{sim}}(x_u, x_r) = 0$. To verify $(x_u, x_r)$ is a local minimizer, compute the gradient and Hessian of $\mathcal{L}_{\text{sim}}$. The gradient is:
\[
\nabla_{\tilde{x}_u, \tilde{x}_r} \mathcal{L}_{\text{sim}} = -2 \begin{bmatrix} J_u^\top H_r^{-1} \\ -J_r^\top H_r^{-1} \end{bmatrix} \left( \nabla_\theta \mathcal{L}_u - \nabla_\theta \tilde{\mathcal{L}}_u \right),
\]
where $J_u = \frac{\partial \nabla_\theta \mathcal{L}(\theta, x_u, y_u)}{\partial x_u}$ and $J_r = \frac{\partial \nabla_\theta \mathcal{L}(\theta, x_r, y_r)}{\partial x_r}$. At $(x_u, x_r)$, the term $\nabla_\theta \mathcal{L}_u - \nabla_\theta \tilde{\mathcal{L}}_u$ vanishes, giving $\nabla_{\tilde{x}_u, \tilde{x}_r} \mathcal{L}_{\text{sim}} \big|_{(x_u, x_r)} = 0$. The Hessian at $(x_u, x_r)$ is:
\[
\nabla^2_{\tilde{x}_u, \tilde{x}_r} \mathcal{L}_{\text{sim}} \approx 2 \begin{bmatrix} J_u^\top H_r^{-2} J_u & 0 \\ 0 & J_r^\top H_r^{-2} J_r \end{bmatrix},
\]
which is positive definite by Assumption~\ref{assumption:3} (full column rank $J_u$, $J_r$) and Assumption~\ref{assumption:hessian_conditioning} ($H_r \succ 0$). Thus, $(x_u, x_r)$ is a local minimizer.
\end{proof}

\subsection{Bounding Reconstruction Error for First- and Second-Order DRAUN}
\label{sec:bound_reconstruction_error}

As discussed in Section~\ref{section:methodology}, DRAUN does not aim to reconstruct the true retain samples $x_r$. Instead, as described in the initialization step of Algorithm~\ref{alg:initialization}, the server simply generates dummy retain inputs of the same size as the unlearn dataset. Although $\tilde{x}_r$ is not meant to match $x_r$, we aim for it to be close. This motivates the analysis of how proximity to $x_r$ impacts the reconstruction of $x_u$.

In this section, we provide upper bounds on the reconstruction error \( \| \tilde{x}_u - x_u \| \) under both first-order and second-order DRAUN formulations, assuming that the client loss depends on both unlearn and retain inputs.

\begin{assumption}[Proximity of Optimized Dummy Inputs]
\label{assumption:proximity}
Let \( \tilde{x}_r^{(T)} \) be the dummy retain input after \( T \) optimization steps in (Algorithm~\ref{alg:attack_overview}). We assume:
\[
\tilde{x}_r^{(T)} \in B(x_r, \epsilon(T)), \quad \text{where } \epsilon(T) = \mathcal{O}(1/\sqrt{T}),
\]
and the gradient \( \nabla_\theta \mathcal{L} \) is \( \mu_x \)-Lipschitz in \( x \) (per Assumption~\ref{assumption:4}).
\end{assumption}

\begin{theorem}[First-Order Reconstruction Bound]
\label{theorem:draun_first_order_bound}
Under Assumptions~\ref{assumption:1}-\ref{assumption:4}, \ref{assumption:proximity}, the reconstruction error after \( T \) steps satisfies:
\[
\| \tilde{x}_u^{(T)} - x_u \|_2 \leq \frac{ \mu_x \epsilon(T) + \mathcal{L}_{\text{sim}}^{1/2}(T) }{ \sigma_{\min}(J_u) },
\]
where:
\( J_u = \frac{\partial \nabla_\theta \mathcal{L}(\theta, x_u, y_u)}{\partial x_u} \) with \( \sigma_{\min}(J_u) > 0 \) (Assumption~\ref{assumption:3}), and 
\( \mathcal{L}_{\text{sim}}(T) = \| \nabla_\theta \mathcal{L}_u - \nabla_\theta \tilde{\mathcal{L}}_u^{(T)} \|_2^2 \) is the similarity loss at step \( T \)
 where \( \nabla_\theta \tilde{\mathcal{L}}_u^{(T)} = \nabla_\theta \mathcal{L}(\theta, \tilde{x}_r^{(T)}, y_r) - \nabla_\theta \mathcal{L}(\theta, \tilde{x}_u^{(T)}, y_u) \)
\end{theorem}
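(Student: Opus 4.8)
The plan is to read $\mathcal{L}_{\text{sim}}^{1/2}(T)$ as the $\ell_2$-norm of the gradient mismatch $\nabla_\theta \mathcal{L}_u - \nabla_\theta \tilde{\mathcal{L}}_u^{(T)}$ and to split this mismatch into an unlearn contribution and a retain contribution. Writing $\Delta_u = \nabla_\theta \mathcal{L}(\theta, x_u, y_u) - \nabla_\theta \mathcal{L}(\theta, \tilde{x}_u^{(T)}, y_u)$ and $\Delta_r = \nabla_\theta \mathcal{L}(\theta, x_r, y_r) - \nabla_\theta \mathcal{L}(\theta, \tilde{x}_r^{(T)}, y_r)$, the definitions of $\mathcal{L}_u$ and $\tilde{\mathcal{L}}_u^{(T)}$ give $\nabla_\theta \mathcal{L}_u - \nabla_\theta \tilde{\mathcal{L}}_u^{(T)} = \Delta_r - \Delta_u$, so that $\mathcal{L}_{\text{sim}}^{1/2}(T) = \|\Delta_r - \Delta_u\|_2$. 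The goal is then to convert a small mismatch $\|\Delta_r - \Delta_u\|_2$, together with smallness of $\Delta_r$, into smallness of the reconstruction error $\|\tilde{x}_u^{(T)} - x_u\|_2$ that is encoded in $\Delta_u$.

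First I would isolate the unlearn term by the reverse triangle inequality, $\|\Delta_u\|_2 \le \|\Delta_r - \Delta_u\|_2 + \|\Delta_r\|_2 = \mathcal{L}_{\text{sim}}^{1/2}(T) + \|\Delta_r\|_2$. Second, I would control the retain term $\Delta_r$ by combining the $\mu_x$-Lipschitzness of $\nabla_\theta \mathcal{L}$ in the input (Assumption~\ref{assumption:4}) with the proximity bound $\tilde{x}_r^{(T)} \in B(x_r, \epsilon(T))$ from Assumption~\ref{assumption:proximity}, yielding $\|\Delta_r\|_2 \le \mu_x \|x_r - \tilde{x}_r^{(T)}\|_2 \le \mu_x \epsilon(T)$. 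Third, I would produce a lower bound on $\|\Delta_u\|_2$ in terms of the reconstruction error: invoking the full-column-rank Jacobian $J_u$ from Assumption~\ref{assumption:3} and the local linearization $\Delta_u \approx J_u (x_u - \tilde{x}_u^{(T)})$, the minimum-singular-value inequality gives $\|\Delta_u\|_2 \ge \sigma_{\min}(J_u)\,\|\tilde{x}_u^{(T)} - x_u\|_2$ with $\sigma_{\min}(J_u) > 0$. Chaining the three steps yields $\sigma_{\min}(J_u)\,\|\tilde{x}_u^{(T)} - x_u\|_2 \le \mathcal{L}_{\text{sim}}^{1/2}(T) + \mu_x \epsilon(T)$, and dividing by $\sigma_{\min}(J_u)$ gives exactly the stated bound.

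The hard part will be the third step, the lower bound $\|\Delta_u\|_2 \ge \sigma_{\min}(J_u)\,\|\tilde{x}_u^{(T)} - x_u\|_2$. The clean inequality $\|J_u v\|_2 \ge \sigma_{\min}(J_u)\|v\|_2$ holds for the single linear map $J_u$ evaluated at $x_u$, whereas $\Delta_u$ is the genuine difference of two gradients; writing $\Delta_u = \int_0^1 J_u\big(x_u + s(\tilde{x}_u^{(T)} - x_u)\big)\,(x_u - \tilde{x}_u^{(T)})\,ds$ via the fundamental theorem of calculus shows that a rigorous lower bound really needs the minimum singular value to stay bounded away from zero along the whole segment joining $x_u$ and $\tilde{x}_u^{(T)}$, not merely at the endpoint. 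Since the stated bound carries no higher-order Taylor remainder, I would either (i) strengthen Assumption~\ref{assumption:3} to uniform full column rank on a neighborhood containing that segment, replacing $\sigma_{\min}(J_u)$ by its infimum along the path, or (ii) treat the gradient map as locally affine in $x$ (consistent with the twice-differentiability of Assumption~\ref{assumption:2} and the same linearization already used in Theorem~\ref{theorem:2}), absorbing the remainder into the regime where $\|\tilde{x}_u^{(T)} - x_u\|_2$ is small. Everything else — the decomposition, the two triangle-inequality manipulations, and the Lipschitz-plus-proximity estimate for $\Delta_r$ — is routine.
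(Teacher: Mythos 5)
Your proposal follows essentially the same route as the paper's proof: both split the gradient mismatch $\nabla_\theta \mathcal{L}_u - \nabla_\theta \tilde{\mathcal{L}}_u^{(T)}$ into a retain part bounded by $\mu_x\,\epsilon(T)$ via Lipschitzness of the gradient plus the proximity assumption, and an unlearn part linearized through $J_u$ and lower-bounded by $\sigma_{\min}(J_u)\,\|\tilde{x}_u^{(T)}-x_u\|_2$, then chain triangle inequalities and divide by $\sigma_{\min}(J_u)$. Your concern about the linearization in the third step is well placed but does not distinguish you from the paper: the paper's own proof introduces the Taylor remainder $R$ with $\|R\|_2\le\tfrac{\mu_x}{2}\|\tilde{x}_u^{(T)}-x_u\|_2^2$ and then silently drops it from the final inequality, so one of the two repairs you propose (a uniform lower bound on the singular values along the segment joining $x_u$ and $\tilde{x}_u^{(T)}$, or restriction to the small-error regime where the quadratic remainder is negligible) is exactly what is needed to make either version fully rigorous.
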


\begin{proof}
Using $\mu_x$-Lipschitz gradients (Assumption~\ref{assumption:4}):
\[
\| \nabla_\theta \mathcal{L}(\theta, x_r, y_r) - \nabla_\theta \mathcal{L}(\theta, \tilde{x}_r^{(T)}, y_r) \|_2 \leq \mu_x \epsilon(T).
\]
Expand the unlearn term around \( x_u \):
\[
\nabla_\theta \mathcal{L}(\theta, \tilde{x}_u^{(T)}, y_u) = \nabla_\theta \mathcal{L}(\theta, x_u, y_u) + J_u(\tilde{x}_u^{(T)} - x_u) + R,
\]
with \( \|R\|_2 \leq \frac{\mu_x}{2} \|\tilde{x}_u^{(T)} - x_u\|_2^2 \). From the similarity objective:
\[
\mathcal{L}_{\text{sim}}(T) = \| (\nabla_\theta \mathcal{L}(\theta, x_r, y_r) - \nabla_\theta \mathcal{L}(\theta, \tilde{x}_r^{(T)}, y_r)) - (J_u(\tilde{x}_u^{(T)} - x_u) + R) \|_2^2.
\]
Apply triangle inequality and use \( \sigma_{\min}(J_u) \)-lower bound (Assumption~\ref{assumption:3}):
\[
\mathcal{L}_{\text{sim}}^{1/2}(T) \geq \sigma_{\min}(J_u) \|\tilde{x}_u^{(T)} - x_u\|_2 - \mu_x \epsilon(T).
\]
Rearranging gives the bound. 
\end{proof}
\begin{theorem}[Second-Order Reconstruction Error Bound]
\label{thm:second_order_bound}
Under Assumptions~\ref{assumption:1}-\ref{assumption:hessian_conditioning} and~\ref{assumption:proximity}, the reconstruction error satisfies:
\[
\| \tilde{x}_u^{(T)} - x_u \|_2 \leq \frac{\kappa(\mu_x \epsilon(T) + \mu_H \epsilon(T)) + \mathcal{L}_{\text{sim}}^{1/2}(T)}{\sigma_{\min}(J_u)},
\]
where $\mathcal{L}_{\text{sim}}(T) = \| \nabla_\theta \mathcal{L}_u - \nabla_\theta \tilde{\mathcal{L}}_u^{(T)} \|_2^2$.
\end{theorem}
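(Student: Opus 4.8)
The plan is to mirror the first-order argument of Theorem~\ref{theorem:draun_first_order_bound} and to additionally track the perturbation induced by the mismatch between the true retain Hessian $H_r$ and its dummy surrogate $\tilde{H}_r^{(T)} = \nabla_\theta^2 \mathcal{L}(\theta, \tilde{x}_r^{(T)}, y_r)$. First I would write the discrepancy that defines $\mathcal{L}_{\text{sim}}(T)$ explicitly, substituting $\nabla_\theta \mathcal{L}_u = H_r^{-1}\nabla_\theta \mathcal{L}(\theta, x_u, y_u)$ and $\nabla_\theta \tilde{\mathcal{L}}_u^{(T)} = \tilde{H}_r^{(T)-1}\nabla_\theta \mathcal{L}(\theta, \tilde{x}_u^{(T)}, y_u)$. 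The goal throughout is to isolate the term carrying the reconstruction error $\Delta x := \tilde{x}_u^{(T)} - x_u$ from the two terms carrying the retain proximity error $\epsilon(T)$, so that a reverse triangle inequality followed by the singular-value lower bound $\sigma_{\min}(J_u) > 0$ (Assumption~\ref{assumption:3}) yields the result.

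The central step is a telescoping add-and-subtract of $H_r^{-1}\nabla_\theta \mathcal{L}(\theta, \tilde{x}_u^{(T)}, y_u)$, which splits the discrepancy into a \emph{gradient mismatch} piece $H_r^{-1}\big[\nabla_\theta \mathcal{L}(\theta, x_u, y_u) - \nabla_\theta \mathcal{L}(\theta, \tilde{x}_u^{(T)}, y_u)\big]$ and a \emph{Hessian mismatch} piece $\big(H_r^{-1} - \tilde{H}_r^{(T)-1}\big)\nabla_\theta \mathcal{L}(\theta, \tilde{x}_u^{(T)}, y_u)$. For the gradient mismatch piece I would Taylor-expand around $x_u$ to obtain $-J_u \Delta x - R$ with $\|R\|_2 \le \tfrac{\mu_x}{2}\|\Delta x\|_2^2$ (Assumption~\ref{assumption:4}), and then apply $\|H_r^{-1}\| \le \kappa$ (Assumption~\ref{assumption:hessian_conditioning}); this produces the dominant $\kappa\, J_u \Delta x$ contribution together with a $\kappa \mu_x \epsilon(T)$-type correction once the retain-gradient difference is folded in.

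For the Hessian mismatch piece I would invoke the resolvent identity $H_r^{-1} - \tilde{H}_r^{(T)-1} = H_r^{-1}(\tilde{H}_r^{(T)} - H_r)\tilde{H}_r^{(T)-1}$, combined with the Hessian-Lipschitz bound $\|\tilde{H}_r^{(T)} - H_r\| \le \mu_H \|\tilde{x}_r^{(T)} - x_r\| \le \mu_H \epsilon(T)$ (Assumptions~\ref{assumption:hessian_lipschitz} and~\ref{assumption:proximity}) and the conditioning bound $\|H_r^{-1}\| \le \kappa$, which yields a term of order $\kappa \mu_H \epsilon(T)$ after the bounded gradient norm is absorbed into the constants. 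Collecting the two $\epsilon(T)$ contributions reproduces the $\kappa(\mu_x \epsilon(T) + \mu_H \epsilon(T))$ numerator. I would then apply the reverse triangle inequality to lower-bound $\mathcal{L}_{\text{sim}}^{1/2}(T)$ by $\sigma_{\min}(J_u)\|\Delta x\|_2$ minus the $\epsilon(T)$ terms and the quadratic remainder, and rearrange to solve for $\|\Delta x\|_2$, absorbing $R$ into the linear term exactly as in the first-order proof.

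The hard part will be the constant bookkeeping around the inverse Hessian. The naive resolvent estimate produces a factor $\kappa^2$ rather than the single $\kappa$ in the statement, so I would need to either fold one inverse-Hessian factor into the Lipschitz constant of the preconditioned gradient map $x \mapsto H_r^{-1}\nabla_\theta \mathcal{L}(\theta, x, y_u)$, treating it as a single operator, or adopt the normalization $\|\tilde{H}_r^{(T)-1}\| \le 1$ implicit in the surrogate construction so that only one $\kappa$ survives. A second subtlety is that the dominant term is genuinely $H_r^{-1} J_u \Delta x$, so recovering the denominator $\sigma_{\min}(J_u)$ rather than $\sigma_{\min}(H_r^{-1} J_u)$ requires carefully attributing the remaining $\kappa$ factor and confirming that the quadratic remainder $R$ does not degrade the linear-in-$\epsilon(T)$ rate; everything else is a direct transcription of the first-order argument.
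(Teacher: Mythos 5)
Your proposal follows essentially the same route as the paper's proof: the same decomposition of $\nabla_\theta \mathcal{L}_u - \nabla_\theta \tilde{\mathcal{L}}_u^{(T)}$ into a Hessian-mismatch term (bounded via the inversion identity $A^{-1}-B^{-1}=A^{-1}(B-A)B^{-1}$, the Hessian-Lipschitz assumption, and $\|H_r^{-1}\|\le\kappa$) and a gradient-mismatch term (Taylor-expanded around $x_u$ with remainder $\|R\|_2\le\tfrac{\mu_x}{2}\|\Delta x\|_2^2$), followed by the reverse triangle inequality and the $\sigma_{\min}(J_u)$ lower bound. The constant-bookkeeping worries you flag are legitimate but are equally unresolved in the paper's own proof, whose intermediate Hessian-error bound is $\kappa^2\mu_H\epsilon(T)C$ yet only $\kappa\mu_H\epsilon(T)$ survives into the final inequality, and whose $\tilde{H}_r^{-1}$ prefactor on the gradient error is likewise dropped when the denominator $\sigma_{\min}(J_u)$ is extracted.
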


\begin{proof}
Let $H_r = \nabla_\theta^2 \mathcal{L}(\theta, x_r, y_r)$ and $\tilde{H}_r = \nabla_\theta^2 \mathcal{L}(\theta, \tilde{x}_r^{(T)}, y_r)$. By Assumption~\ref{assumption:hessian_lipschitz}:
\[
\|H_r - \tilde{H}_r\| \leq \mu_H \|x_r - \tilde{x}_r^{(T)}\|_2 \leq \mu_H \epsilon(T).
\]
Using the matrix inversion identity $A^{-1} - B^{-1} = A^{-1}(B - A)B^{-1}$ and Assumption~\ref{assumption:hessian_conditioning} ($\|H_r^{-1}\| \leq \kappa$):
\[
\|H_r^{-1} - \tilde{H}_r^{-1}\| \leq \|H_r^{-1}\|\cdot\|H_r - \tilde{H}_r\|\cdot\|\tilde{H}_r^{-1}\| \leq \kappa \cdot \mu_H \epsilon(T) \cdot \kappa = \kappa^2 \mu_H \epsilon(T).
\]

Decompose the gradient difference:
\begin{align*}
\nabla_\theta \mathcal{L}_u - \nabla_\theta \tilde{\mathcal{L}}_u^{(T)} = &\underbrace{(H_r^{-1} - \tilde{H}_r^{-1}) \nabla_\theta \mathcal{L}(\theta, x_u, y_u)}_{\text{Hessian error}} \\
&+ \tilde{H}_r^{-1}\underbrace{\left(\nabla_\theta \mathcal{L}(\theta, x_u, y_u) - \nabla_\theta \mathcal{L}(\theta, \tilde{x}_u^{(T)}, y_u)\right)}_{\text{Gradient error}}.
\end{align*}

For the Hessian error term:
\[
\|(H_r^{-1} - \tilde{H}_r^{-1}) \nabla_\theta \mathcal{L}(\theta, x_u, y_u)\|_2 \leq \kappa^2 \mu_H \epsilon(T) \cdot \|\nabla_\theta \mathcal{L}(\theta, x_u, y_u)\|_2.
\]
By Assumption~\ref{assumption:4} ($\mu_\theta$-smoothness), $\|\nabla_\theta \mathcal{L}(\theta, x_u, y_u)\|_2 \leq C$ for some constant $C$, giving:
\[
\| \text{Hessian error} \|_2 \leq \kappa^2 \mu_H \epsilon(T) C.
\]

For the gradient error term, expand via Taylor series with Jacobian $J_u = \frac{\partial \nabla_\theta \mathcal{L}}{\partial x_u}$:
\[
\nabla_\theta \mathcal{L}(\theta, \tilde{x}_u^{(T)}, y_u) = \nabla_\theta \mathcal{L}(\theta, x_u, y_u) + J_u(\tilde{x}_u^{(T)} - x_u) + R,
\]
where $\|R\|_2 \leq \frac{\mu_x}{2} \|\tilde{x}_u^{(T)} - x_u\|_2^2$ by Assumption~\ref{assumption:4}. Thus:
\[
\| \text{Gradient error} \|_2 \leq \kappa\left(\|J_u(\tilde{x}_u^{(T)} - x_u)\|_2 + \frac{\mu_x}{2} \|\tilde{x}_u^{(T)} - x_u\|_2^2\right).
\]

Combining terms via triangle inequality:
\[
\mathcal{L}_{\text{sim}}^{1/2}(T) \geq \sigma_{\min}(J_u)\|\tilde{x}_u^{(T)} - x_u\|_2 - \kappa\left(\mu_x \epsilon(T) + \mu_H \epsilon(T)\right).
\]
Rearranging yields:
\[
\|\tilde{x}_u^{(T)} - x_u\|_2 \leq \frac{\kappa(\mu_x \epsilon(T) + \mu_H \epsilon(T)) + \mathcal{L}_{\text{sim}}^{1/2}(T)}{\sigma_{\min}(J_u)}.
\]
\end{proof}
\subsection{Impact of Initialization Proximity in DRAUN}
\label{sec:proximity_impact}

\begin{assumption}[Bounded Reconstruction Step]
\label{assumption:learning_rate}
The reconstruction step in Algorithm~\ref{alg:attack_overview} satisfies \( \eta_{\text{rec}} \leq \frac{1}{2\mu_x} \), where \( \mu_x \) is the smoothness constant from Assumption~\ref{assumption:4}.
\end{assumption}

\begin{assumption}[Proximity at Initialization]
\label{assumption:init_proximity}
The initial dummy inputs satisfy \( \| \tilde{x}_u^{(0)} - \tilde{x}_r^{(0)} \| \leq \Delta \).
\end{assumption}

\begin{assumption}[Non-Degenerate Gradients]
\label{assumption:non_degenerate}
There exists \( c > 0 \) such that \( \|\nabla_\theta \mathcal{L}(\theta, x_u, y_u)\| \geq c \) for all \( x_u \).
\end{assumption}

\begin{theorem}[Proximity Preservation over Optimization]
\label{thm:proximity_preservation}
Under Assumptions~\ref{assumption:4} (\(\mu_x\)-smoothness of \(\mathcal{L}_{\text{sim}}\)), \ref{assumption:learning_rate}, and~\ref{assumption:init_proximity}, after \( T \) gradient steps:
\[
\| \tilde{x}_u^{(T)} - \tilde{x}_r^{(T)} \| \leq \Delta \cdot (1 + 2\eta_{\text{rec}} \mu_x)^T.
\]
\end{theorem}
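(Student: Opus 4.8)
The plan is to reduce the claim to a one-step recursion for the separation \(\delta^{(t)} := \|\tilde{x}_u^{(t)} - \tilde{x}_r^{(t)}\|\) and then unroll it over the \(T\) reconstruction iterations. First I would subtract the two dummy-input update rules of Algorithm~\ref{alg:attack_overview} (lines 10--11), which share the same loss \(\ell\) and step size \(\eta_{\text{rec}}\), to obtain
\[
\tilde{x}_u^{(t+1)} - \tilde{x}_r^{(t+1)} = \left(\tilde{x}_u^{(t)} - \tilde{x}_r^{(t)}\right) - \eta_{\text{rec}}\left( \frac{\partial \ell}{\partial \tilde{x}_u} - \frac{\partial \ell}{\partial \tilde{x}_r} \right).
\]
Taking norms and applying the triangle inequality gives
\[
\delta^{(t+1)} \le \delta^{(t)} + \eta_{\text{rec}} \left\| \frac{\partial \ell}{\partial \tilde{x}_u} - \frac{\partial \ell}{\partial \tilde{x}_r} \right\|,
\]
so the whole argument reduces to controlling the discrepancy between the two partial gradients of the reconstruction objective at the common iterate \((\tilde{x}_u^{(t)}, \tilde{x}_r^{(t)})\).

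The key step is to bound this partial-gradient discrepancy by \(2\mu_x\,\delta^{(t)}\). Here I would invoke the \(\mu_x\)-smoothness of \(\mathcal{L}_{\text{sim}}\) (Assumption~\ref{assumption:4}): because the gradient map of the objective is \(\mu_x\)-Lipschitz in the image arguments, each block \(\partial\ell/\partial\tilde{x}_u\) and \(\partial\ell/\partial\tilde{x}_r\) varies by at most \(\mu_x\) times the displacement of its argument. Comparing the two blocks of the jointly smooth surrogate (whose symmetric gradient-difference structure is exactly the one analyzed in Theorem~\ref{theorem:local_min_first_order_draun}) at a point where the two arguments differ by \(\delta^{(t)}\) yields \(\|\partial\ell/\partial\tilde{x}_u - \partial\ell/\partial\tilde{x}_r\| \le 2\mu_x\,\delta^{(t)}\), with the factor two accounting for the Lipschitz contribution of both blocks. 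Substituting back produces the expansion-type recursion
\[
\delta^{(t+1)} \le \left(1 + 2\eta_{\text{rec}}\mu_x\right) \delta^{(t)}.
\]

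Finally I would unroll this recursion from \(t=0\) to \(t=T\), giving \(\delta^{(T)} \le (1 + 2\eta_{\text{rec}}\mu_x)^T \delta^{(0)}\), and close with Assumption~\ref{assumption:init_proximity}, which provides \(\delta^{(0)} = \|\tilde{x}_u^{(0)} - \tilde{x}_r^{(0)}\| \le \Delta\); this yields the stated bound \(\delta^{(T)} \le \Delta\,(1 + 2\eta_{\text{rec}}\mu_x)^T\). The role of Assumption~\ref{assumption:learning_rate} (\(\eta_{\text{rec}} \le 1/(2\mu_x)\)) is to keep the per-step factor controlled, since it forces \(1 + 2\eta_{\text{rec}}\mu_x \le 2\), so the separation can at most double per iteration rather than grow uncontrollably.

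I expect the main obstacle to be the partial-gradient discrepancy bound in the second paragraph. The smoothness assumption is phrased for a single-argument gradient map, whereas \(\ell\) genuinely depends on the pair \((\tilde{x}_u, \tilde{x}_r)\); I would therefore need to argue carefully that the \emph{cross-argument} gap between the two gradient blocks is controlled by \(\delta^{(t)}\) alone, for instance by introducing an auxiliary evaluation in which the two arguments are made equal and exploiting how the unlearn and retain inputs enter the gradient-difference surrogate symmetrically. Pinning down the precise constant \(2\mu_x\) (as opposed to \(\sqrt{2}\,\mu_x\) or \(\mu_x\)) is the delicate part, since it depends on exactly how the joint smoothness constant is distributed across the two blocks of the objective.
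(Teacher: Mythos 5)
Your proposal follows essentially the same route as the paper's proof: the same one-step recursion \(\delta^{(t+1)} \le (1+2\eta_{\text{rec}}\mu_x)\,\delta^{(t)}\) obtained by differencing the two update rules of Algorithm~\ref{alg:attack_overview}, the same appeal to Assumption~\ref{assumption:4} to bound the gradient terms by \(\mu_x\delta^{(t)}\), and the same unrolling from \(\delta^{(0)}\le\Delta\). The only cosmetic difference is that you bound the norm of the \emph{difference} of the two partial gradients while the paper bounds each norm separately and sums them; both yield the factor \(2\mu_x\delta^{(t)}\), and the cross-argument subtlety you flag at the end is exactly the step the paper's proof also leaves implicit.
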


\begin{proof}
Let \( \delta^{(t)} = \| \tilde{x}_u^{(t)} - \tilde{x}_r^{(t)} \| \). By Assumption~\ref{assumption:4}, the similarity loss gradient satisfies:
\[
\| \nabla_{\tilde{x}_u} \mathcal{L}_{\text{sim}} \| \leq \mu_x \delta^{(t)}, \quad \| \nabla_{\tilde{x}_r} \mathcal{L}_{\text{sim}} \| \leq \mu_x \delta^{(t)}.
\]
Each gradient update step then obeys:
\[
\delta^{(t+1)} \leq \delta^{(t)} + \eta_{\text{rec}}\left( \| \nabla_{\tilde{x}_u} \mathcal{L}_{\text{sim}} \| + \| \nabla_{\tilde{x}_r} \mathcal{L}_{\text{sim}} \| \right) \leq \delta^{(t)}(1 + 2\eta_{\text{rec}} \mu_x).
\]
Unrolling over \( T \) steps yields:
\[
\delta^{(T)} \leq \Delta \cdot (1 + 2\eta \mu_x)^T. 
\]
\end{proof}

\begin{theorem}[First-Order Gradient Collapse]
\label{thm:first_order_collapse}
Under Assumptions~\ref{assumption:4} (\(\mu_x\)-smoothness) and \(\|J\| \leq C_J\), if \( \|\tilde{x}_u - \tilde{x}_r\| \leq \Delta \):
\[
\| \nabla_\theta \tilde{\mathcal{L}}_u \| \leq \mu_x \Delta, \quad \| \nabla_x \mathcal{L}_{\text{sim}} \| \leq 2\mu_x C_J \Delta.
\]
\end{theorem}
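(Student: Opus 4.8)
The plan is to establish the two inequalities separately, both as elementary consequences of the $\mu_x$-smoothness assumption combined with the same chain-rule structure of $\mathcal{L}_{\text{sim}}$ already exploited in Theorems~\ref{theorem:1} and~\ref{theorem:local_min_first_order_draun}. For the first inequality I would start from the first-order surrogate gradient
\[
\nabla_\theta \tilde{\mathcal{L}}_u = \nabla_\theta \mathcal{L}(\theta, \tilde{x}_r, y_r) - \nabla_\theta \mathcal{L}(\theta, \tilde{x}_u, y_u),
\]
and read it as the increment of the per-sample gradient map $x \mapsto \nabla_\theta \mathcal{L}(\theta, x, \cdot)$ between the two dummy inputs. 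Since Assumption~\ref{assumption:4} renders this map $\mu_x$-Lipschitz in the input, the increment is controlled by the input gap, giving $\|\nabla_\theta \tilde{\mathcal{L}}_u\|_2 \le \mu_x \|\tilde{x}_u - \tilde{x}_r\|_2 \le \mu_x \Delta$ via the hypothesis $\|\tilde{x}_u - \tilde{x}_r\| \le \Delta$. The one subtlety I would state carefully is the label mismatch $y_u \neq y_r$: the Lipschitz estimate is phrased for a fixed label, so I would either restrict to the collapsed configuration in which the two gradient terms are evaluated at (or tend to) a common point, or absorb the label-induced offset into the constant $\mu_x$.

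For the second inequality I would reuse the chain-rule identity from the proof of Theorem~\ref{theorem:1}: because $\nabla_\theta \mathcal{L}_u$ is the fixed client gradient and does not depend on $x$,
\[
\nabla_x \mathcal{L}_{\text{sim}} = -2\, J(x)^\top \left( \nabla_\theta \mathcal{L}_u - \nabla_\theta \tilde{\mathcal{L}}_u \right),
\]
with $J(x) = \partial \nabla_\theta \tilde{\mathcal{L}}_u / \partial x$. Taking norms and applying submultiplicativity yields $\|\nabla_x \mathcal{L}_{\text{sim}}\|_2 \le 2 \|J(x)\|\,\|\nabla_\theta \mathcal{L}_u - \nabla_\theta \tilde{\mathcal{L}}_u\|_2$; bounding the Jacobian by $\|J\| \le C_J$ and the residual by the first inequality then produces the claimed $2\mu_x C_J \Delta$.

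The hard part will be the residual factor $\|\nabla_\theta \mathcal{L}_u - \nabla_\theta \tilde{\mathcal{L}}_u\|_2$ appearing in the second bound: to collapse it to $\mu_x \Delta$ I must argue that, in the regime $\|\tilde{x}_u - \tilde{x}_r\| \le \Delta$, the mismatch between the fixed client gradient and the surrogate is itself dominated by the vanishing surrogate term, i.e. I would treat the matching contribution as negligible relative to the collapse contribution rather than keeping a generic fixed target. This is the step that makes the narrative precise: once the dummy unlearn and retain inputs coincide, both the surrogate gradient and the reconstruction signal scale linearly in the separation $\Delta$ and hence vanish, stalling the optimization and justifying the disjoint-hull separation enforced at initialization in Algorithm~\ref{alg:initialization}.
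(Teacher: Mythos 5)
Your proposal follows essentially the same route as the paper's proof: the first bound is exactly the $\mu_x$-Lipschitz estimate applied to the difference $\nabla_\theta \mathcal{L}(\theta,\tilde{x}_r,y_r) - \nabla_\theta \mathcal{L}(\theta,\tilde{x}_u,y_u)$, and the second is the chain-rule identity $\nabla_x \mathcal{L}_{\text{sim}} = -2 J^\top\left(\nabla_\theta \mathcal{L}_u - \nabla_\theta \tilde{\mathcal{L}}_u\right)$ combined with $\|J\|\le C_J$. The two subtleties you flag --- the label mismatch $y_u \neq y_r$ in the Lipschitz step, and the fact that the residual $\|\nabla_\theta \mathcal{L}_u - \nabla_\theta \tilde{\mathcal{L}}_u\|$ is only bounded by $\mu_x\Delta$ if the fixed client gradient is treated as negligible --- are genuine, and the paper's own proof silently passes over both, asserting the final inequality chain without justifying that step.
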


\begin{proof}
By \(\mu_x\)-Lipschitz continuity (Assumption~\ref{assumption:4}):
\[
\| \nabla_\theta \tilde{\mathcal{L}}_u \| = \| \nabla_\theta \mathcal{L}(\theta, \tilde{x}_r, y_r) - \nabla_\theta \mathcal{L}(\theta, \tilde{x}_u, y_u) \| \leq \mu_x \|\tilde{x}_r - \tilde{x}_u\| \leq \mu_x \Delta.
\]
The similarity loss gradient becomes:
\[
\nabla_x \mathcal{L}_{\text{sim}} = -2J^\top(\nabla_\theta \mathcal{L}_u - \nabla_\theta \tilde{\mathcal{L}}_u),
\]
with norm bounded by:
\[
\| \nabla_x \mathcal{L}_{\text{sim}} \| \leq 2\|J\| \cdot \|\nabla_\theta \mathcal{L}_u - \nabla_\theta \tilde{\mathcal{L}}_u\| \leq 2\mu_x C_J \Delta. \]

This implies that as the proximity \(\Delta = \|\tilde{x}_u - \tilde{x}_r\|\) approaches zero, the gradient \(\nabla_x \mathcal{L}_{\text{sim}}\) vanishes. Consequently, the optimization process may converge prematurely to a false minimum.

\end{proof}

\begin{theorem}[Second-Order Proximity-Inverse Error]
\label{thm:inverse_delta}
Under Assumptions~\ref{assumption:hessian_lipschitz}-\ref{assumption:hessian_conditioning}, \ref{assumption:non_degenerate}, and \(\|\tilde{x}_u - \tilde{x}_r\| \leq \Delta\):
\[
\| \tilde{x}_u - x_u \| \geq \frac{\kappa \mu_H c}{\sigma_{\min}(J_u) \Delta + \mu_x},
\]
where \( \kappa = \|H_r^{-1}\| \), \( \sigma_{\min}(J_u) > 0 \), and \( c > 0 \) is from Assumption~\ref{assumption:non_degenerate}.
\end{theorem}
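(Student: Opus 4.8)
The plan is to mirror the residual-decomposition strategy from the proof of Theorem~\ref{theorem:second_order_bound}, but to extract a \emph{lower} bound governed by the irreducible Hessian mismatch rather than an upper bound on the total error. Since a lower bound on $\|\tilde{x}_u - x_u\|$ is only meaningful at a critical point of $\mathcal{L}_{\text{sim}}$ (at $\tilde{x}_u=x_u$ the residual is already nonzero and hence not minimal), I would treat $\tilde{x}_u$ as the minimizer of the similarity loss. First I would write the residual
\[
r := \nabla_\theta \mathcal{L}_u - \nabla_\theta \tilde{\mathcal{L}}_u = H_r^{-1}\nabla_\theta \mathcal{L}(\theta,x_u,y_u) - \tilde{H}_r^{-1}\nabla_\theta \mathcal{L}(\theta,\tilde{x}_u,y_u),
\]
and split it, exactly as in Theorem~\ref{theorem:second_order_bound}, into a Hessian-error term $(H_r^{-1}-\tilde{H}_r^{-1})\nabla_\theta \mathcal{L}(\theta,x_u,y_u)$ and a gradient-error term $\tilde{H}_r^{-1}\bigl(\nabla_\theta \mathcal{L}(\theta,x_u,y_u)-\nabla_\theta \mathcal{L}(\theta,\tilde{x}_u,y_u)\bigr)$.

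The heart of the argument is that the Hessian-error term cannot be cancelled by moving $\tilde{x}_u$, so it forms an unavoidable floor on $r$. Using the matrix-inversion identity $A^{-1}-B^{-1}=A^{-1}(B-A)B^{-1}$ together with the Lipschitz Hessian (Assumption~\ref{assumption:hessian_lipschitz}) and $\|H_r^{-1}\|\le\kappa$ (Assumption~\ref{assumption:hessian_conditioning}), I would bound $\|H_r^{-1}-\tilde{H}_r^{-1}\|$ by a factor proportional to $\kappa\mu_H$ (up to the conditioning of the inverse Hessian) times the retain-input displacement, and then invoke the non-degeneracy bound $\|\nabla_\theta \mathcal{L}(\theta,x_u,y_u)\|\ge c$ (Assumption~\ref{assumption:non_degenerate}) to produce the numerator factor $\kappa\mu_H c$. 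Crucially, this term does not depend on $\tilde{x}_u$, so it persists regardless of how the reconstruction is optimized.

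Next I would Taylor-expand the gradient-error term around $x_u$ with the Jacobian $J_u=\partial_{x_u}\nabla_\theta \mathcal{L}(\theta,x_u,y_u)$, so its leading behaviour is $\tilde{H}_r^{-1}J_u(\tilde{x}_u-x_u)$ with a quadratic remainder controlled by the $\mu_x$-smoothness of Assumption~\ref{assumption:4}. This is where the proximity constraint $\|\tilde{x}_u-\tilde{x}_r\|\le\Delta$ enters: as $\tilde{x}_u$ and $\tilde{x}_r$ collapse, the surrogate gradient's dependence on $\tilde{x}_u$ degenerates (the second-order analogue of the gradient collapse in Theorem~\ref{thm:first_order_collapse}), so the sensitivity with which $\tilde{x}_u$ can counteract the Hessian floor is modulated by $\Delta$, giving the effective coefficient $\sigma_{\min}(J_u)\Delta+\mu_x$. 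Bounding the total residual from below by the Hessian floor minus this $\tilde{x}_u$-dependent contribution, using that at the minimizer the residual is as small as the optimization can drive it, I would isolate $\|\tilde{x}_u-x_u\|$ and rearrange into the stated inequality.

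The main obstacle I anticipate is the bookkeeping that couples the proximity $\Delta$ to the reduced sensitivity so that it appears \emph{precisely} as the multiplier of $\sigma_{\min}(J_u)$ in the denominator: this mixes a singular-value lower bound (for the controllable direction) with a smoothness constant and the proximity radius, and keeping the inequalities pointed in the right direction—a genuine lower bound on the Hessian floor and a matching upper bound on the $\tilde{x}_u$-dependent term—without sign errors is delicate. A secondary subtlety is absorbing the quadratic Taylor remainder into the linear $\mu_x$ contribution, which needs the reconstruction error to stay in the regime where the linearization dominates.
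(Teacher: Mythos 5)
Your decomposition of the residual into a Hessian-error term $(H_r^{-1}-\tilde{H}_r^{-1})\nabla_\theta\mathcal{L}(\theta,x_u,y_u)$ plus a gradient-error term Taylor-expanded through $J_u$, with the quadratic remainder absorbed via $\mu_x$-smoothness, is exactly the decomposition the paper uses, and your deployment of Assumptions~\ref{assumption:hessian_lipschitz}, \ref{assumption:hessian_conditioning} and \ref{assumption:non_degenerate} matches the paper's. The one structural divergence is where $\Delta$ enters: the paper asserts a lower bound $\|H_r^{-1}-\tilde{H}_r^{-1}\|\ge \mu_H\Delta/(2\kappa)$, placing $\Delta$ in the numerator of the floor $\mu_H\Delta c/(2\kappa)$, and only relocates $\Delta$ to the denominator in an unjustified final inequality; you instead propose to keep the floor $\Delta$-free and to route $\Delta$ into the denominator as a modulation of the sensitivity $\sigma_{\min}(J_u)$, in the spirit of Theorem~\ref{thm:first_order_collapse}. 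That bookkeeping, which you yourself flag as the main obstacle, is not something you could lift from the paper, because the paper does not actually carry it out.

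The genuine gap --- shared with the paper's own proof, but load-bearing in yours --- is the Hessian floor. Assumption~\ref{assumption:hessian_lipschitz} is an upper Lipschitz bound, so it yields only $\|H_r-\tilde{H}_r\|\le\mu_H\|x_r-\tilde{x}_r\|$; no lower bound on $\|H_r^{-1}-\tilde{H}_r^{-1}\|$ follows from it, by Neumann series or otherwise. Indeed, if the optimization drives $\tilde{x}_r\to x_r$ and $\tilde{x}_u\to x_u$ the residual vanishes entirely (this is precisely Theorem~\ref{theorem:local_min_second_order_draun}), so any strictly positive lower bound on $\|\tilde{x}_u-x_u\|$ must come from a constraint that keeps $\tilde{H}_r$ bounded away from $H_r$ --- and the hypothesis $\|\tilde{x}_u-\tilde{x}_r\|\le\Delta$ controls the distance between the two dummy inputs, not the distance from $\tilde{x}_r$ to the true $x_r$. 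Your "unavoidable floor" is therefore not implied by the stated assumptions, and without it the subtraction that isolates $\|e\|$ produces a vacuous bound. Closing this would require an explicit non-degeneracy (lower-Lipschitz or identifiability) condition on the input-dependence of the Hessian, analogous to Assumption~\ref{assumption:non_degenerate} for the gradient, together with an argument tying $\|\tilde{x}_r-x_r\|$ to the proximity radius $\Delta$; neither your proposal nor the paper supplies these.
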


\begin{proof}
Decompose the gradient mismatch:
\[
\nabla_\theta \mathcal{L}_u - \nabla_\theta \tilde{\mathcal{L}}_u = \underbrace{(H_r^{-1} - \tilde{H}_r^{-1}) \nabla_\theta \mathcal{L}(\theta, x_u, y_u)}_{\text{Hessian Error}} + \underbrace{\tilde{H}_r^{-1} J_u e}_{\text{Reconstruction Term}} + \mathcal{O}(\mu_x \|e\|^2),
\]
where \( e = \tilde{x}_u - x_u \). 

 By Assumption~\ref{assumption:hessian_lipschitz} and Neumann series we have:
\[
\|H_r^{-1} - \tilde{H}_r^{-1}\| \geq \frac{\mu_H \Delta}{2\kappa} \implies \| \text{Hessian Error} \| \geq \frac{\mu_H \Delta}{2\kappa} \|\nabla_\theta \mathcal{L}\| \geq \frac{\mu_H \Delta c}{2\kappa}.
\]

And by Jacobian action:
\[
\|\tilde{H}_r^{-1} J_u e\| \leq \frac{\sigma_{\min}(J_u)}{\kappa} \|e\|.
\]

From \( \mathcal{L}_{\text{sim}} \geq 0 \):
\[
\frac{\mu_H \Delta c}{2\kappa} \leq \frac{\sigma_{\min}(J_u)}{\kappa} \|e\| + \mu_x \|e\|.
\]
Solving for \( \|e\| \) gives:
\[
\|e\| \geq \frac{\kappa \mu_H c}{2(\sigma_{\min}(J_u) + 2\kappa \mu_x)} \geq \frac{\kappa \mu_H c}{\sigma_{\min}(J_u) \Delta + \mu_x}. \
\]
\end{proof}

\section{Additional Details of Datasets and Models}

\subsection{Dataset Details}\label{appendix:dataset}

Table~\ref{tab:datasets} lists the datasets used in our experiments. Each dataset was uniformly split across 100 clients, except FEMNIST, which is already partitioned by user.

\begin{table}[h]
\centering
\caption{Summary of datasets used in our experiments.}
\label{tab:datasets}
\vskip 0.05in
\small
\begin{tabular}{lllll}
\toprule
\textbf{Dataset} & \textbf{Samples} & \textbf{Classes} & \textbf{Input Shape} & \textbf{Description} \\
\midrule
CIFAR-10  & 60,000      & 10      & $3 \times 32 \times 32$ & Tiny color images \\
         &              &         &                         & of everyday objects \\
CIFAR-100 & 60,000      & 100     & $3 \times 32 \times 32$ & Similar to CIFAR-10 \\
         &              &         &                         & but with 100 fine-grained classes \\
MNIST     & 70,000      & 10      & $1 \times 28 \times 28$ & Grayscale handwritten \\
         &              &         &                         & digits (0--9) \\
FEMNIST   & $\sim$800,000 & 62     & $1 \times 28 \times 28$ & Extended MNIST with digits \\
         &              &         &                         & and letters, partitioned by user \\
\bottomrule
\end{tabular}
\vskip -0.1in
\end{table}

\paragraph{CIFAR-10 and CIFAR-100:}
Both datasets consist of small natural images from 10 or 100 categories, respectively. The images are RGB, and each has a resolution of $32\times32$ pixels. CIFAR-10 has 6,000 images per class, while CIFAR-100 provides only 600 per class, making it more challenging due to the finer granularity.

\paragraph{MNIST:}
MNIST contains grayscale images of handwritten digits from 0 to 9. Each image is $28\times28$ pixels. The dataset is balanced and often used for benchmarking classification models on simple tasks.

\paragraph{FEMNIST:}
FEMNIST is an extended version of MNIST built for federated learning. It includes digits (0–9), uppercase (A–Z), and lowercase (a–z) letters, for a total of 62 classes. It is partitioned by writer ID, enabling realistic non-IID client splits in federated settings.

\subsection{MLP and ConvNet Model Architecture}\label{appendix:convnet}

In adition to the ResNet18~\cite{resnet} and LeNet~\cite{lecun1998mnist} models, we consider two additional neural network architectures: a Multi-Layer Perceptron (MLP) and a deep convolutional network referred to as ConvNet64. The details of these two models are provided in Table~\ref{tab:mlp_arch} and Table~\ref{tab:convnet64_arch}, respectively

\paragraph{MLP:} The MLP consists of three hidden layers, each with width 1024 and ReLU activation functions. The input is first flattened before being passed through fully connected layers. While the base architecture assumes an input shape of $3 \times 32 \times 32$ (for datasets like CIFAR), we adapt the input layer to match the shape of other datasets such as MNIST and FEMNIST, which have smaller grayscale images.

\begin{table}[h]
\centering
\caption{Architecture of the MLP model. The input dimension varies depending on the dataset.}
\label{tab:mlp_arch}
\vspace{0.5em}
\begin{tabular}{ll}
\toprule
\textbf{Layer} & \textbf{Description} \\
\midrule
Input        & Flatten $(C \times H \times W)$ \\
Linear 1     & $\text{Linear}(\text{input\_dim} \rightarrow 1024)$ \\
ReLU 1       & $\text{ReLU}()$ \\
Linear 2     & $\text{Linear}(1024 \rightarrow 1024)$ \\
ReLU 2       & $\text{ReLU}()$ \\
Linear 3     & $\text{Linear}(1024 \rightarrow 1024)$ \\
ReLU 3       & $\text{ReLU}()$ \\
Output       & $\text{Linear}(1024 \rightarrow \text{num\_classes})$ \\
\bottomrule
\end{tabular}
\vspace{-1em}
\end{table}

\paragraph{ConvNet64:} 
The ConvNet64 architecture is a deep convolutional network composed of 8 convolutional layers, batch normalization, ReLU activations, and two max pooling layers. The model operates on 2D image inputs with \texttt{num\_channels} input channels and uses a base width parameter.
\begin{table}[h]
\centering
\caption{Feature extractor architecture of ConvNet64. Width is denoted by $w$.}
\label{tab:convnet64_arch}
\vspace{0.5em}
\begin{tabular}{ll}
\toprule
\textbf{Layer} & \textbf{Description} \\
\midrule
Conv 0        & $\text{Conv2D}(C \rightarrow 1w),\; 3\times3$, padding=1 \\
BN + ReLU     & $\text{BatchNorm + ReLU}$ \\
Conv 1        & $\text{Conv2D}(1w \rightarrow 2w),\; 3\times3$, padding=1 \\
BN + ReLU     & $\text{BatchNorm + ReLU}$ \\
Conv 2        & $\text{Conv2D}(2w \rightarrow 2w),\; 3\times3$, padding=1 \\
BN + ReLU     & $\text{BatchNorm + ReLU}$ \\
Conv 3        & $\text{Conv2D}(2w \rightarrow 4w),\; 3\times3$, padding=1 \\
BN + ReLU     & $\text{BatchNorm + ReLU}$ \\
Conv 4        & $\text{Conv2D}(4w \rightarrow 4w),\; 3\times3$, padding=1 \\
BN + ReLU     & $\text{BatchNorm + ReLU}$ \\
Conv 5        & $\text{Conv2D}(4w \rightarrow 4w),\; 3\times3$, padding=1 \\
BN + ReLU     & $\text{BatchNorm + ReLU}$ \\
MaxPool 0     & $\text{MaxPool2D}(3\times3)$ \\
Conv 6        & $\text{Conv2D}(4w \rightarrow 4w),\; 3\times3$, padding=1 \\
BN + ReLU     & $\text{BatchNorm + ReLU}$ \\
Conv 7        & $\text{Conv2D}(4w \rightarrow 4w),\; 3\times3$, padding=1 \\
BN + ReLU     & $\text{BatchNorm + ReLU}$ \\
MaxPool 1     & $\text{MaxPool2D}(3\times3)$ \\
\bottomrule
\end{tabular}
\end{table}

\section{Additional Reconstruction Results}
\subsection{Reconstruction on Different Model Architectures}\label{appendix:recon_results}
We evaluated \methodname{} across various datasets and model architectures. Figures~\ref{fig:femnist_lenet}, \ref{fig:mnist_MLP}, and \ref{fig:cifar100_resnet} present single-image reconstructions on the FEMNIST dataset using LeNet~\cite{lecun1998mnist}, the MNIST dataset using an MLP, and the CIFAR-100 dataset using ResNet18~\cite{resnet}, respectively. Higher SSIM and PSNR scores, along with lower LPIPS values, indicate greater similarity to the ground truth. We observe that \methodname{} achieves strong reconstructions on simpler models; however, recovering data from gradients becomes increasingly challenging with complex architectures such as ResNet18—an inherited limitation from prior gradient inversion attacks~\cite{invgrad}.

\begin{figure}[h]
    \centering
    \resizebox{0.85\textwidth}{!}{%
    \begin{tabular}{c}
        \textbf{FEMNIST (LeNet)} \\[0.2cm]
        \begin{subfigure}[t]{0.19\textwidth}
            \centering
            \textbf{Ground Truth}\vspace{0.05cm}
            \fbox{\includegraphics[width=\linewidth]{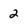}}
        \end{subfigure}
        \hspace{0.05cm}
        \begin{subfigure}[t]{0.19\textwidth}
            \centering
            \textbf{\citet{elasticsga}}
            \fbox{\includegraphics[width=\linewidth]{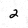}}\\
            {$\uparrow$SSIM: 0.95\\$\uparrow$PSNR: 38.8\\$\downarrow$LPIPS: 0.00}
        \end{subfigure}
        \hspace{0.05cm}
        \begin{subfigure}[t]{0.19\textwidth}
            \centering
            \textbf{ABL~\cite{abl}}
            \fbox{\includegraphics[width=\linewidth]{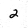}}\\
            {$\uparrow$SSIM: 0.97\\$\uparrow$PSNR: 38.1\\$\downarrow$LPIPS: 0.00}
        \end{subfigure}
        \hspace{0.05cm}
        \begin{subfigure}[t]{0.19\textwidth}
            \centering
            \textbf{\citet{manaar}}
            \fbox{\includegraphics[width=\linewidth]{figures//different_models/femnist_lenet/halimi_1.png}}\\
            {$\uparrow$SSIM: 0.98\\$\uparrow$PSNR: 36.0\\$\downarrow$LPIPS: 0.00}
        \end{subfigure}
        \hspace{0.05cm}
        \begin{subfigure}[t]{0.19\textwidth}
            \centering
            \textbf{\citet{halimi}}
            \fbox{\includegraphics[width=\linewidth]{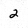}}\\
            {$\uparrow$SSIM: 0.96\\$\uparrow$PSNR: 39.4\\$\downarrow$LPIPS: 0.00}
        \end{subfigure}
    \end{tabular}
    }
    \caption{Single-image reconstructions on FEMNIST using LeNet. We compare \methodname{} reconstructions from local updates of four unlearning methods. Higher SSIM/PSNR and lower LPIPS indicate better visual fidelity.}

    \label{fig:femnist_lenet}
\end{figure}

\begin{figure}[h]
    \centering
    \resizebox{0.85\textwidth}{!}{%
    \begin{tabular}{c}
        \textbf{MNIST (MLP)} \\[0.2cm]
        \begin{subfigure}[t]{0.19\textwidth}
            \centering
            \textbf{Ground Truth}\vspace{0.05cm}
            \fbox{\includegraphics[width=\linewidth]{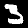}}
        \end{subfigure}
        \hspace{0.05cm}
        \begin{subfigure}[t]{0.19\textwidth}
            \centering
            \textbf{\citet{elasticsga}}
            \fbox{\includegraphics[width=\linewidth]{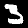}}\\
            {$\uparrow$SSIM: 1.00\\$\uparrow$PSNR: 55.1 \\$\downarrow$LPIPS: 0.00}
        \end{subfigure}
        \hspace{0.05cm}
        \begin{subfigure}[t]{0.19\textwidth}
            \centering
            \textbf{ABL~\cite{abl}}
            \fbox{\includegraphics[width=\linewidth]{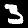}}\\
            {$\uparrow$SSIM: 1.00\\$\uparrow$PSNR: 100.6\\$\downarrow$LPIPS: 0.00}
        \end{subfigure}
        \hspace{0.05cm}
        \begin{subfigure}[t]{0.19\textwidth}
            \centering
            \textbf{\citet{manaar}}
            \fbox{\includegraphics[width=\linewidth]{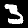}}\\
            {$\uparrow$SSIM: 1.00\\$\uparrow$PSNR: 73.8\\$\downarrow$LPIPS: 0.00}
        \end{subfigure}
        \hspace{0.05cm}
        \begin{subfigure}[t]{0.19\textwidth}
            \centering
            \textbf{\citet{halimi}}
            \fbox{\includegraphics[width=\linewidth]{figures/different_models/mnist_mlp/halimi_1.png}}\\
            {$\uparrow$SSIM: 1.00\\$\uparrow$PSNR: 100.0\\$\downarrow$LPIPS: 0.00}
        \end{subfigure}
    \end{tabular}
    }
   \caption{Single-image reconstructions on MNIST using MLP. We compare \methodname{} reconstructions from local updates of four unlearning methods.}

    \label{fig:mnist_MLP}
\end{figure}

\begin{figure}[h]
    \centering
    \resizebox{0.85\textwidth}{!}{%
    \begin{tabular}{c}
        \textbf{CIFAR100 (ResNet18) Placeholder} \\[0.2cm]
        \begin{subfigure}[t]{0.19\textwidth}
            \centering
            \textbf{Ground Truth}\vspace{0.05cm}
            \fbox{\includegraphics[width=\linewidth]{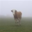}}
        \end{subfigure}
        \hspace{0.05cm}
        \begin{subfigure}[t]{0.19\textwidth}
            \centering
            \textbf{\citet{elasticsga}}
            \fbox{\includegraphics[width=\linewidth]{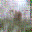}}\\
            {$\uparrow$SSIM: 0.45\\$\uparrow$PSNR: 17.2\\$\downarrow$LPIPS: 0.44}
        \end{subfigure}
        \hspace{0.05cm}
        \begin{subfigure}[t]{0.19\textwidth}
            \centering
            \textbf{ABL~\cite{abl}}
            \fbox{\includegraphics[width=\linewidth]{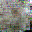}}\\
            {$\uparrow$SSIM: 0.25\\$\uparrow$PSNR: 12.9\\$\downarrow$LPIPS: 0.65}
        \end{subfigure}
        \hspace{0.05cm}
        \begin{subfigure}[t]{0.19\textwidth}
            \centering
            \textbf{\citet{manaar}}
            \fbox{\includegraphics[width=\linewidth]{figures//different_models/cifar100_resnet/halimi_1.png}}\\
            {$\uparrow$SSIM: 0.15\\$\uparrow$PSNR: 11.8\\$\downarrow$LPIPS: 0.62}
        \end{subfigure}
        \hspace{0.05cm}
        \begin{subfigure}[t]{0.19\textwidth}
            \centering
            \textbf{\citet{halimi}}
            \fbox{\includegraphics[width=\linewidth]{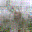}}\\
            {$\uparrow$SSIM: 0.41\\$\uparrow$PSNR: 18.2\\$\downarrow$LPIPS: 0.46}
        \end{subfigure}
    \end{tabular}
    }
     \caption{Single-image reconstructions on CIFAR100 using ResNet18. We compare \methodname{} reconstructions from local updates of four unlearning methods.}
    \label{fig:cifar100_resnet}
\end{figure}
\subsection{Reconstruction for Different Unlearning Step Size}\label{appendix:ablation_unlearn_steps}
We evaluated \methodname{} under the FedAvg protocol, where the client is allowed to perform more than one local optimization step ($\mathcal{E} > 1$). Figures~\ref{fig:step_2} and~\ref{fig:step_4} show \methodname{} reconstructions on the CIFAR-10 dataset using the ConvNet64 model for $\mathcal{E} = 2$ and $\mathcal{E} = 4$, respectively, across four unlearning algorithms. We observe that reconstruction efficacy deteriorates as the number of local epochs increases.

\begin{figure}[H]
    \centering
    \resizebox{0.85\textwidth}{!}{%
    \begin{tabular}{c}
        \textbf{CIFAR10 ($\mathcal{E} = 2$)} \\[0.2cm]
        \begin{subfigure}[t]{0.19\textwidth}
            \centering
            \textbf{Ground Truth}\vspace{0.05cm}
            \fbox{\includegraphics[width=\linewidth]{figures/single_rec_images/cifar10/gt_0.png}}
        \end{subfigure}
        \hspace{0.05cm}
        \begin{subfigure}[t]{0.19\textwidth}
            \centering
            \textbf{\citet{elasticsga}}
            \fbox{\includegraphics[width=\linewidth]{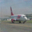}}\\
            {$\uparrow$SSIM: 0.96\\$\uparrow$PSNR: 36.4\\$\downarrow$LPIPS: 0.02}
        \end{subfigure}
        \hspace{0.05cm}
        \begin{subfigure}[t]{0.19\textwidth}
            \centering
            \textbf{ABL~\cite{abl}}
            \fbox{\includegraphics[width=\linewidth]{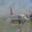}}\\
            {$\uparrow$SSIM: 0.63\\$\uparrow$PSNR: 24.8\\$\downarrow$LPIPS: 0.29}
        \end{subfigure}
        \hspace{0.05cm}
        \begin{subfigure}[t]{0.19\textwidth}
            \centering
            \textbf{\citet{manaar}}
            \fbox{\includegraphics[width=\linewidth]{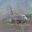}}\\
            {$\uparrow$SSIM: 0.75\\$\uparrow$PSNR: 27.4\\$\downarrow$LPIPS: 0.20}
        \end{subfigure}
        \hspace{0.05cm}
        \begin{subfigure}[t]{0.19\textwidth}
            \centering
            \textbf{\citet{halimi}}
            \fbox{\includegraphics[width=\linewidth]{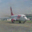}}\\
            {$\uparrow$SSIM: 0.97\\$\uparrow$PSNR: 36.1\\$\downarrow$LPIPS: 0.02}
        \end{subfigure}
    \end{tabular}
    }
   \caption{Reconstruction results from local updates with $\mathcal{E} = 2$ steps on CIFAR-10 using ConvNet64. \methodname{} is applied to four unlearning methods. Higher SSIM/PSNR and lower LPIPS indicate better reconstruction quality.}

    \label{fig:step_2}
\end{figure}

\begin{figure}[H]
    \centering
    \resizebox{0.85\textwidth}{!}{%
    \begin{tabular}{c}
        \textbf{CIFAR10 ($\mathcal{E} = 4$)} \\[0.2cm]
        \begin{subfigure}[t]{0.19\textwidth}
            \centering
            \textbf{Ground Truth}\vspace{0.05cm}
            \fbox{\includegraphics[width=\linewidth]{figures/single_rec_images/cifar10/gt_0.png}}
        \end{subfigure}
        \hspace{0.05cm}
        \begin{subfigure}[t]{0.19\textwidth}
            \centering
            \textbf{\citet{elasticsga}}
            \fbox{\includegraphics[width=\linewidth]{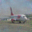}}\\
            {$\uparrow$SSIM: 0.91\\$\uparrow$PSNR: 32.8\\$\downarrow$LPIPS: 0.05}
        \end{subfigure}
        \hspace{0.05cm}
        \begin{subfigure}[t]{0.19\textwidth}
            \centering
            \textbf{ABL~\cite{abl}}
            \fbox{\includegraphics[width=\linewidth]{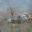}}\\
            {$\uparrow$SSIM: 0.71\\$\uparrow$PSNR: 25.6\\$\downarrow$LPIPS: 0.21}
        \end{subfigure}
        \hspace{0.05cm}
        \begin{subfigure}[t]{0.19\textwidth}
            \centering
            \textbf{\citet{manaar}}
            \fbox{\includegraphics[width=\linewidth]{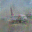}}\\
            {$\uparrow$SSIM: 0.73\\$\uparrow$PSNR: 26.6\\$\downarrow$LPIPS: 0.20}
        \end{subfigure}
        \hspace{0.05cm}
        \begin{subfigure}[t]{0.19\textwidth}
            \centering
            \textbf{\citet{halimi}}
            \fbox{\includegraphics[width=\linewidth]{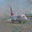}}\\
            {$\uparrow$SSIM: 0.83\\$\uparrow$PSNR: 29.0\\$\downarrow$LPIPS: 0.15}
        \end{subfigure}
    \end{tabular}
    }
    \caption{Reconstruction results from local updates with $\mathcal{E} = 4$ steps on CIFAR-10 using ConvNet64. \methodname{} is applied to four unlearning methods. Higher SSIM/PSNR and lower LPIPS indicate better reconstruction quality.}
    \label{fig:step_4}
\end{figure}

\subsection{Reconstructed Images for CIFAR100, FEMNIST and MNIST}\label{appendix:reconstructed_images}
Figures~\ref{fig:cifar100_single},~\ref{fig:femnist_single}, and~\ref{fig:mnist_single} demonstrate the capability of \methodname{} to reconstruct images across different datasets and unlearning algorithms using the ConvNet64 model.

\begin{figure}[H]
    \centering
    \resizebox{0.85\textwidth}{!}{%
    \begin{tabular}{c}
        \textbf{CIFAR100} \\[0.2cm]
        \begin{subfigure}[t]{0.19\textwidth}
            \centering
            \textbf{Ground Truth}\vspace{0.05cm}
            \fbox{\includegraphics[width=\linewidth]{figures/single_rec_images/cifar100/gt_0.png}}
        \end{subfigure}
        \hspace{0.05cm}
        \begin{subfigure}[t]{0.19\textwidth}
            \centering
            \textbf{\citet{elasticsga}}
            \fbox{\includegraphics[width=\linewidth]{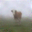}}\\
            {$\uparrow$SSIM: 0.94\\$\uparrow$PSNR: 33.7\\$\downarrow$LPIPS: 0.04}
        \end{subfigure}
        \hspace{0.05cm}
        \begin{subfigure}[t]{0.19\textwidth}
            \centering
            \textbf{ABL~\cite{abl}}
            \fbox{\includegraphics[width=\linewidth]{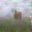}}\\
            {$\uparrow$SSIM: 0.77\\$\uparrow$PSNR: 24.5\\$\downarrow$LPIPS: 0.22}
        \end{subfigure}
        \hspace{0.05cm}
        \begin{subfigure}[t]{0.19\textwidth}
            \centering
            \textbf{\citet{manaar}}
            \fbox{\includegraphics[width=\linewidth]{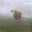}}\\
            {$\uparrow$SSIM: 0.86\\$\uparrow$PSNR: 26.9\\$\downarrow$LPIPS: 0.09}
        \end{subfigure}
        \hspace{0.05cm}
        \begin{subfigure}[t]{0.19\textwidth}
            \centering
            \textbf{\citet{halimi}}
            \fbox{\includegraphics[width=\linewidth]{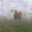}}\\
            {$\uparrow$SSIM: 0.82\\$\uparrow$PSNR: 25.9\\$\downarrow$LPIPS: 0.15}
        \end{subfigure}
    \end{tabular}
    }
      \caption{\textit{\methodname{}} reconstructions from local updates of four unlearning methods on CIFAR100. }

    \label{fig:cifar100_single}
\end{figure}

\begin{figure}[H]
    \centering
    \resizebox{0.85\textwidth}{!}{%
    \begin{tabular}{c}
        \textbf{FEMNIST} \\[0.2cm]
        \begin{subfigure}[t]{0.19\textwidth}
            \centering
            \textbf{Ground Truth}\vspace{0.05cm}
            \fbox{\includegraphics[width=\linewidth]{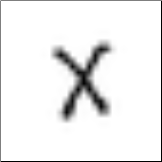}}
        \end{subfigure}
        \hspace{0.05cm}
        \begin{subfigure}[t]{0.19\textwidth}
            \centering
            \textbf{\citet{elasticsga}}
            \fbox{\includegraphics[width=\linewidth]{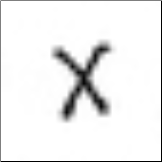}}\\
            {$\uparrow$SSIM: 1.00\\$\uparrow$PSNR: 45.4\\$\downarrow$LPIPS: 0.00}
        \end{subfigure}
        \hspace{0.05cm}
        \begin{subfigure}[t]{0.19\textwidth}
            \centering
            \textbf{ABL~\cite{abl}}
            \fbox{\includegraphics[width=\linewidth]{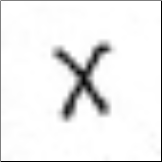}}\\
            {$\uparrow$SSIM: 0.99\\$\uparrow$PSNR: 38.1\\$\downarrow$LPIPS: 0.00}
        \end{subfigure}
        \hspace{0.05cm}
        \begin{subfigure}[t]{0.19\textwidth}
            \centering
            \textbf{\citet{manaar}}
            \fbox{\includegraphics[width=\linewidth]{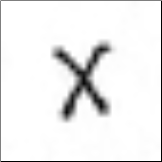}}\\
            {$\uparrow$SSIM: 0.99\\$\uparrow$PSNR: 40.8\\$\downarrow$LPIPS: 0.00}
        \end{subfigure}
        \hspace{0.05cm}
        \begin{subfigure}[t]{0.19\textwidth}
            \centering
            \textbf{\citet{halimi}}
            \fbox{\includegraphics[width=\linewidth]{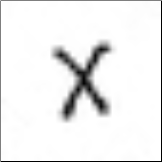}}\\
            {$\uparrow$SSIM: 0.99\\$\uparrow$PSNR: 40.8\\$\downarrow$LPIPS: 0.00}
        \end{subfigure}
    \end{tabular}
    }
      \caption{\textit{\methodname} reconstructions from local updates of four unlearning methods on FEMNIST. }
    \label{fig:femnist_single}
\end{figure}

\begin{figure}[H]
    \centering
    \resizebox{0.85\textwidth}{!}{%
    \begin{tabular}{c}
        \textbf{MNIST} \\[0.2cm]
        \begin{subfigure}[t]{0.19\textwidth}
            \centering
            \textbf{Ground Truth}\vspace{0.05cm}
            \fbox{\includegraphics[width=\linewidth]{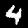}}
        \end{subfigure}
        \hspace{0.05cm}
        \begin{subfigure}[t]{0.19\textwidth}
            \centering
            \textbf{\citet{elasticsga}}
            \fbox{\includegraphics[width=\linewidth]{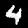}}\\
            {$\uparrow$SSIM: 0.98\\$\uparrow$PSNR: 38.7\\$\downarrow$LPIPS: 0.00}
        \end{subfigure}
        \hspace{0.05cm}
        \begin{subfigure}[t]{0.19\textwidth}
            \centering
            \textbf{ABL~\cite{abl}}
            \fbox{\includegraphics[width=\linewidth]{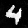}}\\
            {$\uparrow$SSIM: 0.88\\$\uparrow$PSNR: 30.02\\$\downarrow$LPIPS: 0.02}
        \end{subfigure}
        \hspace{0.05cm}
        \begin{subfigure}[t]{0.19\textwidth}
            \centering
            \textbf{\citet{manaar}}
            \fbox{\includegraphics[width=\linewidth]{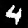}}\\
            {$\uparrow$SSIM: 0.96\\$\uparrow$PSNR: 39.6\\$\downarrow$LPIPS: 0.00}
        \end{subfigure}
        \hspace{0.05cm}
        \begin{subfigure}[t]{0.19\textwidth}
            \centering
            \textbf{\citet{halimi}}
            \fbox{\includegraphics[width=\linewidth]{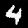}}\\
            {$\uparrow$SSIM: 0.97\\$\uparrow$PSNR: 40.1\\$\downarrow$LPIPS: 0.00}
        \end{subfigure}
    \end{tabular}
    }
    \caption{\textit{\methodname} reconstructions from local updates of four unlearning methods on MNIST. }
    \label{fig:mnist_single}
\end{figure}
\subsection{Batch Reconstructions for Datasets and Batch Sizes}\label{appendix:grid_rec}

We evaluate \methodname{} by reconstructing images from local model updates produced by two unlearning methods—ABL~\cite{abl} and \citet{halimi}—across four datasets: CIFAR-10, CIFAR-100, FEMNIST, and MNIST. Figures~\ref{fig:batch_cifar10_halimi} to~\ref{fig:batch_mnist_halimi} show how reconstruction quality degrades as the batch size of removed samples increases from 4 to 128. While these figures focus on visual results from two representative methods, the accompanying metric plots (Figures~\ref{fig:metric_figs_cfiar10} to~\ref{fig:metric_figs_mnist}) provide a broader comparison across all evaluated algorithms. They show consistent trends—declining SSIM and PSNR, and increasing LPIPS—highlighting the growing difficulty of unlearning larger forget sets.

\begin{figure}[H]
    \centering
    \begin{tabular}{cccccc}
        \fbox{\includegraphics[width=0.14\textwidth, height=3.3cm, keepaspectratio]{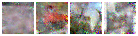}} &
        \fbox{\includegraphics[width=0.14\textwidth, height=3.3cm, keepaspectratio]{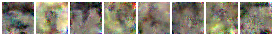}} &
        \fbox{\includegraphics[width=0.14\textwidth, height=3.3cm, keepaspectratio]{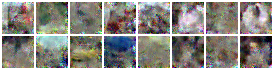}} &
        \fbox{\includegraphics[width=0.14\textwidth, height=3.3cm, keepaspectratio]{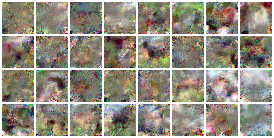}} &
        \fbox{\includegraphics[width=0.14\textwidth, height=3.3cm, keepaspectratio]{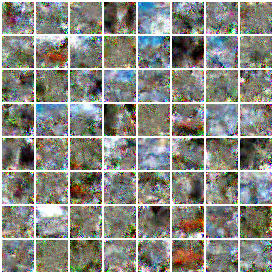}} &
        \fbox{\includegraphics[width=0.14\textwidth, height=3.3cm, keepaspectratio]{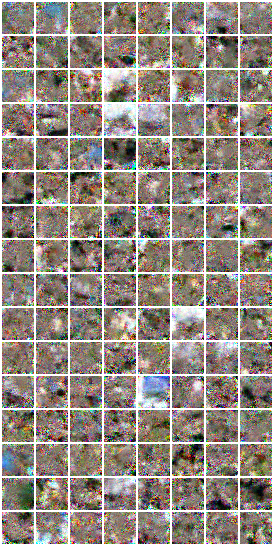}} \\
        \textbf{4} & \textbf{8} & \textbf{16} & \textbf{32} & \textbf{64} & \textbf{128}
    \end{tabular}
    \caption{Multi image reconstruction from \citet{halimi} on CIFAR10.}
    \label{fig:batch_cifar10_halimi}
\end{figure}

\begin{figure}[H]
    \centering
    \begin{tabular}{cccccc}
        \fbox{\includegraphics[width=0.14\textwidth, height=3.3cm, keepaspectratio]{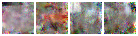}} &
        \fbox{\includegraphics[width=0.14\textwidth, height=3.3cm, keepaspectratio]{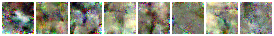}} &
        \fbox{\includegraphics[width=0.14\textwidth, height=3.3cm, keepaspectratio]{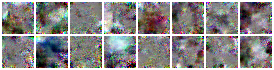}} &
        \fbox{\includegraphics[width=0.14\textwidth, height=3.3cm, keepaspectratio]{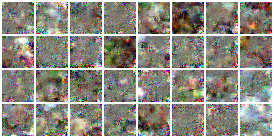}} &
        \fbox{\includegraphics[width=0.14\textwidth, height=3.3cm, keepaspectratio]{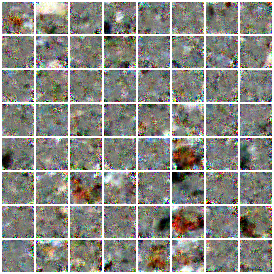}} &
        \fbox{\includegraphics[width=0.14\textwidth, height=3.3cm, keepaspectratio]{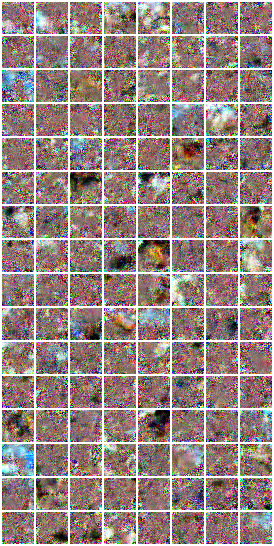}} \\
        \textbf{4} & \textbf{8} & \textbf{16} & \textbf{32} & \textbf{64} & \textbf{128}
    \end{tabular}
    \caption{Multi image reconstruction from ABL~\cite{abl} on CIFAR10.}
    \label{fig:batch_cifar10_abl}
\end{figure}

\begin{figure}[H]
    \centering
    \resizebox{0.85\textwidth}{!}{%
    \begin{tabular}{c}
        \textbf{CIFAR10} \\[0.2cm]
        \begin{subfigure}[t]{0.19\textwidth}
            \centering
            \textbf{SSIM}
            \fbox{\includegraphics[width=\linewidth]{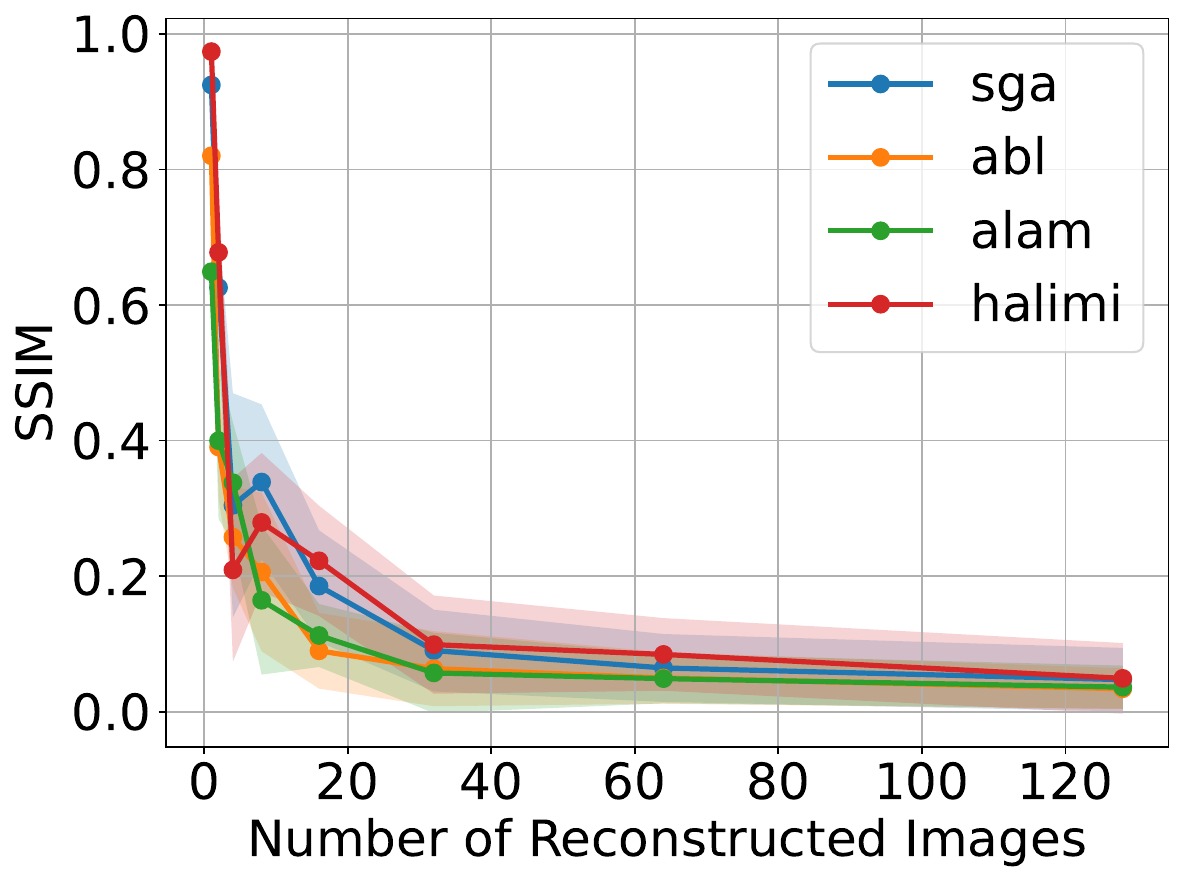}}
        \end{subfigure}
        \hspace{0.05cm}
        \begin{subfigure}[t]{0.19\textwidth}
            \centering
            \textbf{PSNR}
            \fbox{\includegraphics[width=\linewidth]{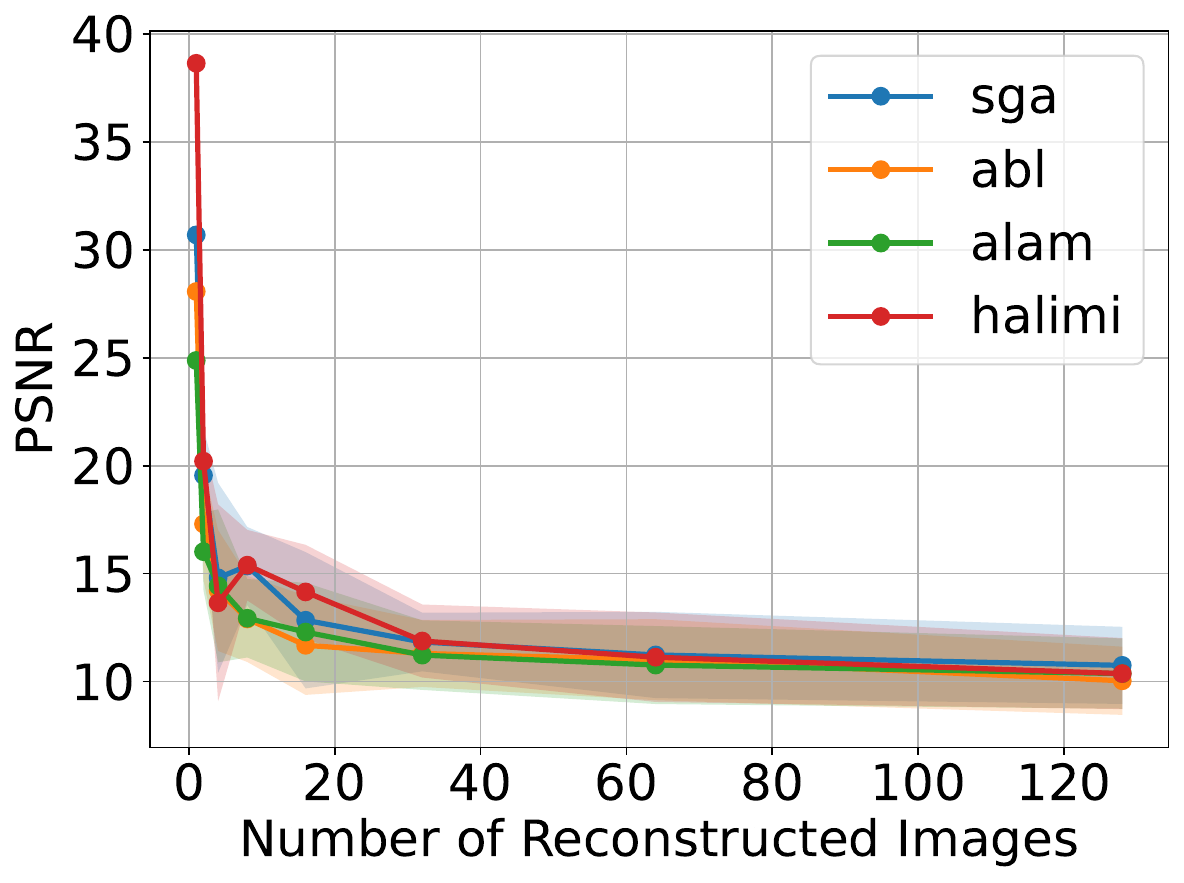}}\\
           
        \end{subfigure}
        \hspace{0.05cm}
        \begin{subfigure}[t]{0.19\textwidth}
            \centering
            \textbf{LPIPS}
            \fbox{\includegraphics[width=\linewidth]{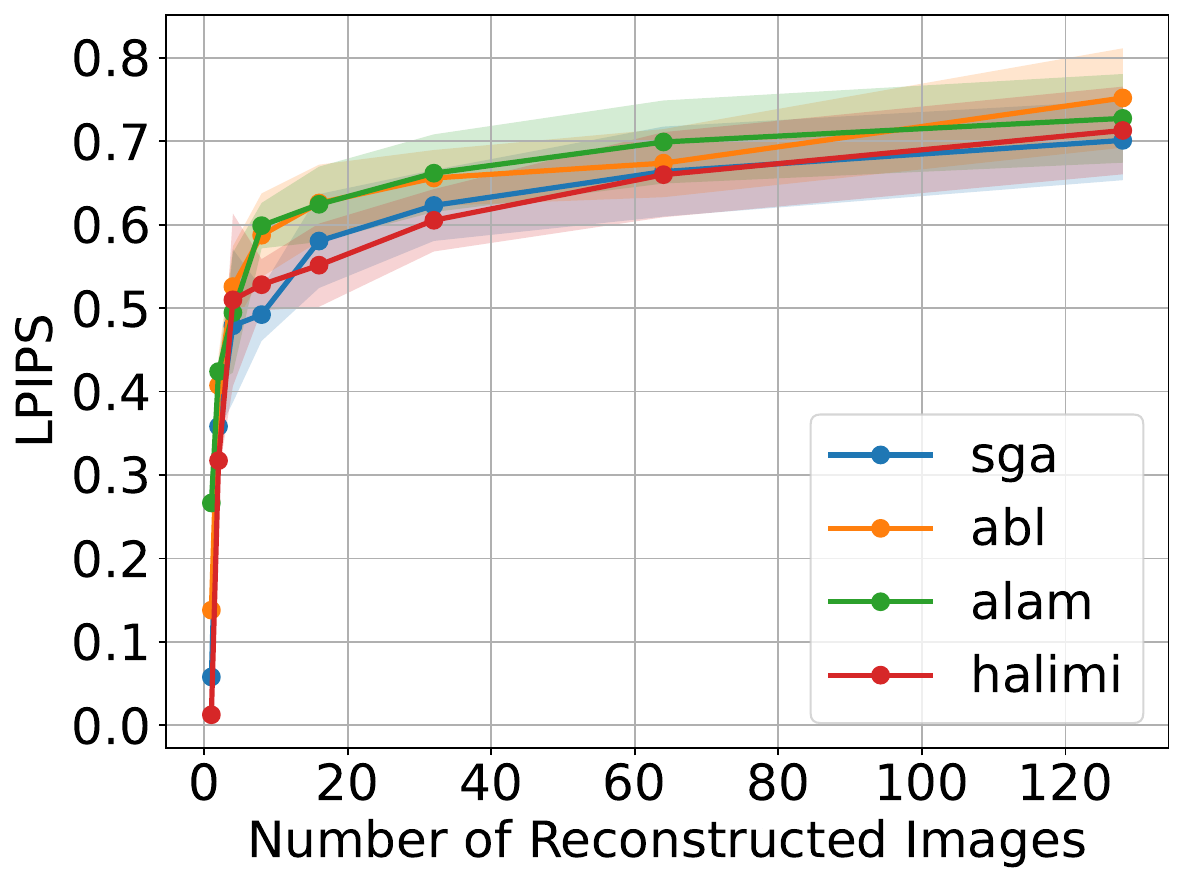}}\\
            
        \end{subfigure}
      
    \end{tabular}
    }
   \caption{Effect of batch size scaling on reconstruction quality for the CIFAR10 dataset on four unlearning algorithms, measured using SSIM, PSNR, and LPIPS. Larger batch sizes tend to degrade reconstruction fidelity, as indicated by decreasing SSIM/PSNR and increasing LPIPS.}
    \label{fig:metric_figs_cfiar10}
\end{figure}

\begin{figure}[H]
    \centering
    \begin{tabular}{cccccc}
        \fbox{\includegraphics[width=0.14\textwidth, height=3.3cm, keepaspectratio]{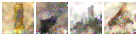}} &
        \fbox{\includegraphics[width=0.14\textwidth, height=3.3cm, keepaspectratio]{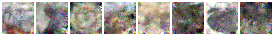}} &
        \fbox{\includegraphics[width=0.14\textwidth, height=3.3cm, keepaspectratio]{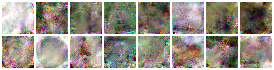}} &
        \fbox{\includegraphics[width=0.14\textwidth, height=3.3cm, keepaspectratio]{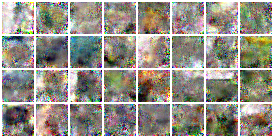}} &
        \fbox{\includegraphics[width=0.14\textwidth, height=3.3cm, keepaspectratio]{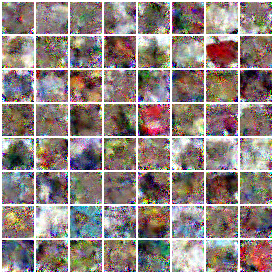}} &
        \fbox{\includegraphics[width=0.14\textwidth, height=3.3cm, keepaspectratio]{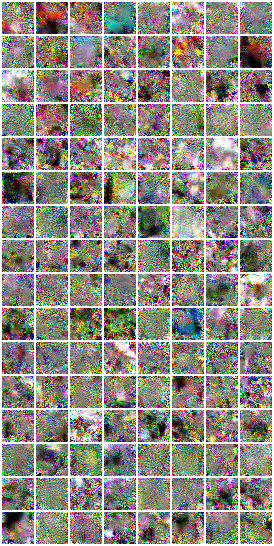}} \\
        \textbf{4} & \textbf{8} & \textbf{16} & \textbf{32} & \textbf{64} & \textbf{128}
    \end{tabular}
    \caption{Multi image reconstruction from \citet{halimi} on CIFAR100.}
    \label{fig:batch_cifar100_halimi}
\end{figure}

\begin{figure}[H]
    \centering
    \begin{tabular}{cccccc}
        \fbox{\includegraphics[width=0.14\textwidth, height=3.3cm, keepaspectratio]{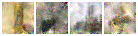}} &
        \fbox{\includegraphics[width=0.14\textwidth, height=3.3cm, keepaspectratio]{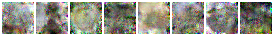}} &
        \fbox{\includegraphics[width=0.14\textwidth, height=3.3cm, keepaspectratio]{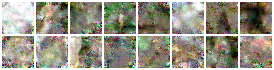}} &
        \fbox{\includegraphics[width=0.14\textwidth, height=3.3cm, keepaspectratio]{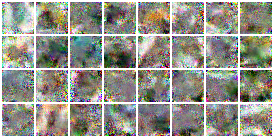}} &
        \fbox{\includegraphics[width=0.14\textwidth, height=3.3cm, keepaspectratio]{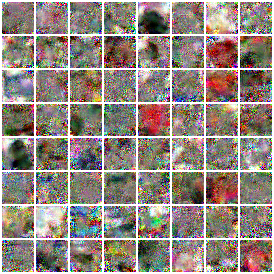}} &
        \fbox{\includegraphics[width=0.14\textwidth, height=3.3cm, keepaspectratio]{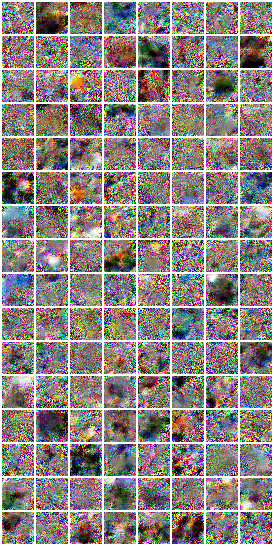}} \\
        \textbf{4} & \textbf{8} & \textbf{16} & \textbf{32} & \textbf{64} & \textbf{128}
    \end{tabular}
    \caption{Multi image reconstruction from ABL~\cite{abl} on CIFAR100.}
    \label{fig:batch_cifar100_abl}
\end{figure}

\begin{figure}[H]
    \centering
    \resizebox{0.85\textwidth}{!}{%
    \begin{tabular}{c}
        \textbf{CIFAR100} \\[0.2cm]
        \begin{subfigure}[t]{0.19\textwidth}
            \centering
            \textbf{SSIM}
            \fbox{\includegraphics[width=\linewidth]{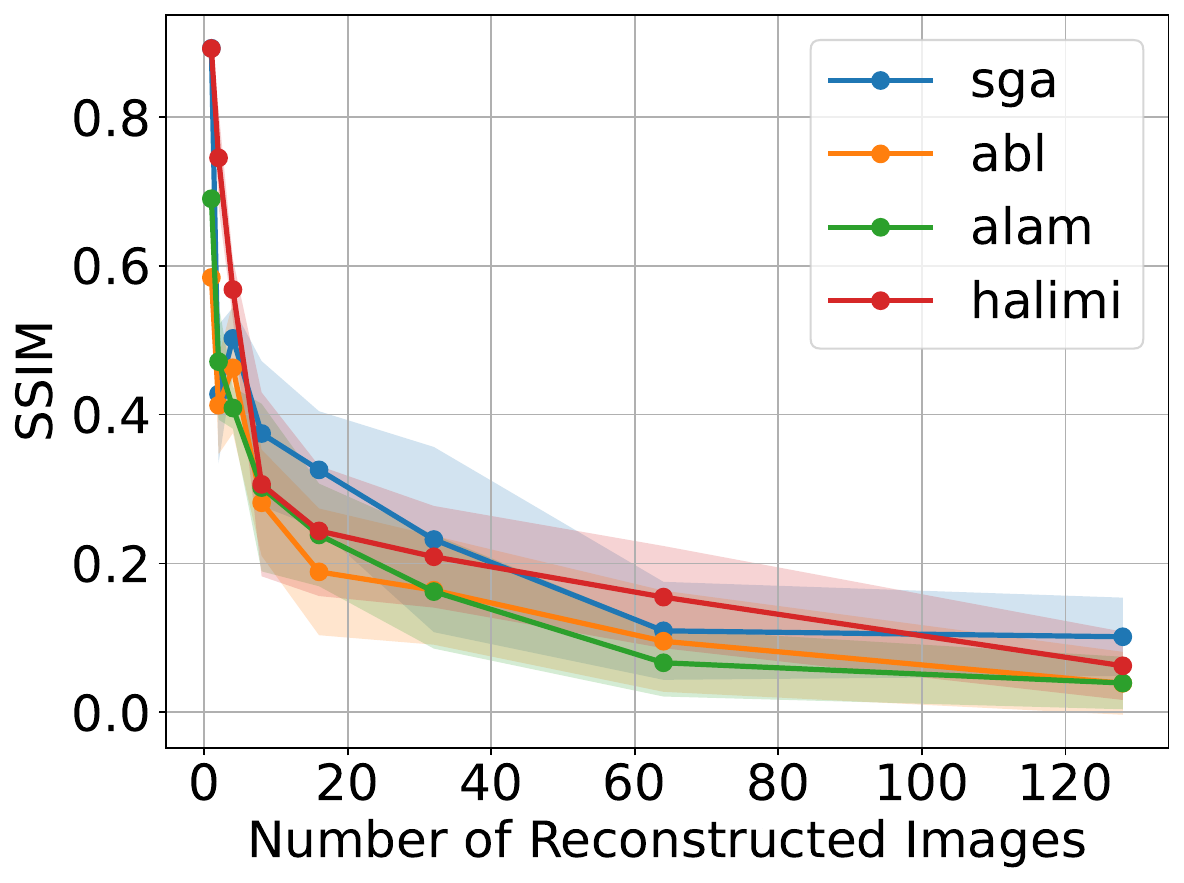}}
        \end{subfigure}
        \hspace{0.05cm}
        \begin{subfigure}[t]{0.19\textwidth}
            \centering
            \textbf{PSNR}
            \fbox{\includegraphics[width=\linewidth]{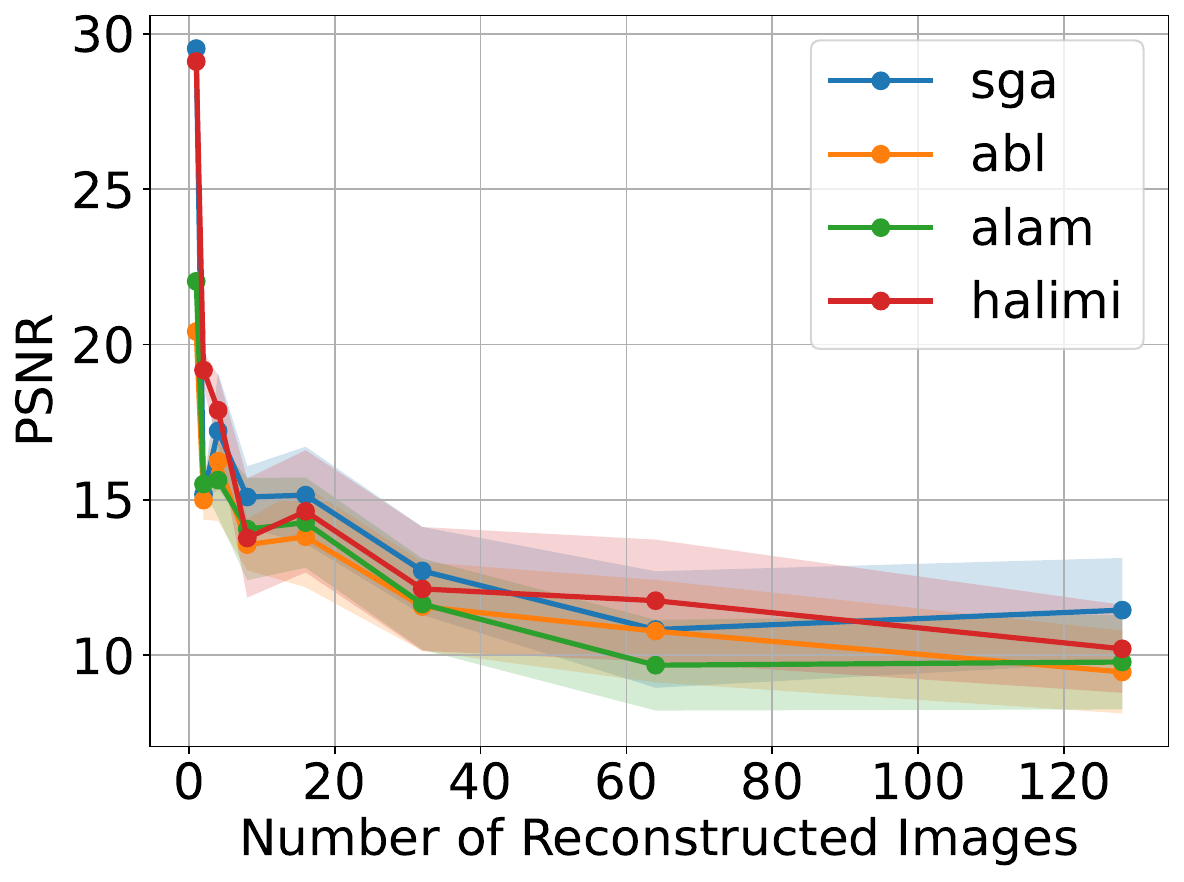}}\\
           
        \end{subfigure}
        \hspace{0.05cm}
        \begin{subfigure}[t]{0.19\textwidth}
            \centering
            \textbf{LPIPS}
            \fbox{\includegraphics[width=\linewidth]{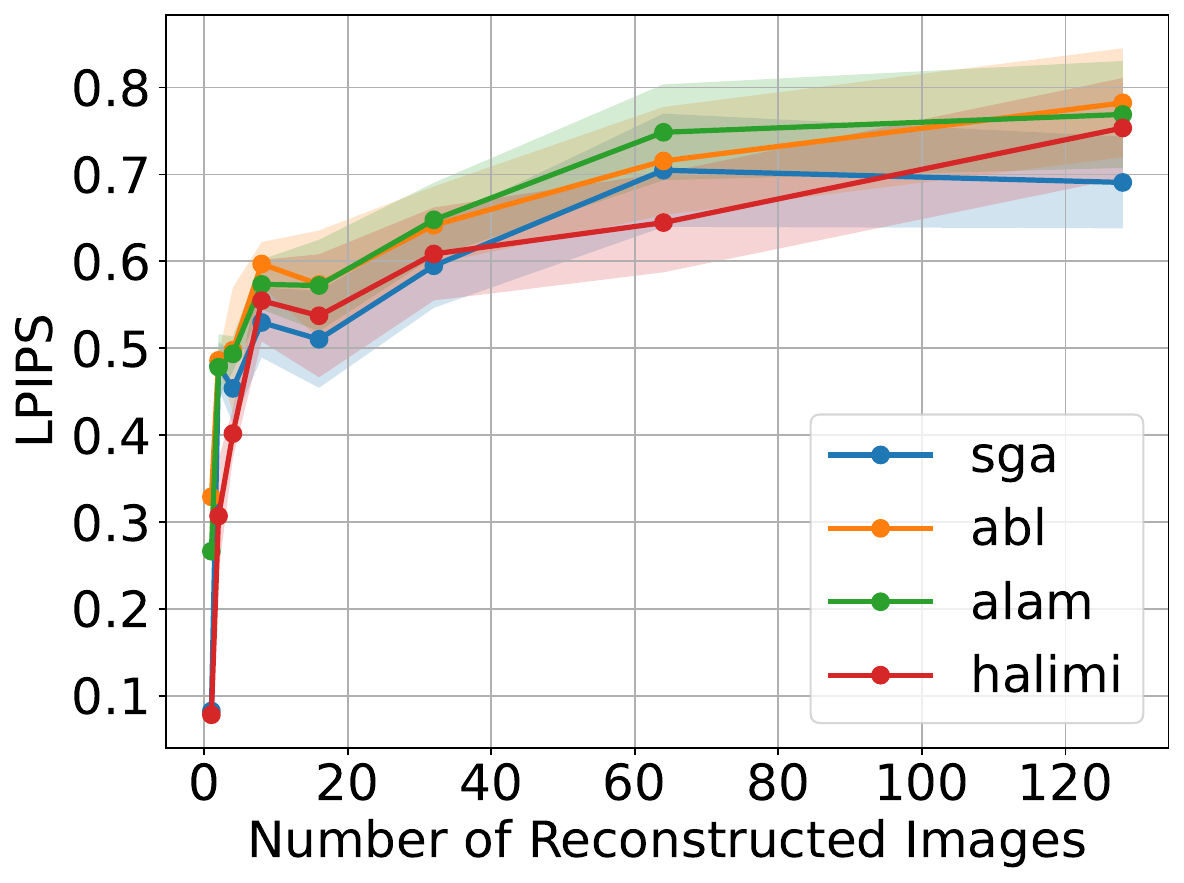}}\\
            
        \end{subfigure}
      
    \end{tabular}
    }
     \caption{Effect of batch size scaling on reconstruction quality for the CIFAR100 dataset on four unlearning algorithms, measured using SSIM, PSNR, and LPIPS. Larger batch sizes tend to degrade reconstruction fidelity, as indicated by decreasing SSIM/PSNR and increasing LPIPS.}
    \label{fig:metric_figs_cfiar100}
\end{figure}

\begin{figure}[H]
    \centering
    \begin{tabular}{cccccc}
        \fbox{\includegraphics[width=0.14\textwidth, height=3.3cm, keepaspectratio]{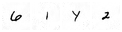}} &
        \fbox{\includegraphics[width=0.14\textwidth, height=3.3cm, keepaspectratio]{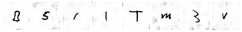}} &
        \fbox{\includegraphics[width=0.14\textwidth, height=3.3cm, keepaspectratio]{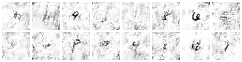}} &
        \fbox{\includegraphics[width=0.14\textwidth, height=3.3cm, keepaspectratio]{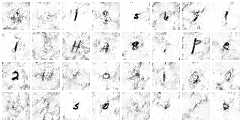}} &
        \fbox{\includegraphics[width=0.14\textwidth, height=3.3cm, keepaspectratio]{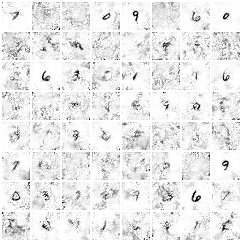}} &
        \fbox{\includegraphics[width=0.14\textwidth, height=3.3cm, keepaspectratio]{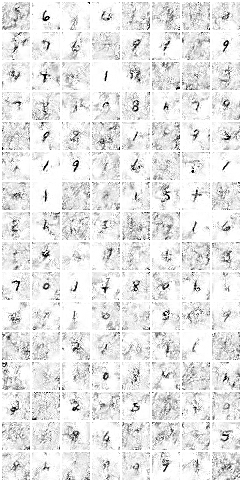}} \\
        \textbf{4} & \textbf{8} & \textbf{16} & \textbf{32} & \textbf{64} & \textbf{128}
    \end{tabular}
    \caption{Multi image reconstruction from \citet{halimi} on FEMNIST.}
    \label{fig:batch_femnist_halimi}
\end{figure}

\begin{figure}[H]
    \centering
    \begin{tabular}{cccccc}
        \fbox{\includegraphics[width=0.14\textwidth, height=3.3cm, keepaspectratio]{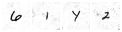}} &
        \fbox{\includegraphics[width=0.14\textwidth, height=3.3cm, keepaspectratio]{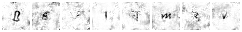}} &
        \fbox{\includegraphics[width=0.14\textwidth, height=3.3cm, keepaspectratio]{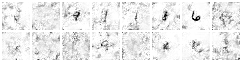}} &
        \fbox{\includegraphics[width=0.14\textwidth, height=3.3cm, keepaspectratio]{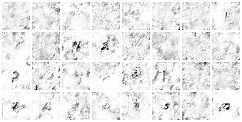}} &
        \fbox{\includegraphics[width=0.14\textwidth, height=3.3cm, keepaspectratio]{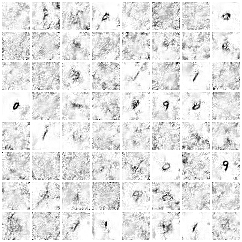}} &
        \fbox{\includegraphics[width=0.14\textwidth, height=3.3cm, keepaspectratio]{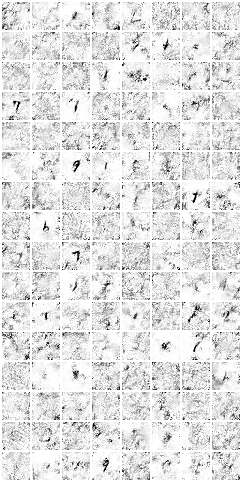}} \\
        \textbf{4} & \textbf{8} & \textbf{16} & \textbf{32} & \textbf{64} & \textbf{128}
    \end{tabular}
    \caption{Multi image reconstruction from ABL~\cite{abl} on FEMNIST.}
    \label{fig:batch_femnist_abl}
\end{figure}
\begin{figure}[H]
    \centering
    \resizebox{0.85\textwidth}{!}{%
    \begin{tabular}{c}
        \textbf{FEMNIST} \\[0.2cm]
        \begin{subfigure}[t]{0.19\textwidth}
            \centering
            \textbf{SSIM}
            \fbox{\includegraphics[width=\linewidth]{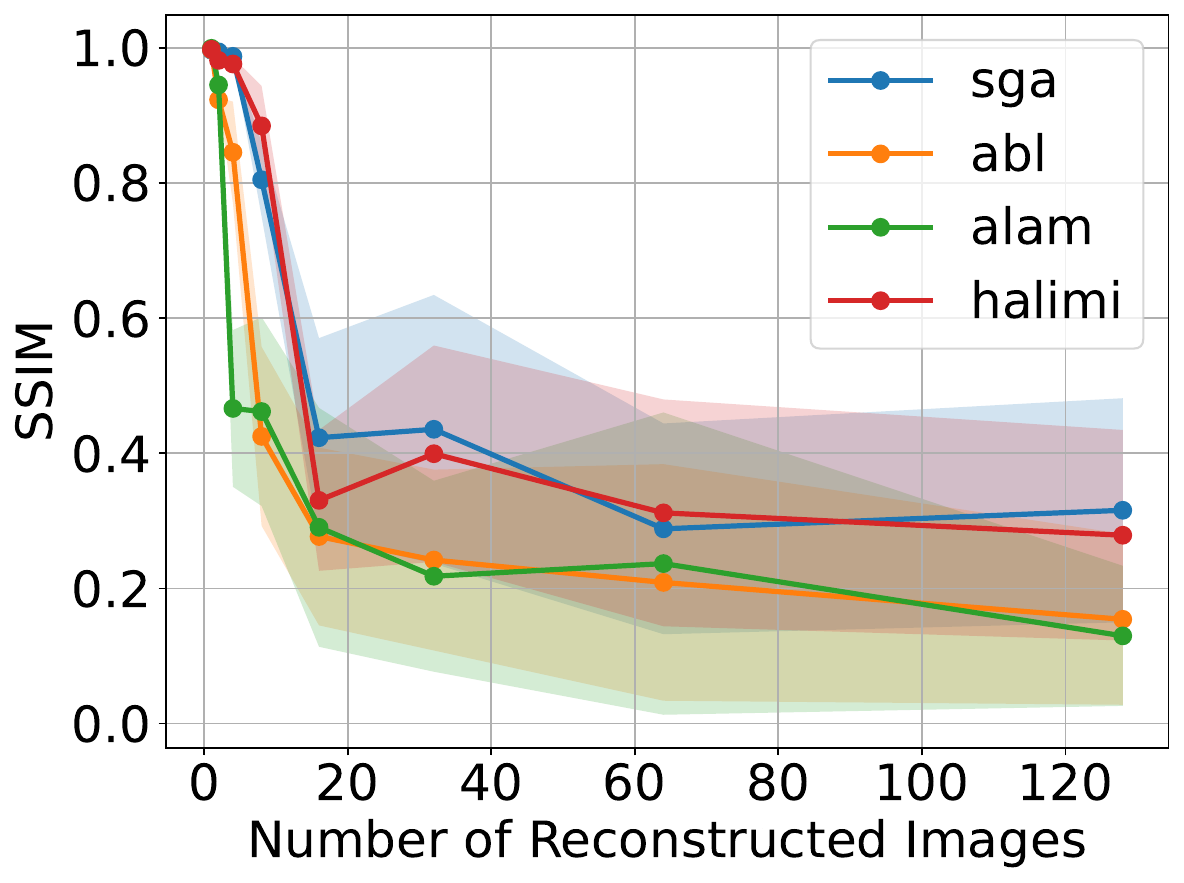}}
        \end{subfigure}
        \hspace{0.05cm}
        \begin{subfigure}[t]{0.19\textwidth}
            \centering
            \textbf{PSNR}
            \fbox{\includegraphics[width=\linewidth]{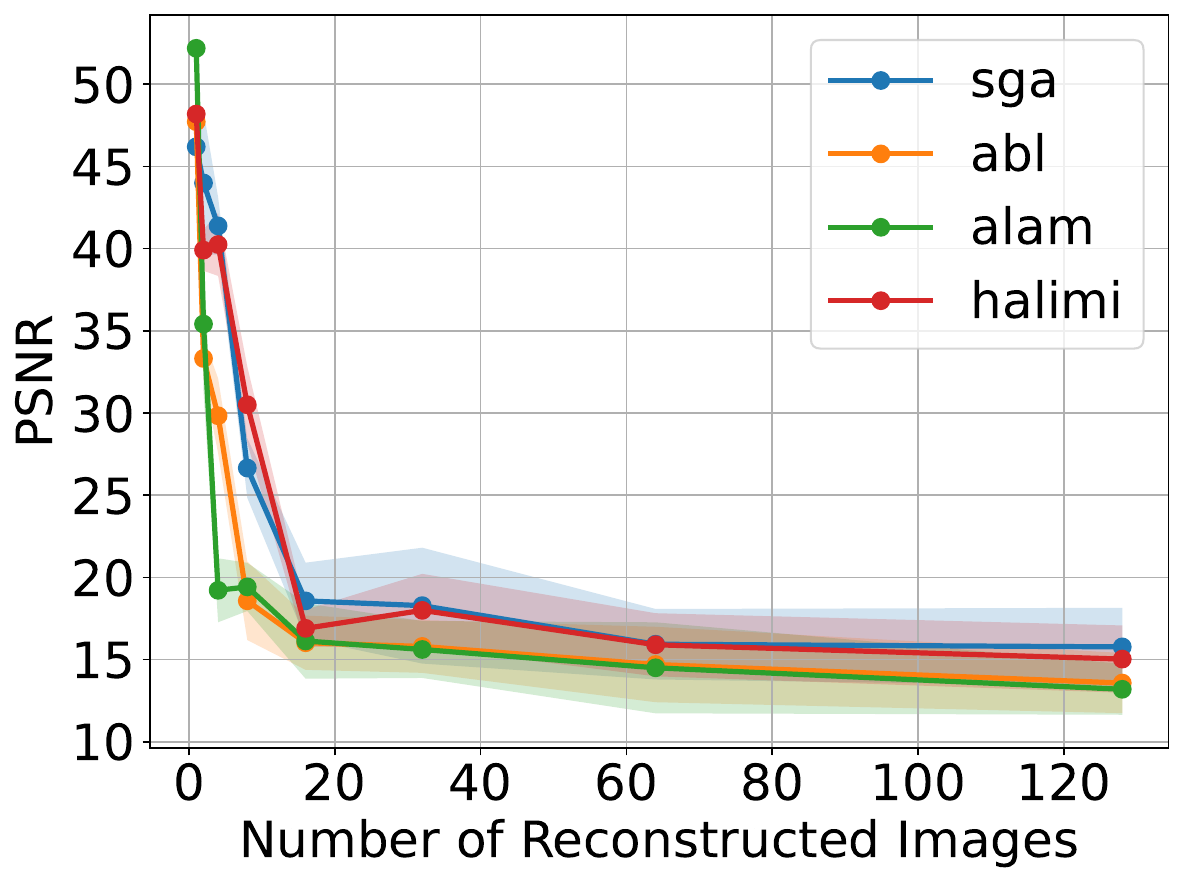}}\\
           
        \end{subfigure}
        \hspace{0.05cm}
        \begin{subfigure}[t]{0.19\textwidth}
            \centering
            \textbf{LPIPS}
            \fbox{\includegraphics[width=\linewidth]{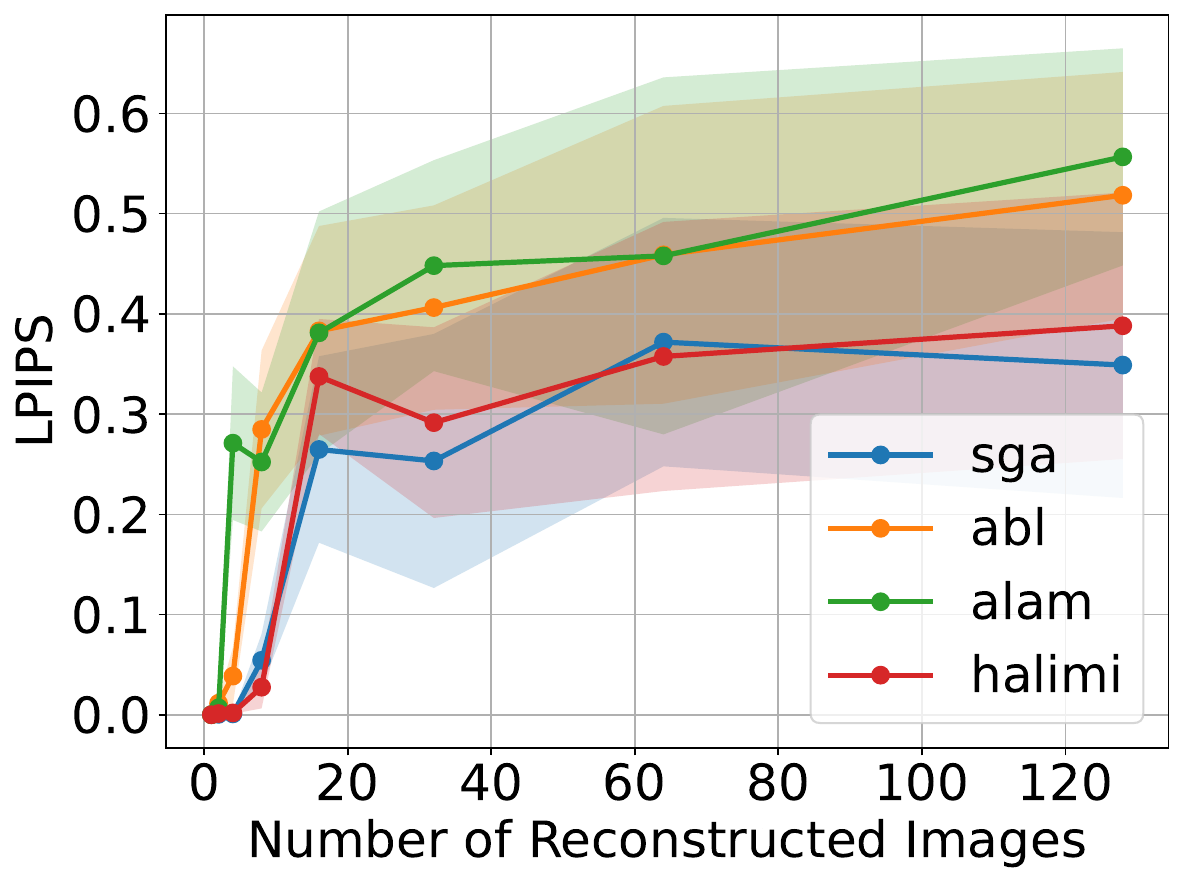}}\\
            
        \end{subfigure}
      
    \end{tabular}
    }
      \caption{Effect of batch size scaling on reconstruction quality for the FEMNIST dataset on four unlearning algorithms, measured using SSIM, PSNR, and LPIPS. Larger batch sizes tend to degrade reconstruction fidelity, as indicated by decreasing SSIM/PSNR and increasing LPIPS.}
    \label{fig:metric_figs_femnist}
\end{figure}

\begin{figure}[H]
    \centering
    \begin{tabular}{cccccc}
        \fbox{\includegraphics[width=0.14\textwidth, height=3.3cm, keepaspectratio]{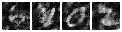}} &
        \fbox{\includegraphics[width=0.14\textwidth, height=3.3cm, keepaspectratio]{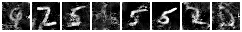}} &
        \fbox{\includegraphics[width=0.14\textwidth, height=3.3cm, keepaspectratio]{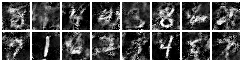}} &
        \fbox{\includegraphics[width=0.14\textwidth, height=3.3cm, keepaspectratio]{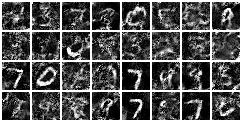}} &
        \fbox{\includegraphics[width=0.14\textwidth, height=3.3cm, keepaspectratio]{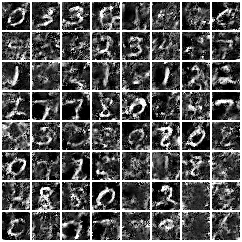}} &
        \fbox{\includegraphics[width=0.14\textwidth, height=3.3cm, keepaspectratio]{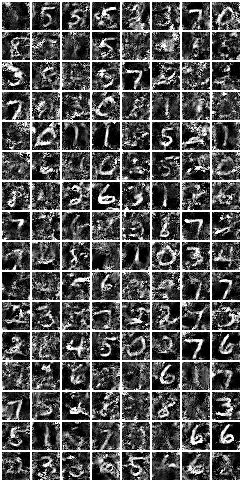}} \\
        \textbf{4} & \textbf{8} & \textbf{16} & \textbf{32} & \textbf{64} & \textbf{128}
    \end{tabular}
    \caption{Multi image reconstruction from \citet{halimi} on MNIST.}
    \label{fig:batch_mnist_halimi}
\end{figure}

\begin{figure}[H]
    \centering
    \begin{tabular}{cccccc}
        \fbox{\includegraphics[width=0.14\textwidth, height=3.3cm, keepaspectratio]{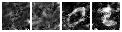}} &
        \fbox{\includegraphics[width=0.14\textwidth, height=3.3cm, keepaspectratio]{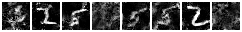}} &
        \fbox{\includegraphics[width=0.14\textwidth, height=3.3cm, keepaspectratio]{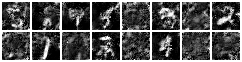}} &
        \fbox{\includegraphics[width=0.14\textwidth, height=3.3cm, keepaspectratio]{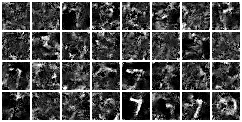}} &
        \fbox{\includegraphics[width=0.14\textwidth, height=3.3cm, keepaspectratio]{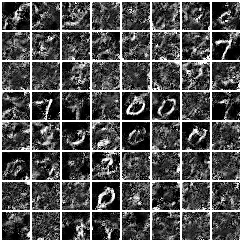}} &
        \fbox{\includegraphics[width=0.14\textwidth, height=3.3cm, keepaspectratio]{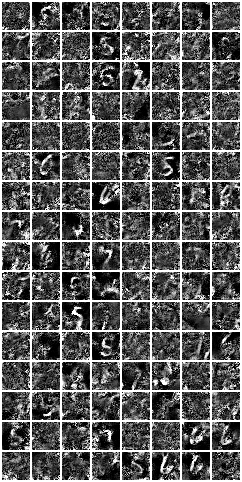}} \\
        \textbf{4} & \textbf{8} & \textbf{16} & \textbf{32} & \textbf{64} & \textbf{128}
    \end{tabular}
    \caption{Multi image reconstruction from ABL~\cite{abl} on MNIST.}
    \label{fig:batch_mnist_abl}
\end{figure}

\begin{figure}[H]
    \centering
    \resizebox{0.85\textwidth}{!}{%
    \begin{tabular}{c}
        \textbf{MNIST} \\[0.2cm]
        \begin{subfigure}[t]{0.19\textwidth}
            \centering
            \textbf{SSIM}
            \fbox{\includegraphics[width=\linewidth]{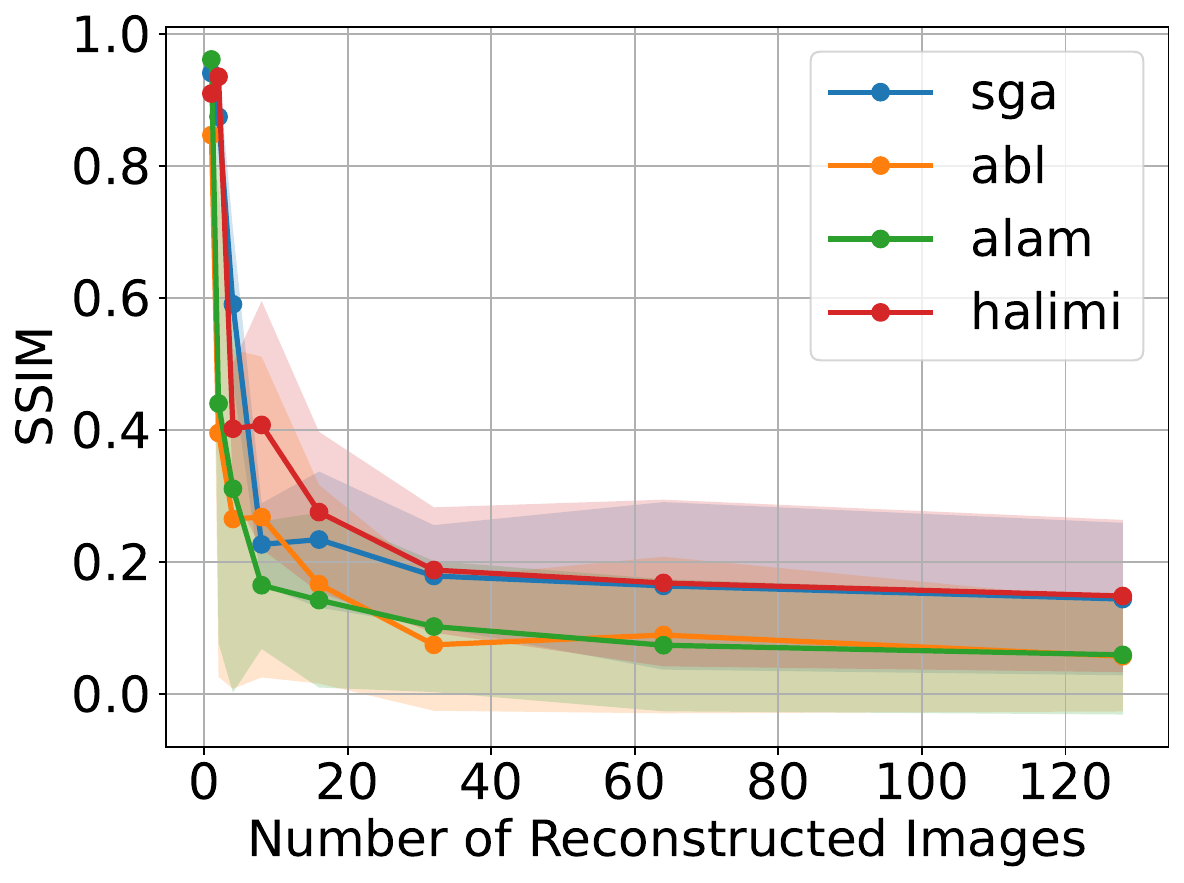}}
        \end{subfigure}
        \hspace{0.05cm}
        \begin{subfigure}[t]{0.19\textwidth}
            \centering
            \textbf{PSNR}
            \fbox{\includegraphics[width=\linewidth]{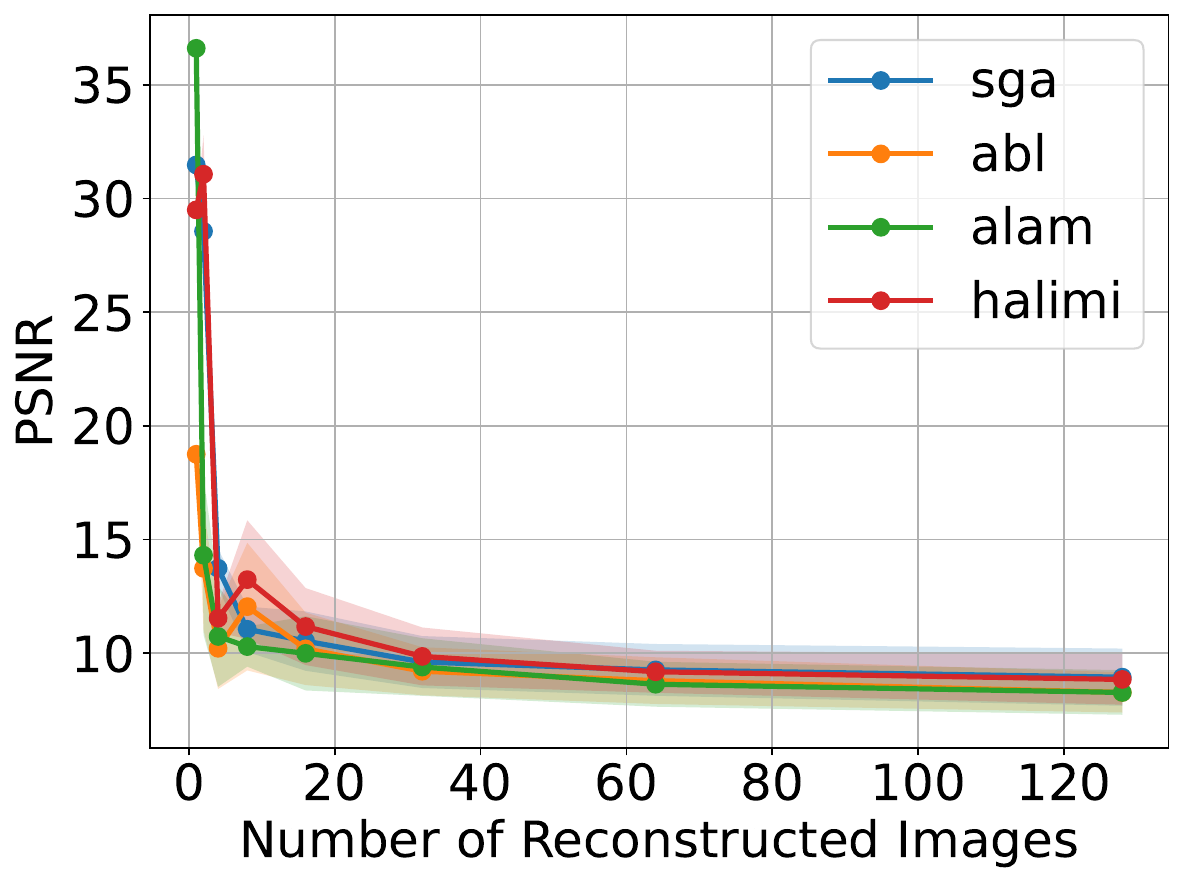}}\\
           
        \end{subfigure}
        \hspace{0.05cm}
        \begin{subfigure}[t]{0.19\textwidth}
            \centering
            \textbf{LPIPS}
            \fbox{\includegraphics[width=\linewidth]{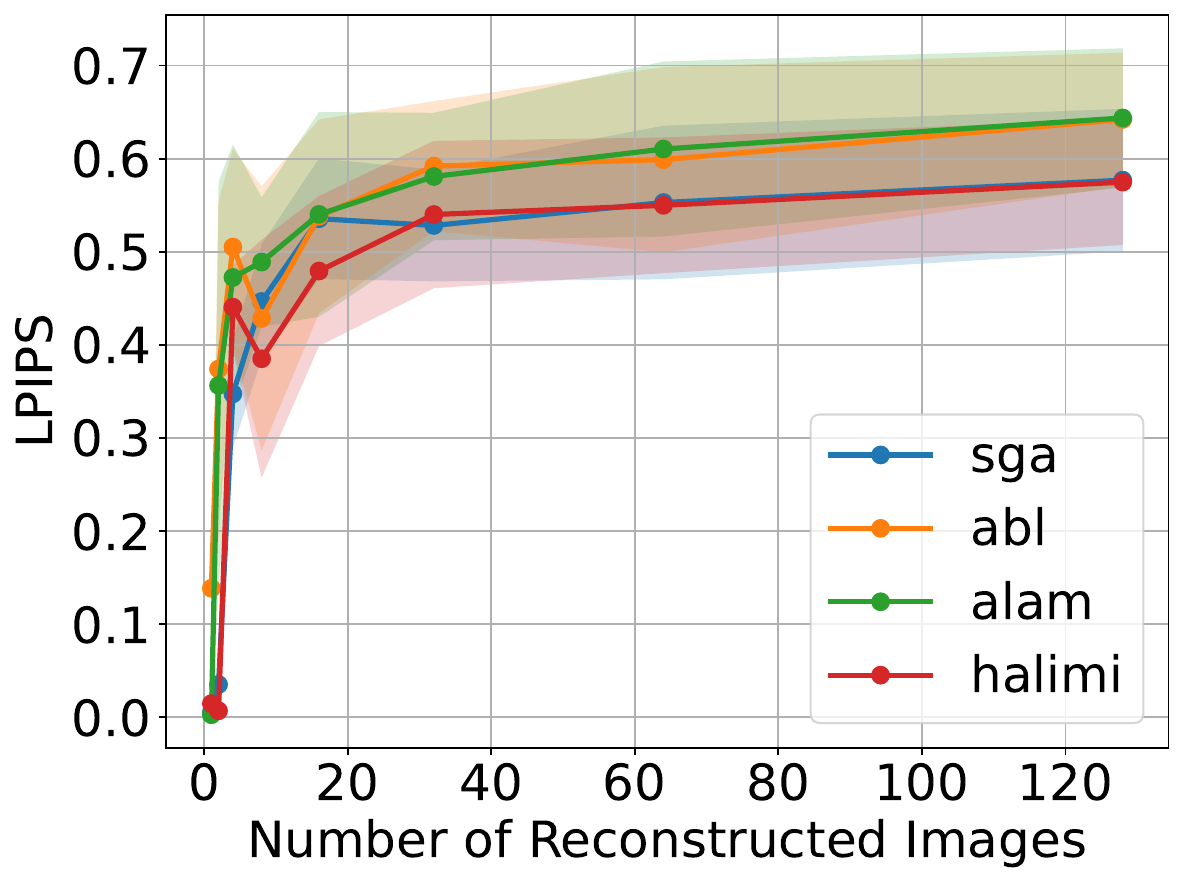}}\\
            
        \end{subfigure}
      
    \end{tabular}
    }
     \caption{Effect of batch size scaling on reconstruction quality for the MNIST dataset on four unlearning algorithms, measured using SSIM, PSNR, and LPIPS. Larger batch sizes tend to degrade reconstruction fidelity, as indicated by decreasing SSIM/PSNR and increasing LPIPS.}

     \label{fig:metric_figs_mnist}
\end{figure}

\subsection{Reconstruction Comparison between Algorithm-Specific and Algorithm-Agnostic}\label{appendix:algo_spec}
To evaluate the performance of \methodname{} under different unlearning configurations, we compare reconstructions obtained using Alam~\cite{manaar} in both agnostic and specific modes across four datasets: MNIST, FEMNIST, CIFAR-10, and CIFAR-100 (Figures~\ref{fig:agnostic_specific_cifar10}--\ref{fig:agnostic_specific_mnist}). While the agnostic mode assumes minimal knowledge about the internal details of the unlearning algorithm, the specific mode leverages full access to its optimization logic and hyperparameters. As expected, the specific setup generally leads to better reconstructions, benefiting from targeted penalties and aligned update dynamics.
\begin{figure}[H]
    \centering
    \resizebox{0.68\textwidth}{!}{%
    \begin{tabular}{c}
        \textbf{CIFAR-10} \\[0.2cm]
        \begin{subfigure}[t]{0.15\textwidth}
            \centering
            \textbf{\scriptsize Ground Truth}\vspace{0.09cm}\\
            \fbox{\includegraphics[width=\linewidth]{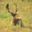}}
        \end{subfigure}
        \hspace{0.01cm}
        \begin{subfigure}[t]{0.15\textwidth}
            \centering
            \textbf{\scriptsize Agnostic}\vspace{0.05cm}\\
            \fbox{\includegraphics[width=\linewidth]{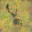}}\\
            {\tiny SSIM: 0.69, PSNR: 24.7, LPIPS: 0.16}
        \end{subfigure}
        \hspace{0.01cm}
        \begin{subfigure}[t]{0.15\textwidth}
            \centering
            \textbf{\scriptsize Specific}\vspace{0.05cm}\\
            \fbox{\includegraphics[width=\linewidth]{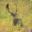}}\\
            {\tiny SSIM: 0.78, PSNR: 26.0, LPIPS: 0.13}
        \end{subfigure}
    \end{tabular}
    }
    
    \caption{\textit{\methodname}: Agnostic vs Specific reconstruction on CIFAR-10 using Alam~\cite{manaar} unlearning.}
    \label{fig:agnostic_specific_cifar10}
\end{figure}

\begin{figure}[H]
    \centering
    \resizebox{0.68\textwidth}{!}{%
    \begin{tabular}{c}
        \textbf{CIFAR-100} \\[0.2cm]
        \begin{subfigure}[t]{0.15\textwidth}
            \centering
            \textbf{\scriptsize Ground Truth}\vspace{0.09cm}\\
            \fbox{\includegraphics[width=\linewidth]{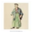}}
        \end{subfigure}
        \hspace{0.01cm}
        \begin{subfigure}[t]{0.15\textwidth}
            \centering
            \textbf{\scriptsize Agnostic}\vspace{0.05cm}\\
            \fbox{\includegraphics[width=\linewidth]{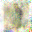}}\\
            {\tiny SSIM: 0.56, PSNR: 14.8, LPIPS: 0.44}
        \end{subfigure}
        \hspace{0.01cm}
        \begin{subfigure}[t]{0.15\textwidth}
            \centering
            \textbf{\scriptsize Specific}\vspace{0.05cm}\\
            \fbox{\includegraphics[width=\linewidth]{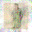}}\\
            {\tiny SSIM: 0.71, PSNR: 21.4, LPIPS: 0.29}
        \end{subfigure}
    \end{tabular}
    }
    
    \caption{\textit{\methodname}: Agnostic vs Specific reconstruction on CIFAR-100 using Alam~\cite{manaar} unlearning.}
    \label{fig:agnostic_specific_cifar100}
\end{figure}

\begin{figure}[H]
    \centering
    \resizebox{0.68\textwidth}{!}{%
    \begin{tabular}{c}
        \textbf{FEMNIST} \\[0.2cm]
        \begin{subfigure}[t]{0.15\textwidth}
            \centering
            \textbf{\scriptsize Ground Truth}\vspace{0.09cm}\\
            \fbox{\includegraphics[width=\linewidth]{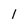}}
        \end{subfigure}
        \hspace{0.01cm}
        \begin{subfigure}[t]{0.15\textwidth}
            \centering
            \textbf{\scriptsize Agnostic}\vspace{0.05cm}\\
            \fbox{\includegraphics[width=\linewidth]{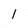}}\\
            {\tiny SSIM: 0.99, PSNR: 41.0, LPIPS: 0.00}
        \end{subfigure}
        \hspace{0.01cm}
        \begin{subfigure}[t]{0.15\textwidth}
            \centering
            \textbf{\scriptsize Specific}\vspace{0.05cm}\\
            \fbox{\includegraphics[width=\linewidth]{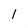}}\\
            {\tiny SSIM: 1.00, PSNR: 49.1, LPIPS: 0.00}
        \end{subfigure}
    \end{tabular}
    }
    
    \caption{\textit{\methodname}: Agnostic vs Specific reconstruction on FEMNIST using Alam~\cite{manaar} unlearning.}
     \label{fig:agnostic_specific_femnist}
\end{figure}

\begin{figure}[H]
    \centering
    \resizebox{0.68\textwidth}{!}{%
    \begin{tabular}{c}
        \textbf{MNIST} \\[0.2cm]
        \begin{subfigure}[t]{0.15\textwidth}
            \centering
            \textbf{\scriptsize Ground Truth}\vspace{0.09cm}\\
            \fbox{\includegraphics[width=\linewidth]{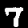}}
        \end{subfigure}
        \hspace{0.01cm}
        \begin{subfigure}[t]{0.15\textwidth}
            \centering
            \textbf{\scriptsize Agnostic}\vspace{0.05cm}\\
            \fbox{\includegraphics[width=\linewidth]{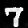}}\\
            {\tiny SSIM: 0.92, PSNR: 31.4, LPIPS: 0.01}
        \end{subfigure}
        \hspace{0.01cm}
        \begin{subfigure}[t]{0.15\textwidth}
            \centering
            \textbf{\scriptsize Specific}\vspace{0.05cm}\\
            \fbox{\includegraphics[width=\linewidth]{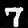}}\\
            {\tiny SSIM: 0.89, PSNR: 30.7, LPIPS: 0.01}
        \end{subfigure}
    \end{tabular}
    }
    
    \caption{\textit{\methodname}: Agnostic vs Specific reconstruction on MNIST using Alam~\cite{manaar} unlearning.}
    \label{fig:agnostic_specific_mnist}
\end{figure}

\section{\textit{\methodname} for Second-Order Unlearning Algorithms}\label{appendix:second_order}

We extend \methodname{} to the setting where the client employs second-order unlearning algorithms. While uncommon in federated learning due to their high computational cost, \citet{2ndord_fu} show that Newton-style updates can be applied to linearized MLPs. To reconstruct the unlearned data, \methodname{} generalizes classical gradient inversion attacks by simulating second-order updates on dummy data, as outlined in Algorithm~\ref{alg:second_order_reconstruction}. Specifically, the server approximates the client’s update by computing Hessian-vector products (HVPs) using dummy inputs and aligns them with the true HVPs by minimizing their cosine distance. To reduce the computational burden of exact Hessian inversion, we follow the approximation strategy from \citet{zhou2025hyperinf}. Reconstruction is optimized via SGD with total variation (TV) regularization. For simplicity, we assume access to unlearn labels during reconstruction. Theoretical justifications for the correctness of this approach and the failure modes of classical GIA under second-order updates are provided in Appendix~\ref{appendix:theory}.

\begin{algorithm}[H]
   \caption{\methodname{} for Second-Order Algorithms}
   \label{alg:second_order_reconstruction}
\begin{algorithmic}[1]
   \State \textbf{Client Input:} Local model $\theta_c$, datasets $(x_u, y_u), (x_r, y_r)$, learning rate $\eta$
   \State $\nabla \mathcal{L}_u \gets \nabla \mathcal{L}(x_u, y_u; \theta_s)$
   \State $H_r \gets H(x_r, y_r; \theta_s)$
   \State $\Delta \theta \gets H_r^{-1} \cdot \nabla \mathcal{L}_u$

   \Statex
   \hrulefill
   \Statex

   \State \textbf{Server Input:} Global model $\theta_s$, reconstruction step size $\eta_{\text{rec}}$, similarity loss $\mathcal{L}_{\text{sim}}$, TV weight $\lambda_{\text{TV}}$
   \State Initialize dummy variables $\tilde{x}_u, \tilde{y}_u, \tilde{x}_r, \tilde{y}_r$ randomly

   \For{$t = 1$ to $T$}
       \State $\nabla \tilde{\mathcal{L}}_u \gets \nabla \mathcal{L}(\tilde{x}_u, \tilde{y}_u; \theta_s)$
       \State $\tilde{H}_r \gets H(\tilde{x}_r, \tilde{y}_r; \theta_s)$
       \State $\tilde{\Delta \theta} \gets \tilde{H}_r^{-1} \cdot \nabla \tilde{\mathcal{L}}_u$

       \State $\ell \gets \mathcal{L}_{\text{sim}}(\Delta \theta, \tilde{\Delta \theta}) + \lambda_{\text{TV}} \cdot TV(\tilde{x}_u)$

       \State $\tilde{x}_u \gets \tilde{x}_u - \eta_{\text{rec}} \cdot \nabla_{\tilde{x}_u} \ell$
       \State $\tilde{x}_r \gets \tilde{x}_r - \eta_{\text{rec}} \cdot \nabla_{\tilde{x}_r} \ell$
   \EndFor

   \State \textbf{return} $(\tilde{x}_u, \tilde{y}_u, \tilde{x}_r, \tilde{y}_r)$
\end{algorithmic}
\end{algorithm}


\section{Additional Results on Defense Evaluation against \textit{\methodname}}\label{appendix:defense}
We evaluated the effectiveness of pruning and Gaussian noise defenses using the Alam~\cite{manaar} unlearning algorithm on CIFAR-10, with 90 training rounds followed by 10 unlearning rounds. As shown in Figures~\ref{fig:defense_comparison_tradeoff}, both defenses introduce a significant drop in model accuracy—on both retained and unlearned data—while only moderately increasing reconstruction difficulty. This illustrates a critical limitation: although these defenses reduce leakage, their impact on utility renders them unsuitable for practical federated unlearning deployments where accuracy must be preserved.

\begin{figure}[H]
    \centering
    \begin{minipage}[t]{0.48\textwidth}
        \centering
        \textbf{\small Pruning Effect on Reconstruction-Accuracy Trade-off}\vspace{0.1cm}\\
        \includegraphics[width=\linewidth]{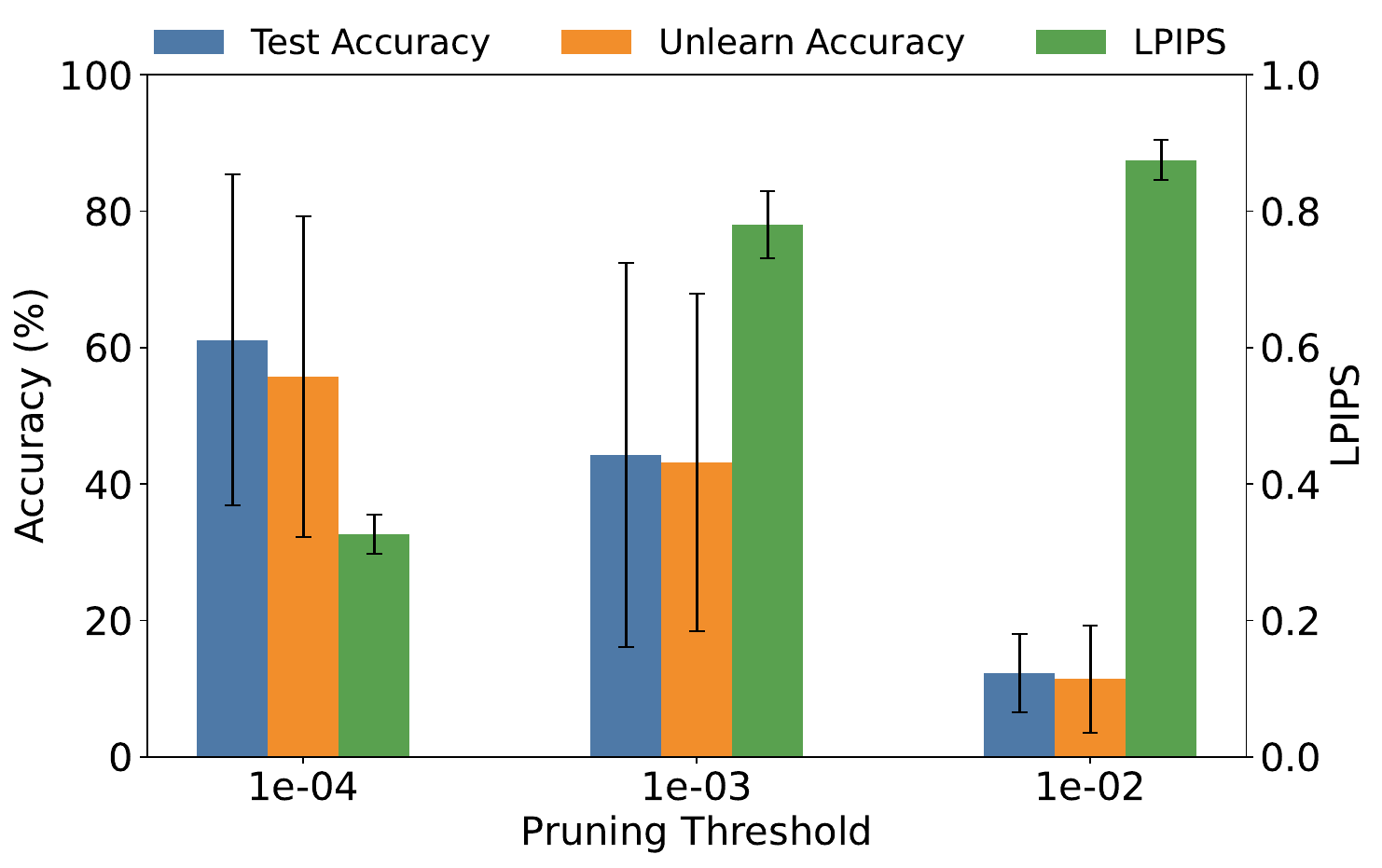}
    \end{minipage}
    \hspace{0.02\textwidth}
    \begin{minipage}[t]{0.48\textwidth}
        \centering
        \textbf{\small Gaussian Noise Effect on Reconstruction-Accuracy Trade-off}\vspace{0.1cm}\\
        \includegraphics[width=\linewidth]{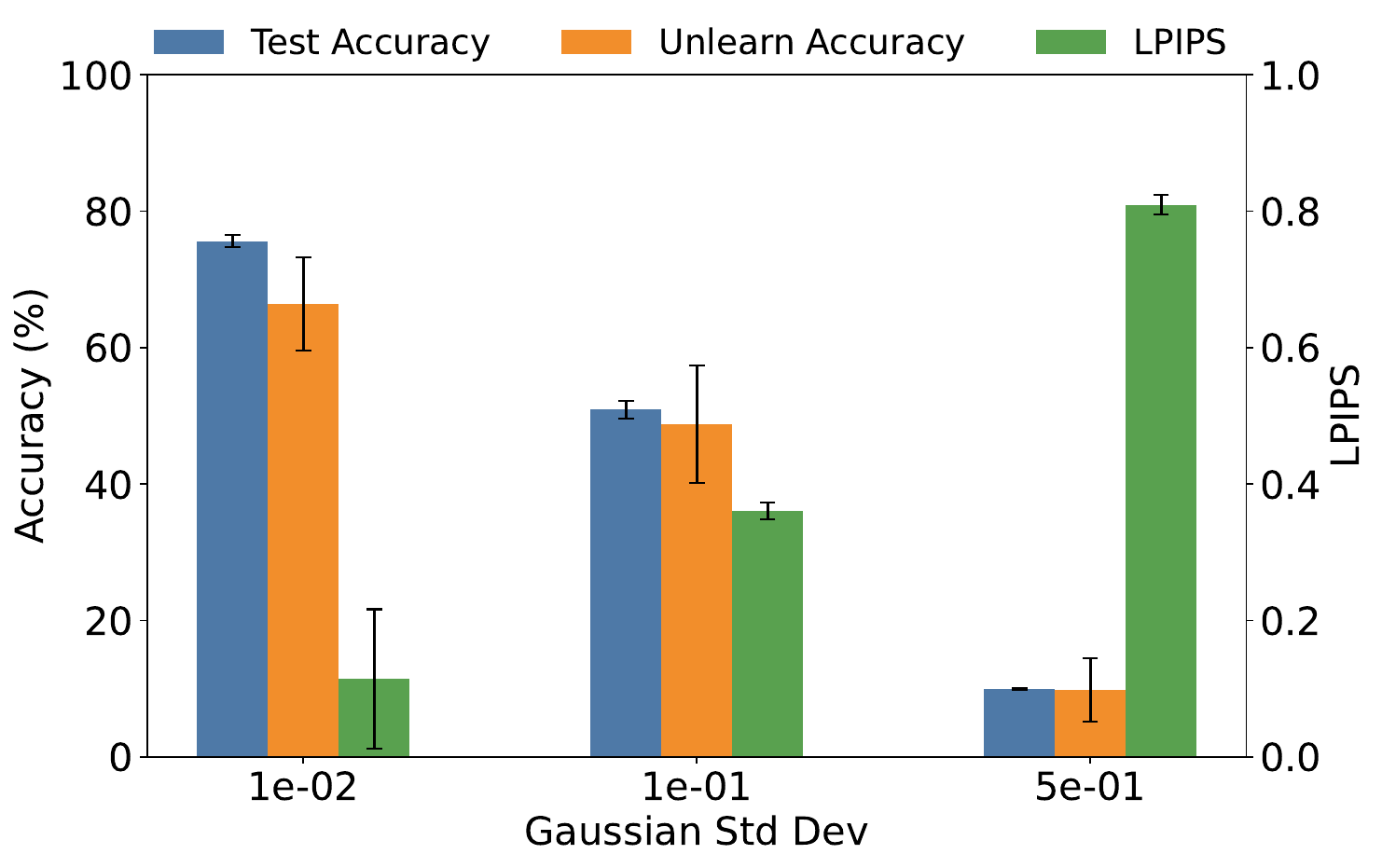}
    \end{minipage}
    \caption{\textit{\methodname} reconstruction-accuracy trade-off under pruning and Gaussian noise defenses.}
    \label{fig:defense_comparison_tradeoff}
\end{figure}

\section{Broader Impacts}

Recent works~\cite{DBLP:conf/sp/HuWDX24, DBLP:conf/nips/BertranTKM0W24} have shown that unlearning algorithms are vulnerable to data reconstruction attacks in Machine Learning as a Service (MLaaS) settings. However, this vulnerability remains unexplored in the context of federated unlearning—a setting where unlearning is critical due to data ownership and regulatory constraints. In this work, we demonstrate that federated unlearning is highly vulnerable when facing a malicious server. In realistic industrial scenarios, such as computer vision classification tasks, we show that the server can silently reconstruct the raw images that a client requests to unlearn—without interfering with the unlearning process. These images are often highly sensitive (e.g., medical X-rays) and typically involve a small number of samples, making accurate reconstruction more feasible. Our findings expose a serious privacy risk in current federated unlearning methods and call for a rethinking of their trust assumptions. We hope this work motivates future research on the privacy-utility trade-offs in federated unlearning, as well as the development of detection and mitigation strategies specifically tailored to this emerging threat model.

\end{document}